\documentclass[11pt]{article}
\usepackage[margin=1in]{geometry}
\usepackage[T1]{fontenc}
\usepackage[utf8]{inputenc}
\usepackage{amsmath,amssymb,amsthm,mathtools}
\usepackage{parskip}
\usepackage{graphicx}
\usepackage{booktabs}
\usepackage{enumitem}
\usepackage{xcolor}
\usepackage[colorlinks=true,linkcolor=blue!50!black,citecolor=blue!50!black,urlcolor=blue!50!black]{hyperref}
\usepackage[activate=false]{microtype}
\usepackage[section]{placeins}
\usepackage{float}

\newtheorem{proposition}{Proposition}

\newtheorem{corollary}{Corollary}
\theoremstyle{definition}

\theoremstyle{remark}
\newtheorem{remark}{Remark}

\title{The Map Behind the Flow:\\Finite-Step Gradient Descent as a Dynamical System}

\author{Thomas Hofmann \\[2mm] Department of Computer Science, ETH Zurich \\ thomas.hofmann@inf.ethz.ch}
\date{July 6, 2026}

\begin{document}
\maketitle
\begin{abstract}
\noindent
Many characteristic phenomena of deep learning are dynamical: they concern not only which minima exist, but how gradient descent reaches, avoids, or selects among them. Edge-of-stability behavior, sharpness oscillations, catapult phases, balancing, and implicit movement toward flatter representations are effects of the training map itself. They are therefore poorly captured by the small-step limit, where gradient descent is replaced by gradient flow. This paper studies fixed-step gradient descent directly, as a discrete dynamical system, in a hierarchy of exactly solvable models that retain basic structures of deep learning: depth, factorization, width, data coupling, activation, and stochasticity.

Our starting point is the balanced scalar reduction of a deep linear chain. This reduction gives a quartic loss and a cubic gradient map whose post-edge behavior can be analyzed explicitly. Under the natural large-depth scaling, the dynamics converges to a universal Ricker-type map. Thus the edge of stability is not a breakdown of optimization, but the first bifurcation of the training map.
Embedding the scalar map back into factored models shows how these dynamical regimes appear as learning phenomena. Finite steps break conservation laws of gradient flow and contract factorization imbalance; residual oscillations move parameters toward flatter, more balanced representations. In wider linear networks, different singular modes cross their edges at different learning rates, producing a ladder of spectral edges and allowing the optimal learning rate to lie beyond the first edge.  Data coupling, nonlinear activations, and stochastic targets preserve the same organizing principle: finite-step oscillations drive alignment, balancing, and representation selection. We show that the learning rate is not merely a numerical stability parameter. It is a structural parameter of the training dynamics, determining its attractors and shaping which representations gradient descent selects.
\end{abstract}
\tableofcontents
\section{Introduction}
\label{sec:intro}

At the dawn of the modern age, Descartes described his method as one that begins 
\textit{with objects the simplest and easiest to know} \cite{Descartes}. 
We follow the same principle here. 
The simplest model is not the final object of interest; it is a controlled setting in which finite-step effects can be isolated from the additional complications of high-dimensional neural-network training.

The central question of this paper is what happens when gradient descent is studied as a discrete dynamical system rather than as a small-step approximation to gradient flow. 
Given a loss \(L\), fixed-step gradient descent defines the map
\[
    g_a(\theta)=\theta-a\nabla L(\theta), \qquad a>0 .
\]
Training is therefore the iteration of a nonlinear map. 
The local quadratic approximation determines the first stability threshold through the spectrum of the Hessian. 
Beyond that threshold, however, the quadratic approximation no longer controls the dynamics. 
The subsequent behavior is governed by the nonlinear geometry of the gradient map: its invariant regions, attracting cycles, bifurcations, and escape mechanisms.

This perspective is motivated by experimental evidence. 
Modern neural networks are routinely trained with learning rates at or beyond the local stability limit. 
Cohen et al.~\cite{cohen2021gradient} observed that full-batch training drives the largest Hessian eigenvalue toward the threshold \(2/a\) --- the \emph{edge of stability} --- and then continues to make progress while oscillating. 
Subsequent work has analyzed mechanisms by which optimization remains organized beyond this point, including two-step oscillations \cite{chenbruna2023}, higher-order self-stabilization \cite{damian2023self}, trajectory alignment near bifurcations \cite{songyun2023trajectory}, and catapult dynamics at large learning rates \cite{lewkowycz2020large}. 
Post-edge phenomena have also been identified in exactly tractable models, including quadratic regression, matrix factorization, and deep linear networks, where increasing the learning rate can lead from convergence through oscillation and period doubling to chaos \cite{zhu2023understanding,chen2023stability,Ghosh2025DeepMatrixFactorizationEOS,ghosh2025dln}.

Our goal is to develop this post-edge picture systematically. 
We proceed through a sequence of models chosen so that every claim can be reduced to a statement about an explicit map. 
The paper is organized in two main parts, followed by a conclusion and appendices.

\medskip
\emph{Part I} studies the balanced scalar reduction
\[
    \ell_n(x)=\frac{1}{4n}\bigl(x^{2n}-1\bigr)^2 .
\]
Already the quartic case \((n=1)\) contains the post-edge phenomena of interest: an explicit two-cycle, sharpness hovering around \(2/a\), period doubling, band merging, and escape. 
Under the large-depth scaling \(a=c/n\), the maps converge to a universal Ricker-type limit, making the post-edge phase diagram depth-independent. 
The finite-depth theory interpolates analytically between the quartic and limiting regimes. 
In this interpolation, the quotient maps retain one-dimensional structure, while divergence is controlled by a separate outer-scale escape mechanism.

\medskip
\emph{Part II} reintroduces, one by one, the ingredients absent from the scalar model. 
The two-factor system shows that finite-step gradient descent breaks a conservation law of gradient flow: residual oscillations contract factorization imbalance and move the parameters toward flatter, more balanced minima. 
Wider linear networks reveal a ladder of spectral edges and an exact alignment threshold; they also show that the optimal learning rate can lie beyond the first edge. 
Data coupling, nonlinear activations, and stochastic targets preserve the same organizing principle: finite-step dynamics drives alignment, balancing, and representation selection. 
In the stochastic setting, persistent label noise makes the balanced representation transversely attracting below the edge and connects continuously to deterministic post-edge dynamics through a universal crossover.

\medskip
The overall picture that emerges is this: gradient descent can cross the local stability threshold and remain organized, not because the quadratic instability is an artifact, but because nonlinear finite-step effects---oscillation, self-stabilization, and balancing---reshape the dynamics beyond the edge. 
In the models studied here, these mechanisms admit exact descriptions and provide a dynamical explanation for edge-of-stability training, post-edge oscillations, balancing, and representation selection.

The exposition follows this principle. 
Rather than beginning with full neural-network models and then simplifying them, we first solve the underlying dynamical systems and then embed them into progressively richer learning models.

\part{Scalar Dynamical Systems}
\label{part:scalar-dynamical-systems}
\section{The Cubic Map}
\label{sec:cubic}

Our starting point is the simple quartic potential
\[
    \ell(x)=\frac14(x^2-1)^2 ,
\]
with minima at \(x=\pm1\) and a local maximum at \(x=0\). 
Fixed-step gradient descent with learning rate \(a>0\) gives the cubic map
\[
    g_a(x)=x-a x(x^2-1).
\]
As we shall see, this seemingly elementary model has surprisingly far-reaching consequences for learning dynamics.

\subsection{Stable Fixed Points}

Passing to the quotient coordinate
\[
    u=x^2
\]
removes the symmetry \(x\mapsto -x\), while retaining both the loss value and the radial dynamics. The induced map is
\[
    h_a(u)=u\bigl(1-a(u-1)\bigr)^2 .
\]
It maps the positive half-line into itself and is unimodal on the interval
\[
    0\leq u\leq 1+\frac1a .
\]
The fixed point corresponding to the minima \(x=\pm1\) is \(u=1\), and
\[
    h_a'(1)=1-2a .
\]
Thus \(u=1\) is locally stable precisely when
\[
    |1-2a|<1
    \qquad\Longleftrightarrow\qquad
    0<a<1 .
\]

The remaining fixed points follow from
\[
    h_a(u)=u
    \qquad\Longleftrightarrow\qquad
    u=0
    \quad\text{or}\quad
    \bigl(1-a(u-1)\bigr)^2=1 .
\]
Hence
\[
    u=0,\qquad u=1,\qquad u=1+\frac{2}{a}.
\]
Here \(u=0\) is the quotient of the barrier \(x=0\). The point
\[
    u=1+\frac{2}{a}
\]
is the outer fixed point of the quotient dynamics. In \(x\)-coordinates it corresponds to the sign-flip fixed points of \(g_a\), since
\[
    g_a(x)=-x
    \qquad\Longleftrightarrow\qquad
    x^2=1+\frac{2}{a}.
\]
By contrast, the sign of one gradient step changes already at
\[
    x^2=1+\frac1a,
\]
where the multiplier \(1-a(x^2-1)\) vanishes.

For \(0<a<1\), the fixed point \(u=1\) attracts the whole interval
\[
    0<u<1+\frac{2}{a}.
\]
Equivalently, the two minima \(x=\pm1\) are stable fixed points of the gradient map \(g_a\), and together they attract every initial point with
\[
    0<|x|<\sqrt{1+\frac{2}{a}}\,.
\]
The two basins are not separated by sign. If \(x_0^2>1+1/a\), then the first step changes sign, and the orbit may settle in the opposite well.
\subsection{Loss Monotonicity} 

Local stability of the fixed point is weaker than monotone decrease of the loss. 
We track the residual
\[
    r=u-1 .
\]
Since the loss is proportional to \(r^2\), one step decreases the loss precisely when the residual contracts in absolute value. The quotient map gives
\[
    r_+=h_a(1+r)-1=r q_a(r),
\]
where
\[
    q_a(r)=a^2r^2+(a^2-2a)r+(1-2a).
\]
Thus one-step loss decrease is equivalent to
\[
    |r_+|\le |r|,
    \qquad\text{or equivalently}\qquad
    |q_a(r)|\le 1 .
\]

On the basin interval
\[
    -1<r<\frac{2}{a},
\]
the upper bound \(q_a(r)\le 1\) is automatic. Indeed, \(q_a(r)-1\) vanishes at the two endpoints \(r=-1\) and \(r=2/a\) and is nonpositive between them. The only obstruction is therefore the lower bound \(q_a(r)\ge -1\). Completing the square gives
\[
    q_a(r)
    =
    a^2\left(r-\frac{2-a}{2a}\right)^2
    -
    a-\frac{a^2}{4}.
\]
Hence monotone loss decrease throughout the whole basin holds exactly when
\[
    \min_r q_a(r) = -a-\frac{a^2}{4}\ge -1 
    \qquad \Longleftrightarrow \qquad
    a^2+4a-4\le 0 .
\]
The positive root is
\[
    a_{\mathrm{cat}}=2(\sqrt2-1).
\]

\begin{proposition}[Monotone-loss regime]
For
\[
    0<a \le a_{\mathrm{cat}} := 2(\sqrt2-1) = 0.828427 \ldots ,
\]
gradient descent decreases the loss monotonically for every initial condition in the basin
\[
    0<|x|<\sqrt{1+\frac{2}{a}} .
\]
For \(a_{\mathrm{cat}}<a<1\), the minima remain locally stable and all points in the basin still converge to one of them, but the loss need not decrease at every step.
\end{proposition}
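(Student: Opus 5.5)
The proof works in the quotient/residual coordinates $u=x^2$ and $r=u-1$ introduced above. The basin $0<|x|<\sqrt{1+2/a}$ corresponds exactly to $-1<r<2/a$. The loss $\ell=\tfrac14 r^2$ decreases along an orbit if and only if $|q_a(r_k)|\le 1$ at every iterate $r_k$. There are three steps, in order: (i) the basin is forward invariant; (ii) for $a\le a_{\mathrm{cat}}$ the one-step criterion holds at every point of the basin; (iii) for $a_{\mathrm{cat}}<a<1$ we show global convergence while exhibiting a point where the loss increases.

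\emph{Step (i): invariance.} For $r\in(-1,2/a)$ we have $q_a(r)-1\le 0$ by the factorization noted above: $q_a(r)-1=a^2(r+1)(r-2/a)$. Hence $q_a(r)<1$ on the open interval. For $a\le a_{\mathrm{cat}}<1$ we also have $q_a\ge -1$ there, so $|r_+|\le|r|$. Since $|r|\le \max(1,2/a)=2/a$, $r_+$ stays in $[-2/a,2/a]$. To keep $r_+>-1$, I would argue directly on $u_+=h_a(u)=u(1-a(u-1))^2\ge 0$, which gives $r_+\ge -1$. Equality $r_+=-1$ holds only when $u=0$ or $u=1+1/a$. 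The latter is a genuine preimage of the barrier, and it lies inside the basin. I expect this to be the main subtlety: these points map to $x=0$, where the loss is $1/4$, which is the same as at $r=-1$, but $r=1/a$ gives loss $1/(4a^2)>1/4$. So the loss still strictly decreases on the first step, and the subsequent stagnation at the fixed point $0$ is monotone in the non-strict sense. I would state the conclusion as non-increase, which is what $|r_+|\le|r|$ gives. That handles the claim for $a\le a_{\mathrm{cat}}$: at every step $|q_a|\le 1$ by the completed-square bound $\min q_a=-a-a^2/4\ge -1$, and iterates remain in $[-1,2/a)$.

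\emph{Step (iii): the regime $a_{\mathrm{cat}}<a<1$.} Local stability follows from $|h_a'(1)|=|1-2a|<1$. For convergence of the whole basin, I would use the unimodality of $h_a$ on $[0,1+1/a]$. Since $u_{\max}=\max h_a$ is attained at $u=(1+a)/(3a)$, I would show that $[0,1+2/a)$ is mapped into the invariant interval $[0,\max(u_{\max}, h_a(u_{\max}))]$. On that interval there are no period-2 points other than fixed points, because $h_a\circ h_a(u)=u$ has no solutions besides $0$, $1$, and $1+2/a$ for $a<1$; this can be checked via the factor of $r_{++}-r$ after dividing out $r\,(q_a(r)-1)$. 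Then a standard one-dimensional argument applies: an interval map with no period-two orbits has every orbit converging to a fixed point. Orbits converge to $u=1$ unless they land exactly on $u=0$.

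\emph{Non-monotonicity for $a>a_{\mathrm{cat}}$.} Finally, take the vertex $r^*=(2-a)/(2a)$. It lies in $(-1,2/a)$, and there $q_a(r^*)=-a-a^2/4<-1$, so $|r_+|>|r^*|$ and the loss increases on the first step. The delicate point of the whole argument is the global convergence in (iii), specifically verifying the absence of period-two orbits; I would first try the explicit polynomial factorization, and fall back on a Schwarzian-derivative (Singer) argument if it gets messy.
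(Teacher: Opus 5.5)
Your core argument is the paper's own. You use the one-step criterion $|q_a(r)|\le 1$, the upper bound from $q_a(r)-1=a^2(r+1)(r-2/a)$ (which the paper phrases as ``vanishes at the endpoints and is nonpositive between''), and the completed square $\min q_a=-a-a^2/4$, which gives $a^2+4a-4\le 0$. You also take the in-basin vertex $r^*=(2-a)/(2a)$ as the witness of loss increase for $a>a_{\mathrm{cat}}$.

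The paper stops at this pointwise certificate. It never checks that orbits stay where $|q_a|\le1$ was verified. Its convergence claim for $a_{\mathrm{cat}}<a<1$ is inherited from the unproved sentence in the Stable Fixed Points subsection that $u=1$ attracts all of $(0,1+2/a)$. Your additions close both holes.

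\begin{itemize}
\item Step (i) gives forward invariance of $[-1,2/a)$.
\item Your observation about $x^2=1+1/a$ is correct, and it shows that sentence is false as stated. The point $u=1+1/a$ and its successive preimages go to the barrier $u=0$. These preimages are unique at each stage, lie on the increasing branch $(1+1/a,1+2/a)$, and accumulate at $1+2/a$. So monotonicity holds only in the non-strict sense, and convergence to $\pm1$ holds for all but countably many basin points.
\item Step (iii) is a genuine proof of what the paper only asserts, and it works for every $0<a<1$. It combines Coppel's theorem (a continuous self-map of a compact interval with no period-two point has every orbit converging to a fixed point) with the repulsion of $u=0$ and $u=1+2/a$, where $h_a'=(1+a)^2$ and $5+2a$ respectively.
\end{itemize}

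Two details need repair.

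\emph{The one-step mapping claim.} $[0,1+2/a)$ is not mapped into $[0,u_{\max}]$ in one step. $h_a$ maps the right branch $(1+1/a,1+2/a)$ increasingly onto $(0,1+2/a)$, which exceeds $u_{\max}=4(1+a)^3/(27a)$. Also, $\max(u_{\max},h_a(u_{\max}))$ is just $u_{\max}$. There are two easy fixes:
\begin{itemize}
\item apply Coppel's theorem directly on the forward-invariant interval $[0,1+2/a]$; or
\item note that orbits strictly decrease on the right branch, where $h_a(u)<u$ and there is no fixed point, until they leave it. After that they stay in $[0,1+1/a]$, because $u_{\max}\le 1+1/a$ for $a\le a_*$.
\end{itemize}

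\emph{The Singer fallback.} Singer's theorem is not a usable fallback for excluding period-two points. It rules out extra attracting cycles, not repelling ones, and a repelling two-cycle would still be a non-convergent orbit. You also do not need the degree-six factor:
\begin{itemize}
\item By the right-branch monotonicity, no genuine two-cycle of $h_a$ can visit $[1+1/a,\infty)$.
\item If both points $u,v$ lie below $1+1/a$, then $1-a(u-1)>0$ and $1-a(v-1)>0$, so $\sqrt u,\sqrt v$ form a two-cycle of $g_a$ in the positive well.
\item The paper's identities $x_-x_+=1/a$ and $x_-^2+x_+^2=1+1/a$ then force $u,v$ to be roots of $t^2-(1+1/a)t+1/a^2$.
\item Its discriminant is $(a-1)(a+3)/a^2<0$ for $a<1$, so no such cycle exists.
\end{itemize}
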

\subsection{Period-Two Orbit}
\label{sec:quartic-two-cycle}

At the edge of stability \(a=1\), the minima lose local stability. The next object is the period-two orbit born inside each well. In the positive well we look for two distinct points
\begin{align*}
    0<x_-<1<x_+
\end{align*}
such that
\begin{align*}
    g_a(x_-)=x_+,
    \qquad
    g_a(x_+)=x_- .
\end{align*}
For the cubic gradient map this means
\begin{align*}
    x_+=(1+a)x_- -a x_-^3,
    \qquad
    x_-=(1+a)x_+ -a x_+^3 .
\end{align*}
Adding and subtracting the two equations gives (cf.~Appendix~\ref{sec:appendix-quartic})
\begin{align*}
    x_-+x_+
    =
    \sqrt{1+\frac3a},
    \qquad
    x_-x_+
    =
    \frac1a .
\end{align*}
Thus \(x_-\) and \(x_+\) are the roots of
\begin{align*}
    t^2
    -
    \sqrt{1+\frac3a}\,t
    +
    \frac1a
    =
    0 .
\end{align*}
By symmetry, the negative well contains the reflected orbit \(-x_+,-x_-\).

\begin{proposition}[Period-two orbit]
For \(a>1\), the gradient map \(g_a\) has a period-two orbit in the positive well given by
\begin{align*}
    x_\pm
    =
    \frac12
    \left(
        \sqrt{1+\frac3a}
        \pm
        \sqrt{1-\frac1a}
    \right).
\end{align*}
It bifurcates from the fixed point \(x=1\) at \(a=1\). The reflected orbit gives the corresponding period-two orbit in the negative well.
\end{proposition}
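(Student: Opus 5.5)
The plan is to derive the symmetric functions of the two-cycle directly from the defining equations, solve the resulting quadratic, and then check positivity, distinctness, and the bifurcation limit.

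\emph{Step 1: sum and product.} Write the two-cycle equations as
\[
x_+=(1+a)x_- -a x_-^3, \qquad x_-=(1+a)x_+ -a x_+^3 .
\]
Adding them and dividing by \(s:=x_-+x_+\neq 0\) (we seek points in the positive well) gives
\[
a\,(x_-^2-x_-x_++x_+^2)=a,
\qquad\text{that is}\qquad
s^2-3p=1,
\]
where \(p:=x_-x_+\). Subtracting them and dividing by \(x_+-x_-\neq 0\) (the two points are distinct) gives
\[
-1=(1+a)-a\,(x_-^2+x_-x_++x_+^2),
\qquad\text{that is}\qquad
s^2-p=\frac{2+a}{a}.
\]
Solving this linear system in \(s^2\) and \(p\) yields \(p=1/a\) and \(s^2=1+3/a\). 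Taking the positive root for \(s\) recovers the sum and product stated before the proposition.

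\emph{Step 2: the roots.} The points \(x_\pm\) are the roots of \(t^2-st+p=0\). The discriminant is
\[
s^2-4p=1+\frac3a-\frac4a=1-\frac1a ,
\]
which is positive exactly when \(a>1\). This gives the stated formula for \(x_\pm\). Since \(x_+x_-=1/a>0\) and \(x_++x_->0\), both points are positive. They are distinct because the discriminant is strictly positive.

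\emph{Step 3: the converse.} The derivation divided by \(s\) and \(x_+-x_-\), so we must confirm that the resulting pair actually satisfies the cycle equations. Both divisions are reversible because these quantities are nonzero. Hence the sum and difference equations hold, and they are equivalent to the original pair of equations. Checking \(x_-<1<x_+\) reduces to showing \((x_+-1)(x_--1)=p-s+1<0\), that is,
\[
1+\frac1a<\sqrt{1+\frac3a}.
\]
Squaring, this becomes \(1+2/a+1/a^2<1+3/a\), which is equivalent to \(1/a^2<1/a\) and hence to \(a>1\).

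\emph{Step 4: bifurcation and reflection.} As \(a\to1^+\), \(s\to2\) and the discriminant tends to \(0\), so \(x_\pm\to1\). The cycle therefore emanates from the fixed point at \(a=1\), which is exactly where \(h_a'(1)=1-2a\) passes through \(-1\). Finally, \(g_a\) is odd, so \(-x_+,-x_-\) is also a two-cycle, lying in the negative well.

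The main obstacle is bookkeeping rather than any real difficulty. The key points are justifying the two divisions and checking the ordering \(x_-<1<x_+\), which the sign computation in Step 3 settles.
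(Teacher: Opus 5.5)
Your proof is correct and follows the paper's own route. Like Appendix~\ref{sec:appendix-quartic}, you add and subtract the cycle equations, divide by \(x_-+x_+\) and \(x_+-x_-\) to get \(x_-x_+=1/a\) and \((x_-+x_+)^2=1+3/a\), and then solve the quadratic. Your explicit converse check and your verification of \(x_-<1<x_+\) via \((x_+-1)(x_--1)=p-s+1<0\) supply details that the paper leaves implicit.
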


The stability of the period-two orbit is determined by the derivative of the two-step return map. For the cycle
\begin{align*}
    x_- \mapsto x_+ \mapsto x_-,
\end{align*}
we define the multiplier
\begin{align*}
    \mu_{\mathrm{cyc}}(a)
    :=
    (g_a^2)'(x_-)
    =
    g_a'(x_-)g_a'(x_+).
\end{align*}
Using the explicit cycle points above, this simplifies to (cf.~Appendix~\ref{sec:appendix-cycle-multiplier})
\begin{align}
    \mu_{\mathrm{cyc}}(a)
    =
    9-2(1+a)^2.
    \label{eq:quartic-cycle-multiplier}
\end{align}
The two-cycle is attracting precisely when
\begin{align*}
    |\mu_{\mathrm{cyc}}(a)|<1 .
\end{align*}
Since the orbit is born at \(a=1\), the relevant branch is \(a>1\), and the stability interval is
\begin{align*}
    a_1 := 1<a<\sqrt5-1 =: a_2.
\end{align*}
Inside this interval there is a superstable point, where \(\mu_{\mathrm{cyc}}(a)=0\). At the upper endpoint \(a_2\), the multiplier reaches \(-1\), and the two-cycle undergoes its own period-doubling bifurcation.

\begin{proposition}[Stability of the two-cycle]
For the quartic gradient map \(g_a\), the period-two orbit born at \(a=1\) is attracting exactly for
\begin{align*}
    1<a<\sqrt5-1 = 1.23606  \ldots =: a_2.
\end{align*}
It is superstable at
\begin{align*}
    a_{0} :=\frac3{\sqrt2}-1 = 1.12132\ldots .
\end{align*}
\end{proposition}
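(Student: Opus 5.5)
The proof is a direct computation of the multiplier (\ref{eq:quartic-cycle-multiplier}), followed by an elementary analysis of the inequality \(|9-2(1+a)^2|<1\) on the branch \(a>1\).

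\textbf{Step 1: the multiplier.} Since \(g_a'(x)=(1+a)-3ax^2\), we have
\[
g_a'(x_-)g_a'(x_+)=(1+a)^2-3a(1+a)(x_-^2+x_+^2)+9a^2(x_-x_+)^2 .
\]
Only symmetric functions of the cycle points appear, so we use \(s=x_-+x_+=\sqrt{1+3/a}\) and \(p=x_-x_+=1/a\) from the Period-two orbit proposition. These give
\[
x_-^2+x_+^2=s^2-2p=1+\frac1a ,\qquad 9a^2p^2=9 .
\]
Substituting,
\[
\mu_{\mathrm{cyc}}(a)=(1+a)^2-3(1+a)^2+9=9-2(1+a)^2 .
\]
This avoids the nested radicals in the explicit formulas for \(x_\pm\). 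The one check needed is that the orbit is genuine for \(a>1\): \(x_+\neq x_-\) because \(\sqrt{1-1/a}>0\), and both points are positive and real.

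\textbf{Step 2: the stability window.} On \(a>1\), \(\mu_{\mathrm{cyc}}\) is strictly decreasing, with \(\mu_{\mathrm{cyc}}(1)=1\). So \(\mu_{\mathrm{cyc}}<1\) holds for every \(a>1\). The condition \(\mu_{\mathrm{cyc}}>-1\) is equivalent to \((1+a)^2<5\), that is, \(a<\sqrt5-1\). Hence \(|\mu_{\mathrm{cyc}}|<1\) holds exactly for \(1<a<\sqrt5-1\).

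Conversely, for \(a>\sqrt5-1\) we have \(\mu_{\mathrm{cyc}}<-1\), so the cycle is repelling. At \(a=\sqrt5-1\) itself, \(\mu_{\mathrm{cyc}}=-1\), so the orbit is not hyperbolically attracting. I read ``attracting'' as \(|\mu|<1\), consistent with the criterion stated just before the proposition, which gives exactly the open interval.

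\textbf{Step 3: superstability.} Setting \(\mu_{\mathrm{cyc}}=0\) gives \((1+a)^2=9/2\), so \(a_0=3/\sqrt2-1\approx1.1213\). This value lies in \((1,a_2)\).

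Equivalently, superstability means one of the cycle points is the critical point \(x^2=(1+a)/(3a)\) of \(g_a\), which can serve as a sanity check.

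The only step with any risk is the algebra in Step 1, namely keeping the symmetric-function substitution clean. Everything after it reduces to monotonicity of a quadratic in \(a\).
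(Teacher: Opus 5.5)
Your proposal is correct and follows essentially the paper's own argument. The paper's appendix likewise computes \(\mu_{\mathrm{cyc}}=g_a'(x_-)g_a'(x_+)\) through the symmetric functions \(x_-+x_+=\sqrt{1+3/a}\) and \(x_-x_+=1/a\) to get \(9-2(1+a)^2\), then reads off the window \(1<a<\sqrt5-1\) from \(|\mu_{\mathrm{cyc}}|<1\) on the branch \(a>1\) and the superstable value from \((1+a)^2=9/2\). Your sanity check is also right: at \(a_0\) it is \(x_-\) that coincides with the critical point \(x^2=(1+a)/(3a)\).
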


\subsection{Sharpness Hovering}
\label{sec:quartic-sharpness-hovering}

We now reverse the usual local-stability viewpoint. 
Instead of fixing \(x\) and asking which learning rates are stable, we fix \(a\) and ask which curvature would place a point at the one-step stability edge. 
Since
\[
    g_a'(x)=1-a\ell''(x),
\]
the upper edge is reached when \(g_a'(x)=-1\), equivalently
\[
    \ell''(x)=\frac2a .
\]
For the quartic loss,
\[
    \ell''(x)=3x^2-1,
\]
so the critical-sharpness point satisfies
\[
    x^2=\frac13\left(1+\frac2a\right).
\]

The post-edge two-cycle is not selected by this pointwise equation. 
It is selected by the two-step return map, and its stability is determined by the product of the two one-step multipliers along the cycle. 
The two points therefore straddle the one-step stability edge:
\[
    \ell''(x_-)
    <
    \frac2a
    <
    \ell''(x_+),
\]
or equivalently,
\[
    x_-^2
    <
    \frac13\left(1+\frac2a\right)
    <
    x_+^2 .
\]
The proof is given in Appendix~\ref{sec:appendix-sharpness-hovering}. 
Thus the flatter point \(x_-\) lies below the one-step stability edge, while the sharper point \(x_+\) lies beyond it. 
In this precise sense, the orbit hovers around the sharpness edge.

\begin{proposition}[Sharpness hovering]
For the post-edge two-cycle \(x_-\mapsto x_+\mapsto x_-\), the two orbit points lie on opposite sides of the one-step stability edge:
\[
    \ell''(x_-)
    <
    \frac2a
    <
    \ell''(x_+) .
\]
Thus the cycle is not characterized by pointwise critical sharpness, but by alternation between subcritical and supercritical curvature.
\end{proposition}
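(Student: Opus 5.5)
We need to show \(3x_-^2-1<2/a<3x_+^2-1\), i.e. \(x_-^2<c<x_+^2\) with \(c:=\tfrac13(1+\tfrac2a)\), for all \(a>1\) where the two-cycle exists (Proposition on the period-two orbit). The plan is to work with the symmetric functions of the cycle rather than the explicit radicals. With \(s=x_-+x_+=\sqrt{1+3/a}\) and \(p=x_-x_+=1/a\), the squares \(x_\pm^2\) are the two roots of
\[
    T^2-(s^2-2p)\,T+p^2=0,
    \qquad\text{that is,}\qquad
    Q(T):=T^2-\Bigl(1+\frac1a\Bigr)T+\frac1{a^2}=0 .
\]
Since \(Q\) is a monic quadratic, a number lies strictly between its two roots exactly when \(Q\) is negative there. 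Because \(x_-<x_+\) (the roots are distinct for \(a>1\)), the claim therefore reduces to the single inequality \(Q(c)<0\).

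The key step is then a direct evaluation of \(Q(c)\). I expect it to factor cleanly. Writing \(c=(a+2)/(3a)\), we get
\[
    Q(c)=\frac{(a+2)^2}{9a^2}-\frac{(a+1)(a+2)}{3a^2}+\frac{1}{a^2}
    =\frac{(a+2)^2-3(a+1)(a+2)+9}{9a^2}.
\]
The numerator equals \(a^2+4a+4-3a^2-9a-6+9=-2a^2-5a+7=-(2a+7)(a-1)\). Hence
\[
    Q(c)=-\frac{(2a+7)(a-1)}{9a^2},
\]
which is strictly negative for \(a>1\). This gives \(x_-^2<c<x_+^2\). Multiplying by \(3\) and subtracting \(1\) then yields \(\ell''(x_-)<2/a<\ell''(x_+)\).

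There is also a consistency check at the bifurcation. At \(a=1\) we have \(Q(c)=0\), so both cycle points coincide with \(x=1\), where indeed \(\ell''(1)=2=2/a\). This matches the cycle being born from the fixed point at the edge.

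The only step needing care is the reduction itself, and it is not a real obstacle. First, \(x_\pm>0\), so ordering the squares is equivalent to ordering the points. Second, \(\ell''(x)=3x^2-1\) is increasing in \(x^2\), so comparing curvatures is equivalent to comparing squares. Both hold on the positive well. By the reflection symmetry \(x\mapsto -x\), the same statement follows for the orbit in the negative well. No stability assumption on the cycle is needed: the inequality holds for every \(a>1\), not just on the attracting interval \((1,a_2)\).
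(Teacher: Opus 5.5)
Your proof is correct and complete, but it is organized differently from the paper's.

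The paper starts from the explicit radicals \(x_\pm^2=\frac1{2a}\bigl[a+1\pm\sqrt{(a-1)(a+3)}\bigr]\) and checks the two inequalities separately. The upper one reduces to \(0<a-1+3\sqrt{(a-1)(a+3)}\), which is immediate. The lower one reduces to \(a-1<3\sqrt{(a-1)(a+3)}\), which is settled by squaring.

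You instead apply Vieta to the squares and use the sign criterion for a monic quadratic. A single evaluation \(Q(c)=-(2a+7)(a-1)/(9a^2)\) then gives both inequalities at once.

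The two routes are closely related. Since \(Q(c)=(c-x_-^2)(c-x_+^2)\), the paper's squared lower inequality \((a-1)^2<9(a-1)(a+3)\) is, up to the factor \(36a^2\), exactly the statement \(Q(c)<0\). You replace the paper's separate upper-inequality step with the ordering \(x_-^2<x_+^2\).

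What each buys: your version is radical-free, avoids squaring an inequality, and exhibits the degeneration at \(a=1\) as a simple factor. The paper's version works directly with the cycle points, so it keeps their individual offsets \(c-x_\pm^2=\mp\frac{1}{6a}\bigl[3\sqrt{(a-1)(a+3)}\pm(a-1)\bigr]\) visible. Each offset is of order \(\sqrt{a-1}\) near the edge, a separation that your product \(Q(c)\), of order \(a-1\), does not show.
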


A useful scalar summary is the average curvature
\[
    \frac{\ell''(x_-)+\ell''(x_+)}2
    =
    \frac12+\frac{3}{2a}.
\]
Near \(a=1\), this is close to the nominal edge value \(2/a\), but the sharper statement is the straddling inequality above.
\subsection{Period Doubling, Accumulation, and Chebyshev Endpoint}
\label{sec:quartic-period-doubling}

At
\[
    a_2 = \sqrt5-1,
\]
the multiplier of the two-cycle reaches \(-1\). 
The stable two-cycle loses stability, and a stable four-cycle is born. 
Continuing in the same way gives the period-doubling cascade
\[
    2
    \;\longrightarrow\;
    4
    \;\longrightarrow\;
    8
    \;\longrightarrow\;
    16
    \;\longrightarrow\;
    \cdots .
\]
Thus the edge of stability at \(a=1\) is only the first transition: it is followed by a stable two-cycle, then by successive period doublings.

The period-doubling thresholds accumulate numerically at
\[
    a_\infty
    =
    1.302283\ldots .
\]
The gaps between successive thresholds shrink geometrically, with ratios approaching the Feigenbaum constant. 
The numerical values and defining equations are given in Appendix~\ref{sec:appendix-period-doubling}. 
Beyond the accumulation point, the dynamics enters chaotic regimes, interrupted by periodic windows. 
In optimization language, the route is
\[
    \text{fixed point}
    \;\longrightarrow\;
    \text{two-cycle}
    \;\longrightarrow\;
    \text{period-doubling cascade}
    \;\longrightarrow\;
    \text{chaotic finite-step dynamics}.
\]

The other exact landmark occurs at \(a=2\). 
At this learning rate,
\[
    g_2(x)=3x-2x^3 .
\]
After the rescaling \(x=\sqrt2 z\), this becomes
\[
    \frac1{\sqrt2}g_2(\sqrt2 z)
    =
    3z-4z^3
    =
    -T_3(z),
\]
where \(T_3(z)=4z^3-3z\) is the third Chebyshev polynomial. 
Thus, in the cosine coordinate \(z=\cos\theta\), the dynamics is angle tripling up to a sign:
\[
    T_3(\cos\theta)=\cos(3\theta).
\]
The interval \([-\sqrt2,\sqrt2]\) is invariant. 
In the rescaled coordinate \(z\), the invariant density is the arcsine density
\[
    d\mu(z)=\frac{1}{\pi\sqrt{1-z^2}}\,dz .
\]
For \(a>2\), this Chebyshev invariant interval is destroyed and trajectories outside the surviving bounded set escape.

\begin{proposition}[Chebyshev endpoint]
At \(a=2\), the quartic gradient map is conjugate, after the rescaling \(x=\sqrt2 z\), to the negative Chebyshev map \(-T_3\) on \([-1,1]\). 
Thus \(a=2\) is an explicitly solvable chaotic endpoint of the bounded quartic dynamics.
\end{proposition}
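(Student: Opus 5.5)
The proof is a direct computation followed by the standard trigonometric conjugacy for Chebyshev polynomials. First I substitute \(a=2\) into the cubic map:
\[
g_2(x)=x-2x(x^2-1)=3x-2x^3 .
\]
Next I conjugate by the linear map \(\phi(z)=\sqrt2 z\) and compute
\[
\phi^{-1}\circ g_2\circ\phi(z)=\tfrac1{\sqrt2}\bigl(3\sqrt2 z-2\cdot 2\sqrt2 z^3\bigr)=3z-4z^3=-T_3(z).
\]
Since \(\phi\) is a homeomorphism (indeed linear) from \([-1,1]\) onto \([-\sqrt2,\sqrt2]\), this establishes the conjugacy once the interval is shown to be invariant.

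Invariance and the chaotic description both come from the angle coordinate. For \(z=\cos\theta\) with \(\theta\in[0,\pi]\), the identity \(T_3(\cos\theta)=\cos3\theta\) gives
\[
-T_3(\cos\theta)=-\cos3\theta=\cos(\pi-3\theta)\in[-1,1].
\]
So \([-1,1]\) is mapped onto itself, and therefore \([-\sqrt2,\sqrt2]\) is invariant under \(g_2\). On angles, the map acts as \(\theta\mapsto \pi-3\theta\) modulo the reflection \(\theta\sim-\theta\) and \(2\pi\)-periodicity. This is, up to the orientation flip, the expanding circle map \(\theta\mapsto 3\theta\) folded onto \([0,\pi]\).

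To justify calling \(a=2\) an ``explicitly solvable chaotic endpoint'', I note the standard consequences of this conjugacy. The angle map is piecewise linear with slope \(\pm3\), so Lebesgue measure \(d\theta/\pi\) on \([0,\pi]\) is invariant. Pushing it forward through \(z=\cos\theta\) gives the arcsine density \(1/(\pi\sqrt{1-z^2})\). Periodic points are dense, since they are the angles \(\theta=k\pi/(3^n\pm1)\), and the topological entropy equals \(\log 3\). The invariance of the arcsine density can also be checked directly with the transfer operator: \(-T_3\) has three preimage branches, \(\theta\mapsto(\pi-\theta+2\pi j)/3\), and each branch contributes Jacobian \(1/3\).

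No step is a genuine obstacle. The only point needing care is the bookkeeping of the sign, namely that \(-T_3\) rather than \(T_3\) appears. I handle this by noting that \(-T_3=T_3\circ(-\mathrm{id})\), because \(T_3\) is odd. The sign is therefore a reflection that leaves \([-1,1]\) and the arcsine measure invariant, so the chaotic properties of \(T_3\) transfer unchanged. The remark about \(a>2\) is not part of the proposition's claim and needs no proof here.
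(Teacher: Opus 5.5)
Your proposal is correct and is essentially the paper's argument: substitute \(a=2\), rescale by \(x=\sqrt2 z\) to get \(3z-4z^3=-T_3(z)\), and read off both the invariance of \([-1,1]\) and the arcsine density from the cosine (angle-tripling) coordinate. The paper's appendix also uses \(x=\sqrt2\sin\varphi\), where \(3\sin\varphi-4\sin^3\varphi=\sin3\varphi\) makes the map exactly angle tripling and removes the sign bookkeeping. This is cleaner than your \(-T_3=T_3\circ(-\mathrm{id})\) remark, which is a composition rather than a conjugacy; indeed \(T_3\) and \(-T_3\) are not conjugate on \([-1,1]\), since one fixes the endpoints and the other swaps them. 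Transferring the chaotic properties is therefore best justified by your direct angle-map computations, or by \((-T_3)^2=T_3^2\).
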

\subsection{Envelope of the Post-Edge Band}
\label{sec:quartic-envelope}

The outer envelope of the bounded post-edge dynamics is controlled by the critical values of the cubic map. 
The positive critical point is
\[
    x_c=\sqrt{\frac{1+a}{3a}},
\]
and its image is
\[
    E(a)=g_a(x_c)
    =
    \frac{2(1+a)^{3/2}}{3\sqrt{3a}} .
\]
By odd symmetry, the negative critical value is \(-E(a)\). 
For \(1/2\le a\le2\), the interval \([-E(a),E(a)]\) is mapped into itself; at \(a=2\), this gives \(E(2)=\sqrt2\), the Chebyshev interval. 
Thus bounded iterates overshoot the minima \(\pm1\) by at most \(E(a)-1\).

The same critical-value curve also detects when the two wells cease to be dynamically separated. 
Let
\[
    z_a=\sqrt{\frac{1+a}{a}}
\]
be the positive zero of \(g_a\). 
The positive side remains separated from the negative side as long as the positive critical value stays below this zero. 
Since
\[
    \frac{E(a)}{z_a}
    =
    \frac{2(1+a)}{\sqrt{27}},
\]
this holds exactly for
\[
    a\le a_\ast:=\frac{\sqrt{27}}2-1 .
\]
At \(a=a_\ast\), the critical value reaches the sign-change zero \(z_a\). 
For larger \(a\), cross-well transitions become possible, and the two chaotic bands merge through the origin.

\begin{proposition}[Envelope and well merging]
The outer envelope of the bounded post-edge dynamics is the critical-value curve
\[
    E(a)=\frac{2(1+a)^{3/2}}{3\sqrt{3a}} .
\]
The two wells remain dynamically separated up to
\[
    a_\ast=\frac{\sqrt{27}}2-1\approx1.5980\ldots .
\]
At this threshold the positive critical value reaches the positive zero of the map, and cross-well transitions become possible.
\end{proposition}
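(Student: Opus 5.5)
The plan is to work directly with the cubic \(g_a(x)=(1+a)x-ax^3\), which is odd, and to organise the argument in three steps: compute the critical value, prove that the interval \(I_a=[-E(a),E(a)]\) is forward invariant (which justifies calling \(E\) the envelope), and then compare the critical value with the positive zero to obtain the separation threshold.

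\emph{Step 1: the critical value.} From \(g_a'(x)=(1+a)-3ax^2\), the critical points are \(\pm x_c\) with \(x_c^2=(1+a)/(3a)\). Substituting gives
\[
E(a)=x_c\bigl((1+a)-ax_c^2\bigr)=x_c\cdot\tfrac23(1+a)=\frac{2(1+a)^{3/2}}{3\sqrt{3a}} .
\]
By oddness, the critical value at \(-x_c\) is \(-E(a)\). On \([-x_c,x_c]\) the map is increasing, and its maximum over that interval is \(E(a)\).

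\emph{Step 2: forward invariance of \(I_a\).} I would first check that \(E(a)\ge x_c\), which is equivalent to \(\tfrac23(1+a)\ge1\), i.e.\ \(a\ge 1/2\). This places both critical points inside \(I_a\). Since a cubic with negative leading coefficient has its local extrema at \(\pm x_c\), the extreme values of \(g_a\) on \(I_a\) are among \(\pm E(a)\) and the endpoint values \(g_a(\pm E)\). Invariance therefore reduces to the single inequality \(g_a(E)\ge -E\). Because the equation \(g_a(x)=-x\) has its positive solution at \(x^2=1+2/a\), this is equivalent to \(E(a)^2\le 1+2/a\). Written out, the condition is \(4(1+a)^3\le 27(a+2)\), or
\[
4a^3+12a^2-15a-50\le0 .
\]
At \(a=2\) the left side equals \(32+48-30-50=0\), so the cubic factors as \((a-2)(4a^2+20a+25)=(a-2)(2a+5)^2\). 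This is \(\le0\) exactly when \(a\le 2\), which proves invariance for \(1/2\le a\le2\). It also recovers \(E(2)=\sqrt2\), the Chebyshev interval. In addition, every bounded orbit eventually enters \(I_a\): after one step from within the bounded region, the image lies in \(g_a([-z,z])\), whose extent is controlled by the critical values. The overshoot bound \(E(a)-1\) then follows immediately.

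\emph{Step 3: separation of the wells.} The zero is \(z_a=\sqrt{(1+a)/a}\), and a direct computation gives \(E/z_a=2(1+a)/\sqrt{27}\). Consider the positive band \(J=[\,\cdot\,,E(a)]\subset(0,\infty)\), which contains the attractor of the positive well. The map \(g_a\) sends positive \(x\) to a negative value exactly when \(x>z_a\). So if \(E\le z_a\), no point of \((0,E]\) changes sign, while if \(E>z_a\), the points in \((z_a,E]\), which are images of neighbourhoods of \(x_c\), are sent to the negative side. The equation \(E=z_a\) gives \(a_\ast=\sqrt{27}/2-1\).

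The main obstacle is making the word ``separated'' rigorous. One has to show that the positive dynamics really does stay in a positive invariant subinterval, i.e.\ that the lower part of the band does not leak to \(0\) or below. For this I would use the interval \([g_a(E),E]\) and check \(g_a(E)>0\), which is equivalent to \(E<z_a\), so the same threshold \(a_\ast\) governs both directions. Verifying that \([g_a(E),E]\) is mapped into itself when \(a\le a_\ast\) follows from unimodality of \(g_a\) on \([0,z_a]\), using that the minimum over the interval is attained at an endpoint and that \(g_a(g_a(E))\ge g_a(E)\). This last inequality is the one delicate computation, and I would handle it with the quotient coordinate \(u=x^2\) and the map \(h_a\).
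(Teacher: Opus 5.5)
Your Steps 1--3 follow the paper's route. You compute the critical value $E(a)$ and prove forward invariance of $I_a=[-E(a),E(a)]$ for $1/2\le a\le 2$ via $4(1+a)^3\le 27(a+2)$; this is the same equation the paper uses for the diagonal crisis, and your factorization $(a-2)(2a+5)^2$ closes that step cleanly. You then get the threshold from $E/z_a=2(1+a)/\sqrt{27}$. The one step that fails is the side claim that every bounded orbit eventually enters $I_a$. The antisymmetric two-cycle $\{\pm A_a\}$ with $A_a=\sqrt{1+2/a}$ satisfies $g_a(A_a)=-A_a$ and is bounded. By your own factorization it lies strictly outside $I_a$ for $a<2$, so it never enters. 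Your justification breaks at the same point: the bounded region is $[-A_a,A_a]$, and since $g_a(-A_a)=A_a>E(a)$, its image is all of $[-A_a,A_a]$, which the critical values do not control. The envelope statement does not need this claim. Forward invariance of $I_a$, with $\pm E(a)$ attained as critical values, is all the paper uses. If you keep the claim, it must read ``every bounded orbit other than $\{\pm A_a\}$'', and it can be proved as follows. For $z_a<|x|<A_a$ one has $|g_a(x)|=|x|\,(ax^2-1-a)<|x|$. The only fixed point of $t\mapsto t(at^2-1-a)$ on $[z_a,A_a]$ is $A_a$, so every orbit in $(-A_a,A_a)$ reaches $[-z_a,z_a]$. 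Finally, $g_a([-z_a,z_a])=[-E,E]$.

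Your treatment of separation is correct and more explicit than the paper's, which simply asserts separation from $E\le z_a$. However, the inequality you flag as delicate is immediate. For $a\ge 1/2$ one has $x_c\le 1\le E$, so $g_a(E)\le g_a(1)=1$ by monotonicity on $[x_c,\infty)$. Since $g_a(y)-y=ay(1-y^2)\ge 0$ on $[0,1]$, this gives $g_a(g_a(E))\ge g_a(E)$ whenever $g_a(E)\ge 0$. Simpler still, $g_a([0,z_a])=[0,E]$, so $[0,z_a]$ (and by oddness $[-z_a,0]$) is forward invariant exactly when $E\le z_a$, i.e.\ $a\le a_\ast$. No detour through the quotient map $h_a$ is needed.
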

\subsection{Connection to the Classical Cubic Family}

The direct calculations above are a rescaled version of the classical cubic family studied by May and by Rogers--Whitley,
\[
    f_\alpha(z)=\alpha z^3+(1-\alpha)z,
    \qquad
    0\le \alpha\le4 .
\]
Indeed, with
\[
    \rho_a=\sqrt{\frac{a+2}{a}},
    \qquad
    x=\rho_a z,
\]
one obtains
\[
    \rho_a^{-1}g_a(\rho_a z)
    =
    (1+a)z-a\rho_a^2z^3
    =
    (1+a)z-(a+2)z^3
    =
    -f_{a+2}(z).
\]
Thus the parameter dictionary is simply
\[
    \alpha=a+2 .
\]
Since \(f_\alpha\) is odd, the global sign disappears in even iterates:
\[
    (-f_\alpha)^2=f_\alpha^2 .
\]
The quartic gradient map therefore inherits the classical cubic bifurcation structure.

Under this dictionary, the edge \(a=1\) corresponds to \(\alpha=3\), the two-cycle stability endpoint \(a=\sqrt5-1\) corresponds to \(\alpha=1+\sqrt5\), and the Chebyshev endpoint \(a=2\) corresponds to \(\alpha=4\). 
The Rogers--Whitley snapback threshold
\[
    \alpha_\ast=1+\frac{\sqrt{27}}2
\]
becomes
\[
    a_\ast=\alpha_\ast-2=\frac{\sqrt{27}}2-1 .
\]
This threshold is distinct from the period-doubling accumulation point \(a_\infty\approx1.3023\): the latter is the accumulation of the principal cascade, while \(a_\ast\) marks the global snapback, or cross-well band-merging, mechanism.

\begin{proposition}[Classical cubic dictionary]
The quartic gradient map
\[
    g_a(x)=(1+a)x-a x^3
\]
is linearly conjugate, up to the global sign, to the classical cubic family
\[
    f_\alpha(z)=\alpha z^3+(1-\alpha)z
\]
with
\[
    \alpha=a+2 .
\]
Thus the post-edge dynamics of gradient descent on the quartic potential is the optimization realization of the classical cubic bifurcation diagram.
\end{proposition}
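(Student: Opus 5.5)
The plan is to verify the conjugacy by direct substitution and then transport the dynamical landmarks through the parameter dictionary. Set \(\rho_a=\sqrt{(a+2)/a}\) and define the linear map \(\phi_a(z)=\rho_a z\), a bijection of \(\mathbb R\) for every \(a>0\). The claim is
\[
    \phi_a^{-1}\circ g_a\circ\phi_a=-f_{a+2}.
\]
To check it, expand \(\rho_a^{-1}g_a(\rho_a z)=(1+a)z-a\rho_a^2z^3\) and use \(a\rho_a^2=a+2\). This gives \((1+a)z-(a+2)z^3\). With \(\alpha=a+2\), the coefficient \(1+a\) equals \(\alpha-1=-(1-\alpha)\), so the expression is exactly \(-\bigl(\alpha z^3+(1-\alpha)z\bigr)\). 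This is the first key step, and it is a one-line identity.

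Next I will make precise what ``up to the global sign'' means dynamically. Since \(f_\alpha\) is odd, \(-f_\alpha\) commutes with \(f_\alpha\), and \((-f_\alpha)^{n}=(-1)^n f_\alpha^{n}\). Hence even iterates agree exactly: \(g_a^{2n}=\phi_a\circ f_\alpha^{2n}\circ\phi_a^{-1}\). Odd iterates agree up to the reflection \(z\mapsto -z\), which is itself a symmetry of both maps. From this I will draw three consequences:
\begin{itemize}
\item Periodic orbits of \(g_a\) correspond to symmetric pairs of orbits of \(f_\alpha\), or to symmetric orbits of \(f_\alpha\).
\item Multipliers of even-period orbits coincide, since derivatives are invariant under linear conjugacy.
\item Bounded invariant sets correspond under scaling by \(\rho_a\).
\end{itemize}

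The last step reads off the dictionary on landmarks, as a consistency check that \(\alpha=a+2\) reproduces the known values:
\begin{itemize}
\item \(a=1\mapsto\alpha=3\). The fixed point \(z=\rho_1^{-1}=1/\sqrt3\) of \(-f_3\) is a sign-flip two-cycle for \(f_3\) with derivative \(-1\).
\item \(a=\sqrt5-1\mapsto1+\sqrt5\).
\item \(a=2\mapsto4\). Here \(\rho_2=\sqrt2\), consistent with the Chebyshev endpoint proposition, since \(f_4=T_3\) after \(z\mapsto\) appropriate sign.
\item \(a_\ast\mapsto\alpha_\ast=1+\sqrt{27}/2\). This matches the envelope computation \(E(a)/z_a=2(1+a)/\sqrt{27}\), because for \(f_\alpha\) the critical value reaches the zero \(\sqrt{(\alpha-1)/\alpha}\) exactly at \(\alpha_\ast\).
\end{itemize}

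The main obstacle is interpretive rather than computational: the odd-iterate sign. The conjugacy is to \(-f_\alpha\), not to \(f_\alpha\). A fixed point of \(g_a\) at a minimum therefore corresponds to a period-two orbit of \(f_\alpha\) of the form \(\{z,-z\}\), so period labels can shift by a factor of two. I will handle this by stating the correspondence for the second iterates, where it is an honest conjugacy. I will then note that the bifurcation diagram in the quotient \(u=x^2\) (equivalently \(z^2\)) is identical for both families. That quotient is the object the earlier propositions actually analyze.
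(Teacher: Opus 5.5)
Your proposal is correct and follows the paper's own argument: the same scaling \(\rho_a=\sqrt{(a+2)/a}\), the identity \(\rho_a^{-1}g_a(\rho_a z)=(1+a)z-(a+2)z^3=-f_{a+2}(z)\), the observation that oddness gives \((-f_\alpha)^2=f_\alpha^2\), and the same landmark dictionary \(1\mapsto3\), \(\sqrt5-1\mapsto1+\sqrt5\), \(2\mapsto4\), \(a_\ast\mapsto\alpha_\ast\). One small precision in your \(\alpha=3\) check: as a two-cycle of \(f_3\), the orbit \(\{\pm1/\sqrt3\}\) has multiplier \(f_3'(1/\sqrt3)^2=+1\), while the value \(-1\) is the derivative of \(-f_3\) (i.e.\ of \(g_1\)) at its fixed point, so the edge flip of \(g_a\) appears in the classical family as a symmetry-breaking pitchfork of the symmetric two-cycle; likewise \(f_4=T_3\) holds exactly, with no sign adjustment needed.
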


\subsection{Bifurcation Diagram}

The quartic map separates several notions that are often conflated. 
The edge of stability is the local threshold \(a=1\), where the minima \(x=\pm1\) lose one-step stability. 
Crossing \(a=1\), however, does not destroy organized finite-step dynamics. 
It creates a stable two-cycle, followed by a period-doubling cascade, localized chaotic bands, cross-well chaos, and finally the Chebyshev endpoint at \(a=2\). 
Collecting the thresholds established above,
\[
    \underbrace{2\sqrt2-2}_{\text{monotone loss}}
    < \underbrace{1}_{\text{edge}}
    < \underbrace{\tfrac3{\sqrt2}-1}_{\text{superstable}}
    < \underbrace{\sqrt5-1}_{\text{two-cycle flip}}
    < \underbrace{a_\infty}_{\text{cascade limit}}
    < \underbrace{\tfrac{\sqrt{27}}2-1}_{\text{wells merge}}
    < \underbrace{2}_{\text{Chebyshev}} .
\]

The tool that makes the critical orbit decisive is the Schwarzian derivative
\[
    Sf=\frac{f'''}{f'}-\frac32\left(\frac{f''}{f'}\right)^2,
    \qquad f'\neq0 .
\]
For the quartic map one computes
\[
    Sg_a(x)
    =
    -\,\frac{6a\,(1+a+6a x^2)}{(1+a-3a x^2)^2}
    <0
\]
on every regular branch. 
By Singer's theorem, each attracting cycle of a negative-Schwarzian map attracts a critical point. 
The two critical points
\[
    \pm x_c
    =
    \pm\sqrt{\frac{1+a}{3a}}
\]
therefore locate all attracting cycles. 
This is why the bifurcation diagram can be generated by iterating the critical orbits.

Figure~\ref{fig:quartic-phase-diagram} shows the resulting phase diagram. 
For each learning rate \(a\), we iterate the two critical points \(\pm x_c\), whose forward orbits organize the attracting dynamics. 
The dashed lines mark the main transitions: the edge \(a=1\), the two-cycle flip, the accumulation of the principal period-doubling cascade, the cross-well threshold, and the Chebyshev endpoint.

\begin{figure}[ht]
    \centering
    \includegraphics[width=0.8\linewidth]{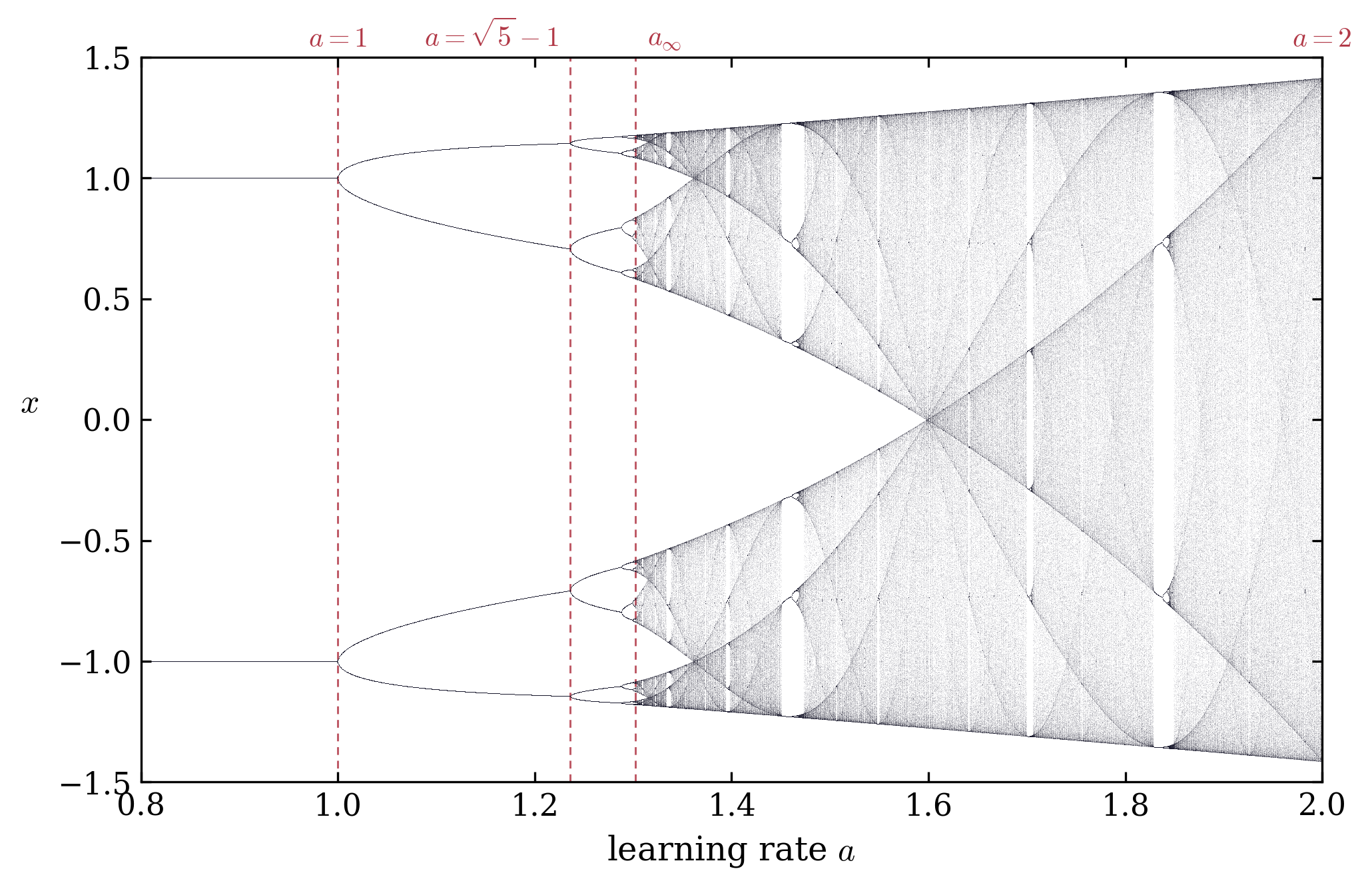}
    \caption{Global phase diagram of the cubic gradient map \(g_a(x)=(1+a)x-a x^3\). 
    The diagram is generated by iterating the two critical points \(\pm x_c=\pm\sqrt{(1+a)/(3a)}\). 
    Dashed lines mark \(a=1\), \(a_2=\sqrt5-1\), \(a_\infty\approx1.3023\), \(a_\ast=\sqrt{27}/2-1\), and \(a=2\).}
    \label{fig:quartic-phase-diagram}
\end{figure}

\subsection{Conclusion}

The quartic model shows that the local stability threshold is not an escape threshold. 
When \(a\) crosses \(1\), the attracting fixed point is replaced by an organized post-edge regime: a stable two-cycle, a period-doubling cascade, chaotic bands, cross-well transitions, and finally the Chebyshev endpoint at \(a=2\). 
These regimes are controlled by explicit features of the cubic gradient map---multipliers, critical values, and negative Schwarzian structure. 
The edge of stability is therefore not where finite-step optimization ends, but where its nonlinear dynamics begins.

\subsection{History and Related Work}

At the level of one-dimensional dynamics, the map \(g_a\) belongs to the classical odd cubic family studied by May~\cite{may1974biological} and Rogers--Whitley~\cite{rogers1983chaos}, with the parameter-space picture developed further by Branner--Hubbard~\cite{brannerhubbard1988} and Milnor~\cite{milnor1992}. 
The scalar bifurcation structure recorded above is therefore classical: the period-doubling cascade, its accumulation point \(a_\infty\), the snapback threshold \(a_\ast\), and the Chebyshev endpoint \(a=2\) all belong to the standard interval-dynamical theory of cubic maps.

The optimization interpretation is more recent. 
The catapult mechanism at large learning rates was described by Lewkowycz et al.~\cite{lewkowycz2020large}, while Zhu et al.~\cite{zhu2024quadratic} and Agarwala et al.~\cite{agarwala2023} analyzed catapult dynamics and edge-of-stability behavior in local quadratic models of gradient descent. 
Chen et al.~\cite{chen2023stability} derived the same cubic quotient map, in an affine coordinate, from phase retrieval and two-layer quadratic networks, and gave a global classification with explicit thresholds. 
In particular, the monotone-loss threshold \(2\sqrt2-2\), the divergence boundary at \(a=2\), and the transition from periodic to chaotic dynamics are already present in their analysis.

The point of the present section is to recast this classical cubic dynamics in a form adapted to learning and to the extensions that follow. 
The quotient coordinate \(u=x^2\) separates radial dynamics from sign changes, while the residual coordinate \(r=u-1\) exposes the monotone-loss threshold directly. 
The closed-form envelope \(E(a)\), the curvature-straddling interpretation of sharpness hovering, and the coincidence of the superstable point with the endpoint of the two-step contraction certificate refine the scalar picture in learning-theoretic terms. 
The substantive extension comes next: the same quotient perspective applies to \(u=x^{2n}\) at arbitrary depth and leads, under the scaling \(a=c/n\), to the universal Ricker limit.
\section{The Depth Limit}
\label{sec:depth-limit}

\subsection{Polynomial Family} 

The quartic model is exactly solvable, but it is only the first member of the normalized polynomial depth family
\begin{align*}
    \ell_n(x)=\frac{1}{4n}\left(x^{2n}-1\right)^2 .
\end{align*}
The corresponding family of gradient maps is
\begin{align*}
    g_{a,n}(x)=x-a\,x^{2n-1}(x^{2n}-1),
    \qquad
    g_{a,n}'(1)=1-2na .
\end{align*}
Thus the positive minimum is linearly stable for \(0<a<1/n\). We therefore write
\begin{align*}
    a=\frac{c}{n}
\end{align*}
and denote the rescaled map by
\begin{align*}
    g_{c,n}(x)
    :=
    x-\frac{c}{n}x^{2n-1}(x^{2n}-1).
\end{align*}
In this parameterization the edge of stability is always at \(c=1\) and we can consider a meaningful \(n \to \infty\) limit.

\subsection{The Quotient Map}

As in the quartic case, we remove the sign symmetry by passing to the quotient coordinate
\[
    u=x^{2n},
    \qquad u>0 .
\]
The induced map is
\[
    h_{c,n}(u)
    =
    u
    \left(
        1-\frac{c}{n}\,u^{(n-1)/n}(u-1)
    \right)^{2n}.
\]
For fixed \(c\), the factor in parentheses is a \(1+O(1/n)\) perturbation on compact subsets of \((0,\infty)\). Hence the quotient dynamics has the depth-independent limit
\[
    h_c(u)
    =
    u\exp\{2c\,u(1-u)\}.
\]
Thus the large-depth dynamics on the \(u=O(1)\) scale is governed by a Ricker-type map.

\begin{proposition}[Ricker-type depth limit]
For fixed \(c>0\), the quotient maps \(h_{c,n}\) converge uniformly on compact subsets of \((0,\infty)\) to
\[
    h_c(u)=u\exp\{2c\,u(1-u)\}.
\]
Consequently, the depth-\(n\) gradient dynamics with learning rate \(a=c/n\) has a universal large-depth limit in the coordinate \(u=x^{2n}\).
\end{proposition}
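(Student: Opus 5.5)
Fix a compact set $K=[\delta,M]\subset(0,\infty)$ with $0<\delta\le M<\infty$. The approach is to write $h_{c,n}$ as $u\exp\{2n\log(1-\varepsilon_n(u))\}$ with
\[
    \varepsilon_n(u):=\frac{c}{n}\,u^{(n-1)/n}(u-1),
\]
and show that the exponent converges uniformly on $K$ to $2c\,u(1-u)$. The exponential is uniformly continuous on bounded sets and $u$ is bounded on $K$, so uniform convergence of the exponent gives uniform convergence of $h_{c,n}$ to $h_c$.

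First, I will bound $\varepsilon_n$ uniformly. On $K$ we have $u^{(n-1)/n}=u\cdot u^{-1/n}$. Moreover $u^{-1/n}=\exp\{-\tfrac1n\log u\}$ converges to $1$ uniformly on $K$, because $|\log u|\le \max(|\log\delta|,|\log M|)$. Hence
\[
    \sup_K|\varepsilon_n|\le \frac{C_K}{n}
\]
for some constant $C_K$. In particular $|\varepsilon_n|<1/2$ on $K$ for all large $n$. For such $n$ the base $1-\varepsilon_n(u)$ is positive, so taking the logarithm is legitimate. This also shows that the map $h_{c,n}$, which as written is a $2n$-th power and is always nonnegative, agrees with the exponential form.

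Second, I will expand the logarithm. For $|\varepsilon|\le 1/2$ one has $|\log(1-\varepsilon)+\varepsilon|\le \varepsilon^2$. Therefore
\[
    \sup_K \bigl|2n\log(1-\varepsilon_n)+2n\varepsilon_n\bigr|\le 2n\cdot\frac{C_K^2}{n^2}\to0 .
\]
It remains to control the linear term:
\[
    -2n\varepsilon_n(u)=2c\,u^{(n-1)/n}(1-u).
\]
This converges uniformly on $K$ to $2c\,u(1-u)$. Indeed, the difference is $2c\,u(1-u)(u^{-1/n}-1)$, which is bounded by a constant times $\sup_K|u^{-1/n}-1|\to0$. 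Combining the two estimates, the exponent converges uniformly to $2c\,u(1-u)$. The exponents all lie in a common bounded interval, so the factor $u\le M$ and the Lipschitz bound of $\exp$ on that interval complete the uniform convergence.

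The only delicate point is the uniformity of $u^{-1/n}\to1$, and this is exactly where compactness away from $0$ is needed. Near $u=0$ the factor $u^{-1/n}$ does not converge uniformly. This explains why the statement is restricted to compact subsets of $(0,\infty)$ rather than $[0,\infty)$, and why behavior at large $u$ is left to a separate outer-scale analysis.

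For the "consequently" clause, note that $u=x^{2n}$ conjugates $g_{c,n}$ restricted to $x>0$ to $h_{c,n}$, by construction of the quotient map. Hence finitely many iterates of the depth-$n$ dynamics converge as well. Suppose an orbit of $h_c$ stays in the interior of some compact $K$. Then by uniform convergence together with continuity of $h_c$, an induction on the number of steps shows that $h_{c,n}^k\to h_c^k$ uniformly on a slightly smaller compact set, for each fixed $k$.
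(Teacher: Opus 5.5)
Your proposal is correct and follows essentially the same argument as the paper (Appendix~\ref{app:scaling-around-minimum}): write $u^{(n-1)/n}=u\,u^{-1/n}$ with $u^{-1/n}=1+O_K(1/n)$ on compacts, expand $2n\log\bigl(1-\tfrac{c}{n}u^{(n-1)/n}(u-1)\bigr)=-2c\,u(u-1)+O_K(1/n)$, and exponentiate. Your version is more careful about the logarithm remainder and the positivity of the base, and since $\sup_K|u^{-1/n}-1|=O(1/n)$, your estimates also recover the paper's uniform $O(1/n)$ error bound.
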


The derivation, including a uniform \(O(1/n)\) error bound on compact subsets of \((0,\infty)\), is given in Appendix~\ref{app:scaling-around-minimum}.
\subsection{Effective Gradient Map}

It is useful to pass once more to additive coordinates
\[
    w=\log u=2n\log |x| .
\]
The limiting map becomes
\[
    \Phi_c(w)
    =
    \log h_c(e^w)
    =
    w+2c\,e^w(1-e^w).
\]
Equivalently,
\[
    \Phi_c(w)
    =
    w-2c\,V'(w),
    \qquad
    V(w)=\frac12(e^w-1)^2 .
\]
Thus, in the depth limit, the additive-coordinate dynamics is gradient descent on the universal exponential potential \(V\), with effective learning rate \(2c\).

The minimum corresponds to
\[
    u=1,
    \qquad
    w=0 .
\]
Moreover,
\[
    h_c'(1)=\Phi_c'(0)=1-2c .
\]
Thus the limiting fixed point is locally stable precisely for
\[
    0<c<1 .
\]
In the large-depth scaling \(a=c/n\), the edge of stability is therefore \(c=1\), exactly matching the finite-depth edge \(a_1(n)=1/n\).
\subsection{Shape of the Limiting Map}

The quotient map \(h_c\) is a one-hump map on \((0,\infty)\). Indeed,
\[
    h_c'(u)
    =
    \exp\{2c\,u(1-u)\}
    \bigl(1+2c\,u(1-2u)\bigr),
\]
so its unique critical point is
\[
    u_{\mathrm{crit}}(c)
    =
    \frac{1+\sqrt{1+4/c}}{4}.
\]
The map increases on \((0,u_{\mathrm{crit}}(c))\), decreases on \((u_{\mathrm{crit}}(c),\infty)\), and satisfies
\[
    h_c(u)\to0
    \qquad
    \text{as}
    \qquad
    u\downarrow0
    \quad\text{or}\quad
    u\to\infty .
\]
Thus \(h_c((0,\infty))\) is bounded, and every positive orbit of the limiting map is bounded after one step, for every finite \(c>0\).

Near the fixed point, the additive map has the expansion
\[
    \Phi_c(w)
    =
    (1-2c)w
    -3c\,w^2
    -\frac73 c\,w^3
    +O(w^4).
\]
The linear term gives the local edge \(c=1\). 
The local normal form shows that the flip is supercritical: the fixed point loses stability by giving birth to a stable period-two orbit. 
Thus the edge \(c=1\) is not an escape threshold, but the entrance into organized post-edge oscillations. 
The later thresholds are not local; they depend on the full one-hump shape of \(h_c\). 
In particular, because \(h_c\) maps \((0,\infty)\) into a bounded interval for every finite \(c>0\), the large-depth limit retains post-edge oscillatory dynamics but has no finite-\(c\) divergence threshold.
\subsection{Schwarzian Structure and Critical Orbits}
\label{sec:ricker-schwarzian}

The one-hump shape suggests that the dynamics should be organized by the forward orbit of \(u_{\mathrm{crit}}(c)\). 
Here critical means critical for the one-dimensional map \(h_c\), not for the loss: it is the point at which the map changes from increasing to decreasing.

The reason this orbit is distinguished is Singer's theorem. 
For a \(C^3\) interval map with negative Schwarzian derivative, every attracting periodic orbit has an immediate basin containing either a critical point of the map or a boundary point. 
Thus attracting cycles cannot be hidden from the critical or boundary dynamics. 
In the present setting, iterating \(u_{\mathrm{crit}}(c)\) is therefore not merely a convenient way to draw the bifurcation diagram; it is the natural probe of the attracting periodic dynamics.

Recall that the Schwarzian derivative is
\[
    Sf
    =
    \frac{f'''}{f'}
    -
    \frac32\left(\frac{f''}{f'}\right)^2,
\]
defined away from critical points. 
For the Ricker limit, the required sign condition holds globally.

\begin{proposition}[Negative Schwarzian of the Ricker limit]
\label{prop:ricker-schwarzian}
For every \(c>0\), the limiting quotient map
\[
    h_c(u)=u\exp\{2c\,u(1-u)\}
\]
satisfies
\[
    Sh_c(u)<0
    \qquad
    \text{for all }u>0\text{ with }h_c'(u)\neq0 .
\]
\end{proposition}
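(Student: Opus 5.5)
The plan is a direct computation organized through the logarithmic derivative of \(h_c'\), so that the factor \(e^{\varphi}\) drops out. First write \(h_c(u)=u\,e^{\varphi(u)}\) with \(\varphi(u)=2c\,u(1-u)\). Then
\[
    h_c'(u)=e^{\varphi(u)}P(u),
    \qquad
    P(u)=1+u\varphi'(u)=1+2cu-4cu^2 .
\]
Away from zeros of \(P\), set \(N=h_c''/h_c'=\varphi'+P'/P\). The Schwarzian can then be written as
\[
    Sh_c=N'-\tfrac12N^2 .
\]
The exponential factor cancels completely in this form, which is the point of using \(N\).

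Next, insert the explicit derivatives
\[
    \varphi'=2c(1-2u),\qquad \varphi''=-4c,\qquad P'=2c(1-4u),\qquad P''=-8c .
\]
Multiplying \(Sh_c\) by \(P^2>0\) and regrouping gives
\[
    P^2\,Sh_c
    =
    -\tfrac12\bigl(\varphi'P+P'\bigr)^2
    -P'^2-4cP^2-8cP .
\]
The first two terms are manifestly nonpositive. The only dangerous term is \(-8cP\), because \(P\) is negative beyond the critical point. To handle it, I would complete the square:
\[
    -4cP^2-8cP=-4c(P+1)^2+4c .
\]
After this step it suffices to show
\[
    P'^2+4c(P+1)^2>4c .
\]

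The key step, and the one I expect to be the main obstacle, is to control \(P'^2\) in terms of \(P\) itself. Since \(P\) is quadratic in \(u\), its discriminant structure should give such a relation. Indeed, \((1-4u)^2=1+16u^2-8u\) and \(1-P=4cu^2-2cu\), so
\[
    P'^2=4c^2(1-4u)^2=4c^2+16c(1-P).
\]
Substituting this into the required inequality gives
\[
    P'^2+4c(P+1)^2-4c=4c^2+4c(P-1)^2+12c ,
\]
which is strictly positive for every \(c>0\). Hence \(P^2\,Sh_c<0\), and therefore \(Sh_c(u)<0\) wherever \(h_c'(u)\neq0\), that is wherever \(P(u)\neq0\).

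Finally, I would double-check the algebra at a single point, say \(P=-1\), where the bound is tightest relative to the naive estimate. There \(P'^2=4c^2+32c>4c\), consistent with the identity above.
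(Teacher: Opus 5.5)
Your proof is correct. The regrouping $P^2 Sh_c=-\tfrac12(\varphi'P+P')^2-P'^2-4cP^2-8cP$ holds, and so does the identity $P'^2=4c^2+16c(1-P)$. Together they give the certificate $P^2Sh_c=-\tfrac12(\varphi'P+P')^2-4c^2-12c-4c(P-1)^2<0$.

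The paper takes a different route. It takes the logarithm of $h_c$ itself, $\psi=\log u+2cu(1-u)$, and uses the composition rule $S(e^\psi)=S\psi-\tfrac12\psi'^2$. It then passes to the scale-invariant quantities $A=u\psi'$ (exactly your $P$), $B=-u^2\psi''=1+4cu^2$, and $u^3\psi'''=2$. This yields $2u^4\psi'^2Sh_c=4A-A^4-3B^2$. Negativity then follows from the one-variable bound $4A-A^4\le 3$ together with $B>1$ for $u>0$.

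You instead take the logarithm of $h_c'$, which removes the exponential factor at once. You then exploit that $\varphi$ and $P$ are quadratic, so $\varphi''$ and $P''$ are constants and $P'^2$ is affine in $P$.

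Your route gives an explicit sum-of-squares certificate and the uniform bound $P^2Sh_c\le-4c(c+3)$, without ever using $u>0$. The paper's route buys portability. The normalization $2u^4\psi'^2Sh=2AC-3B^2-A^4$ is the template reused for the finite-depth family in Proposition~\ref{prop:schwarzian-all-n}. There $2cu(1-u)$ is replaced by $\tfrac{2}{\varepsilon}\log b_{c,n}$, higher derivatives are no longer constant, and your discriminant identity has no direct analogue.
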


The proof is given in Appendix~\ref{app:ricker-schwarzian}. 
Together with Singer's theorem, Proposition~\ref{prop:ricker-schwarzian} explains why the attracting periodic part of the limiting phase diagram can be generated by following the single critical orbit of \(u_{\mathrm{crit}}(c)\). 
In optimization language, it reduces the search for stable finite-step cycles from solving all possible periodic-orbit equations to tracking the most nonlinear point of the effective update map.

\begin{remark} 
The same critical orbit also organizes the visible chaotic attractor in the bifurcation diagram, but Singer's theorem itself is a statement about attracting periodic orbits. 
This is why the critical-orbit diagram displays both the period-doubling cascade and the stable periodic windows, while the theorem directly certifies only the attracting periodic part.
\end{remark}

\subsection{First Period Doubling}

At \(c=1\), the fixed point \(u=1\) undergoes a flip bifurcation and an attracting two-cycle is born. 
Write this cycle as
\[
    u_- \mapsto u_+ \mapsto u_-,
    \qquad
    u_-\neq u_+ .
\]
It is determined by
\[
    u_+=h_c(u_-),
    \qquad
    h_c(u_+)=u_- .
\]
The two-cycle loses stability when its multiplier reaches \(-1\):
\[
    h_c'(u_-)\,h_c'(u_+)=-1 .
\]
Numerically solving the combined system
\[
    u_+=h_c(u_-),
    \qquad
    h_c(u_+)=u_-,
    \qquad
    h_c'(u_-)\,h_c'(u_+)=-1
\]
gives
\[
    c_1
    =
    1.251645\ldots .
\]
Thus the limiting map has an attracting two-cycle for
\[
    1<c<c_1 .
\]

This is a depth-limit constant. 
Translated back to the original learning rate, it predicts the finite-depth scaling
\[
    a_1(n)=\frac1n,
    \qquad
    a_2(n)\sim\frac{c_1}{n},
\]
where \(a_1(n)\) is the fixed-point stability edge and \(a_2(n)\) is the two-cycle stability endpoint. 
The convergence of the finite-depth threshold \(n a_2(n)\to c_1\) is proved in Section~\ref{sec:middle-ground}.
\subsection{Cascade Without Divergence}

The first period doubling is followed by the principal period-doubling cascade. 
Let
\[
    c_0=1,
    \qquad
    c_1=1.251645\ldots .
\]
Here \(c_0\) is the flip of the fixed point, while \(c_1\) is the flip of the two-cycle. 
More generally, for \(j\ge0\), the stable \(2^j\)-cycle loses stability at \(c_j\), giving birth to a stable \(2^{j+1}\)-cycle. 
Numerically,
\[
    c_1<c_2<c_3<\cdots,
    \qquad
    c_j\uparrow c_\infty,
\]
with
\[
    c_\infty
    =
    1.326980\ldots .
\]
Thus, in the scaled parameter \(c\), the transition from fixed-point convergence to the post-cascade regime occupies the finite interval
\[
    1<c<c_\infty .
\]
In the original learning rate \(a=c/n\), this corresponds to an \(O(1/n)\) window.

For \(c>c_\infty\), the limiting map enters the standard post-cascade regime: chaotic invariant sets appear, interspersed with periodic windows. 
What is absent is a finite-\(c\) divergent phase. 
The finite-depth divergence mechanism is non-uniform in the quotient variable \(u\): it occurs on a large-\(u\) scale that is pushed out of the \(u=O(1)\) depth-limit dynamics. 
Section~\ref{sec:middle-ground} locates this escape mechanism on its own scale.
\subsection{Universal Phase Diagram}

In the limiting coordinate, the loss is, up to an irrelevant scale,
\[
    \bar\ell(u)=\frac12(u-1)^2 .
\]
Thus one-step loss decrease for every \(u>0\) is equivalent to
\[
    |h_c(u)-1|\le |u-1|
    \qquad
    \forall\,u>0 .
\]
Geometrically, this means that the updated point \(h_c(u)\) must remain between \(u\) and its reflection \(2-u\) across the minimizer \(u=1\). 
The update may overshoot the minimizer, but it must not overshoot beyond the reflected point.

The first loss-increase threshold occurs when the graph of \(h_c\) becomes tangent to the reflected graph \(u\mapsto 2-u\). 
Thus the boundary is determined by
\[
    h_c(u)+u=2,
    \qquad
    h_c'(u)=-1 .
\]
Solving these two equations numerically gives
\[
    u=1.615770\ldots,
    \qquad
    c_{\mathrm{cat}}
    =
    0.721813\ldots .
\]
Together with the flip threshold \(c=1\), the two-cycle endpoint
\[
    c_1=1.251645\ldots,
\]
and the accumulation point
\[
    c_\infty=1.326980\ldots,
\]
this gives the limiting phase diagram
\[
\begin{array}{ccl}
0<c\le c_{\mathrm{cat}}
&:&
\text{monotone convergence},\\[1mm]
c_{\mathrm{cat}}<c\le1
&:&
\text{nonmonotone, or catapult, convergence},\\[1mm]
1<c<c_\infty
&:&
\text{two-cycles and the period-doubling cascade},\\[1mm]
c>c_\infty
&:&
\text{post-cascade dynamics with chaotic intervals and periodic windows}.
\end{array}
\]
The divergent phase is absent as a finite-\(c\) phase of the depth-limit dynamics.

This phase diagram is universal in the following sense. 
After the scaling \(a=c/n\), all sufficiently deep polynomial chains are governed on the \(u=O(1)\) scale by the same Ricker-type map \(h_c\). 
The thresholds \(c_{\mathrm{cat}}\), \(1\), \(c_1\), and \(c_\infty\) are therefore depth-limit constants: depth changes the learning-rate scale, but not the limiting phase diagram.
\subsection{Numerical Phase Diagram}

Figure~\ref{fig:ricker-universal} illustrates the depth-limit dynamics. 
The left panel shows the limit map
\[
    h_c(u)=u\,e^{2cu(1-u)},
\]
a one-hump Ricker-type map with fixed point \(u=1\). 
As \(c\) increases, the graph steepens at the fixed point, which loses stability at the universal edge \(c=1\). 
The right panel shows the same dynamics in the additive coordinate
\[
    w=\log u,
    \qquad
    \Phi_c(w)=\log h_c(e^w)
             =w+2c e^w(1-e^w),
\]
where the universal period-doubling cascade is visible.

\begin{figure}[t]
    \centering
    \includegraphics[width=\linewidth]{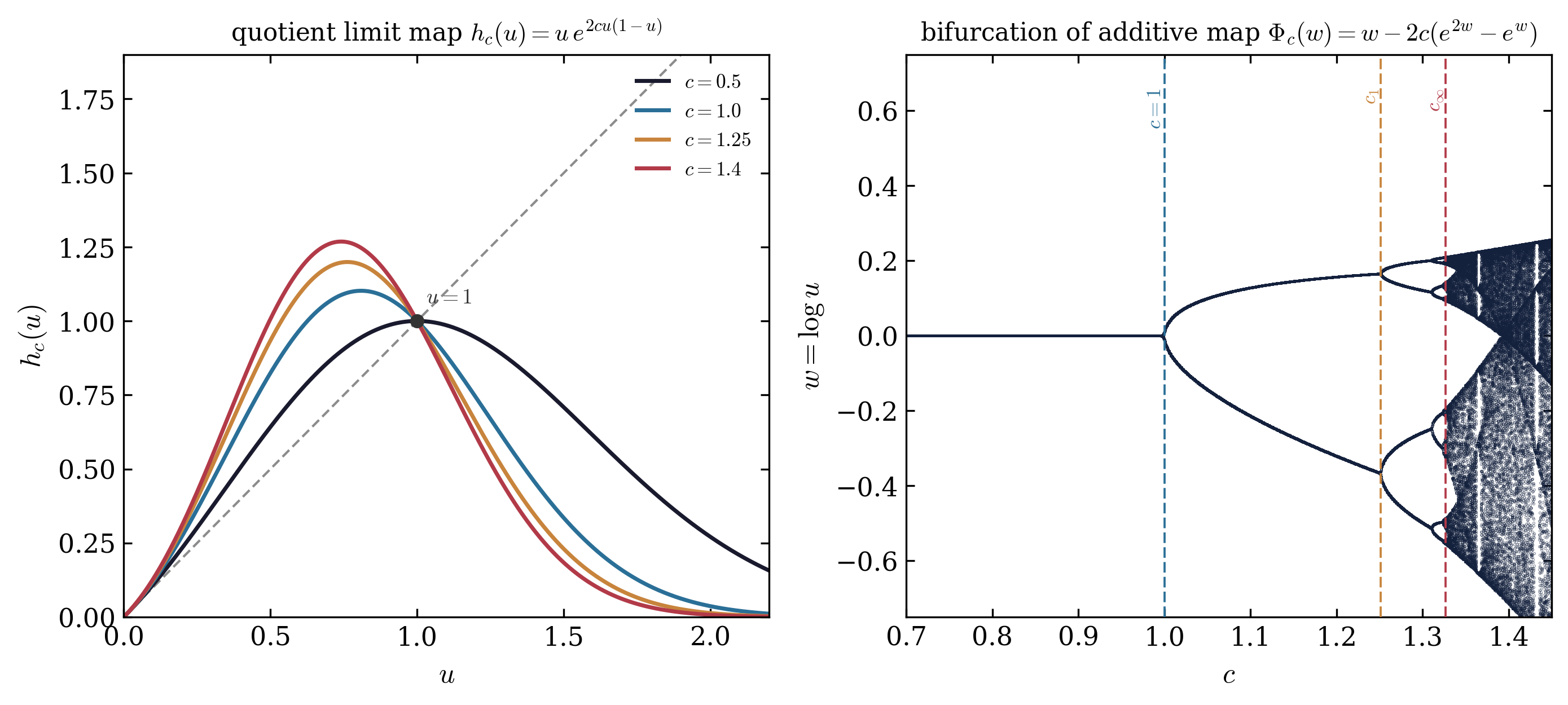}
    \caption{Universal depth-limit dynamics. 
    Left: the quotient limit map \(h_c(u)=u e^{2cu(1-u)}\) for representative values of \(c\), together with the identity line. 
    The fixed point \(u=1\) loses stability at \(c=1\). 
    Right: bifurcation diagram of the additive-coordinate map \(\Phi_c(w)=\log h_c(e^w)\), where \(w=\log u\). 
    The dashed lines mark the monotone-loss threshold \(c_{\mathrm{cat}}\approx0.7218\), the edge \(c=1\), the two-cycle endpoint \(c_1\approx1.2516\), and the accumulation point \(c_\infty\approx1.32698\).}
    \label{fig:ricker-universal}
\end{figure}
\subsection{History and Related Work}

The map \(h_c\) is a Ricker-type exponential one-hump map. 
Its origin lies in population dynamics \cite{ricker1954stock}, while its period-doubling route belongs to the classical theory of one-dimensional maps \cite{may1974biological}. 
Here it is not introduced phenomenologically; it arises as the exact large-depth limit of gradient descent on the depth-\(n\) scalar chain.

The edge-of-stability literature for deep linear networks has observed period-doubling routes to chaos empirically and analyzed the first post-edge two-cycle at finite depth
\cite{Ghosh2025DeepMatrixFactorizationEOS,ghosh2025dln}. 
The quotient map \(h_{c,n}\), its universal limit \(h_c\), and the constants \(c_{\mathrm{cat}},c_1,c_\infty\) give the corresponding depth-independent picture. 
Thus the cubic map is the exactly solvable base case, while the Ricker-type map is the universal depth-limit normal form.
\section{The Middle Ground}
\label{sec:middle-ground}

The two endpoints are explicit. At depth \(n=1\), the quotient dynamics is the cubic map analyzed above. At depth \(n=\infty\), the rescaled quotient maps converge to the Ricker-type limit
\[
    h_c(u)=u\exp\{2c\,u(1-u)\}.
\]
For finite \(2\le n<\infty\), closed-form solutions are no longer available, but the same quotient coordinate remains the right object. We therefore study
\[
    h_{c,n}(u)
    =
    u
    \left(
        1-\frac{c}{n}\,u^{(n-1)/n}(u-1)
    \right)^{2n},
    \qquad
    u=x^{2n},
\]
as an interpolation between the cubic base case and the Ricker limit.

The interpolation is structural in three senses. First, the depth sequence admits an analytic embedding near the limit, so nondegenerate thresholds vary smoothly with \(1/n\). Second, the quotient maps retain a one-hump negative-Schwarzian structure, so attracting cycles are organized by a single critical orbit. Third, the finite-depth escape mechanism is confined to a separate outer scale and disappears from the \(u=O(1)\) depth-limit dynamics.

\subsection{Finite-Depth Bifurcations}
\label{sec:finite-depth-numerics}

The fixed point \(u=1\) satisfies
\[
    h_{c,n}'(1)=1-2c
\]
exactly, for every depth. Thus the first flip occurs at
\[
    c=1
\]
for all \(n\). Beyond this point, the attracting object is no longer a fixed point but a periodic orbit.

For a period-\(m\) orbit
\[
    u_0\mapsto u_1\mapsto\cdots\mapsto u_{m-1}\mapsto u_0,
\]
the multiplier is
\[
    \mu_m(c,n)
    =
    (h_{c,n}^m)'(u_0)
    =
    \prod_{j=0}^{m-1} h_{c,n}'(u_j).
\]
A flip bifurcation of this orbit occurs when
\[
    h_{c,n}^m(u_0)=u_0,
    \qquad
    (h_{c,n}^m)'(u_0)=-1,
\]
with \(u_0\) not belonging to a lower-period orbit. For \(m=1\), this recovers the edge \(c=1\); for \(m=2\), it locates the loss of stability of the two-cycle; and the same condition continues along the period-doubling cascade.

The Lyapunov exponent gives the complementary diagnostic for the global phase portrait:
\[
    \lambda_n(c)
    =
    \lim_{T\to\infty}
    \frac1T
    \sum_{k=0}^{T-1}
    \log\left|h_{c,n}'(u_k)\right|.
\]
On an attracting \(m\)-cycle,
\[
    \lambda_n(c)=\frac1m\log|\mu_m(c,n)|<0.
\]
At a neutral flip the corresponding exponent vanishes, while positive values indicate chaotic sensitivity on the attracting set.

Numerically, the finite-depth picture is consistent across \(n\): the fixed point flips at \(c=1\), the two-cycle loses stability near the limiting value \(c_1\), the principal cascade accumulates near \(c_\infty\), and the post-cascade region contains chaotic intervals interrupted by periodic windows. The following subsections explain why this picture is not an accident of numerics.

\subsection{Analytic Embedding of the Depth Sequence}
\label{sec:analytic-interpolation}

Set
\[
    \varepsilon=\frac1n .
\]
The finite-depth maps are defined only on the discrete sequence
\[
    \varepsilon=1,\frac12,\frac13,\ldots .
\]
On the principal branch, however, this sequence admits an analytic embedding. For \(u>0\) and
\[
    1-\varepsilon c\,u^{1-\varepsilon}(u-1)>0,
\]
define
\[
    h^{(\varepsilon)}_c(u)
    =
    u
    \exp\left\{
        \frac{2}{\varepsilon}
        \log\bigl(1-\varepsilon c\,u^{1-\varepsilon}(u-1)\bigr)
    \right\},
    \qquad
    \varepsilon>0,
\]
and set
\[
    h^{(0)}_c(u)=h_c(u)=u\exp\{2c\,u(1-u)\}.
\]
For \(\varepsilon=1/n\), this is exactly the finite-depth quotient map \(h_{c,n}\). For other \(\varepsilon\), it is an analytic interpolation, not a genuine network depth.

The apparent singularity at \(\varepsilon=0\) is removable on compact subsets of the principal branch. Indeed, if
\[
    g_\varepsilon(u)=c\,u^{1-\varepsilon}(u-1),
\]
then
\[
    \frac{2}{\varepsilon}\log(1-\varepsilon g_\varepsilon)
    =
    -2g_\varepsilon-\varepsilon g_\varepsilon^2-\frac23\varepsilon^2 g_\varepsilon^3-\cdots ,
\]
and \(g_\varepsilon\) depends analytically on \(\varepsilon\) for \(u>0\).

\begin{proposition}[Analytic embedding near the depth limit]
\label{prop:analytic-interpolation}
Let \(F(y,c,\varepsilon)=0\) be a finite system of equations built from \(h^{(\varepsilon)}_c\), its iterates, and its derivatives, evaluated at finitely many points on the principal branch. Suppose that
\[
    F(y_0,c_0,0)=0
\]
and that the Jacobian with respect to \((y,c)\) is invertible at \((y_0,c_0,0)\). Then there is an analytic branch
\[
    \varepsilon\mapsto (y_\star(\varepsilon),C_\star(\varepsilon))
\]
with
\[
    (y_\star(0),C_\star(0))=(y_0,c_0).
\]
Along the actual depth sequence \(\varepsilon=1/n\), the corresponding scaled finite-depth threshold satisfies
\[
    C_\star(1/n)
    =
    C_\star(0)
    +
    \frac{\kappa}{n}
    +
    O\!\left(\frac1{n^2}\right).
\]
\end{proposition}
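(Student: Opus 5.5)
The argument combines joint analyticity of the interpolating family with the analytic implicit function theorem, followed by a Taylor expansion along \(\varepsilon=1/n\).

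\textbf{Step 1: joint analyticity.} Fix a compact neighbourhood \(K\) of the finitely many evaluation points on the principal branch, and a compact neighbourhood of \(c_0\). Write
\[
    H(u,c,\varepsilon)=h^{(\varepsilon)}_c(u)=u\exp\{2\varepsilon^{-1}L(\varepsilon g_\varepsilon(u))\},
\]
where \(L(z)=\log(1-z)\) and \(g_\varepsilon(u)=c\,u^{1-\varepsilon}(u-1)\). The function \(\varphi(z)=z^{-1}\log(1-z)\), extended by \(\varphi(0)=-1\), is analytic for \(z<1\). Hence
\[
    \frac{2}{\varepsilon}\log(1-\varepsilon g_\varepsilon)=2g_\varepsilon\,\varphi(\varepsilon g_\varepsilon).
\]
This identity replaces the series displayed before the proposition and makes the removability at \(\varepsilon=0\) exact, including for small negative \(\varepsilon\). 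Since \(u^{1-\varepsilon}=u\,e^{-\varepsilon\log u}\) is real-analytic in \((u,\varepsilon)\) for \(u>0\), the map \(H\) is real-analytic in \((u,c,\varepsilon)\) on an open set \(\{u>0,\ \varepsilon g_\varepsilon(u)<1\}\). This set contains \(K\times\{c\text{ near }c_0\}\times(-\delta,\delta)\), and at \(\varepsilon=0\) the constraint is vacuous.

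\textbf{Step 2: analyticity of \(F\).} Iterates \(H^{m}\) are compositions of analytic maps. To keep them analytic, the intermediate orbit points must stay in the principal-branch domain; this holds near \(\varepsilon=0\) by continuity, since at \(\varepsilon=0\) they lie in \((0,\infty)\) where the domain condition is open. Derivatives in \(u\) of analytic functions are again analytic. Therefore \(F(y,c,\varepsilon)\) is real-analytic near \((y_0,c_0,0)\).

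\textbf{Step 3: implicit function theorem.} The real-analytic implicit function theorem now applies: \(F(y_0,c_0,0)=0\) and \(D_{(y,c)}F\) is invertible there. It yields \(\delta'>0\) and a unique analytic branch \((y_\star(\varepsilon),C_\star(\varepsilon))\) on \(|\varepsilon|<\delta'\) solving \(F=0\), with \((y_\star(0),C_\star(0))=(y_0,c_0)\).

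\textbf{Step 4: expansion along the depth sequence.} Taylor-expand \(C_\star\) at \(0\):
\[
    C_\star(\varepsilon)=C_\star(0)+\kappa\varepsilon+O(\varepsilon^2),
    \qquad \kappa=C_\star'(0).
\]
By implicit differentiation, \(\kappa=-[D_{(y,c)}F]^{-1}\partial_\varepsilon F\), projected onto the \(c\)-component. Substituting \(\varepsilon=1/n\) for \(n>1/\delta'\) gives the claimed expansion.

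\textbf{Identification with the finite-depth threshold.} By construction \(h^{(1/n)}_c=h_{c,n}\). Hence \(C_\star(1/n)\) solves the finite-depth system, for example the flip equations for the two-cycle. Local uniqueness from the implicit function theorem then identifies it with the finite-depth threshold near \(c_0\).

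\textbf{Main obstacle.} The delicate step is Step 2: guaranteeing that all intermediate orbit points stay uniformly inside the principal-branch domain. The domain constraint \(\varepsilon c\,u^{1-\varepsilon}(u-1)<1\) shrinks the allowed range of \(u\) as \(\varepsilon\) grows. I would handle this by choosing \(K\) compact in \((0,\infty)\) containing the \(\varepsilon=0\) orbit. For \(|\varepsilon|\) small, the quantity \(\varepsilon g_\varepsilon\) is then uniformly bounded by \(1/2\) on \(K\). Continuity of the iterates in \(\varepsilon\) keeps all intermediate orbit points in \(K\).
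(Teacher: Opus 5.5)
Your proposal is correct and follows the paper's route. The paper's proof is just the observation that the embedding $h^{(\varepsilon)}_c$ has a removable singularity at $\varepsilon=0$, via the series for $\frac{2}{\varepsilon}\log(1-\varepsilon g_\varepsilon)$, so that the analytic implicit function theorem applies, followed by Taylor expansion at $\varepsilon=1/n$. Your version fills in details the paper leaves implicit: exact removability via $\varphi(z)=z^{-1}\log(1-z)$ on a two-sided neighbourhood of $\varepsilon=0$, and the check that intermediate orbit points stay on the principal branch. These additions do not change the argument.
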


This is the implicit function theorem applied to the analytic embedding. It applies to any nondegenerate limiting threshold: flip bifurcations, tangencies such as the monotone-loss boundary, superstable parameters, saddle-node window boundaries, and hyperbolic periodic orbits.

\begin{figure}[t]
    \centering
    \includegraphics[width=0.85\linewidth]{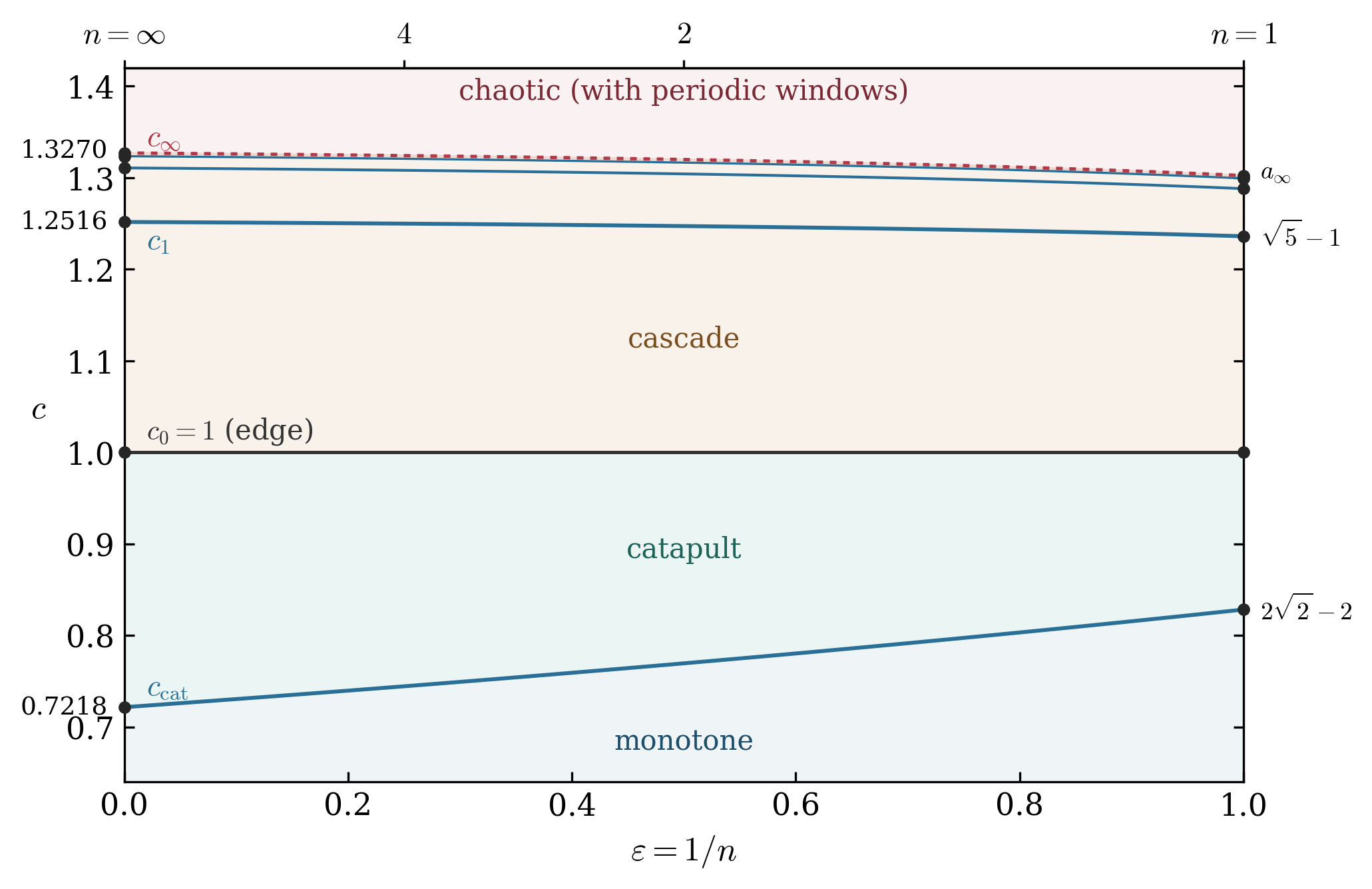}
    \caption{The middle ground as an analytic family. Phase boundaries of the quotient maps in the \((\varepsilon,c)\)-plane, \(\varepsilon=1/n\). The solid curves --- the catapult tangency \(c_{\mathrm{cat}}\), the edge \(c_0=1\), and the flip thresholds \(c_1,c_2,c_3\) --- are analytic branches joining the cubic constants at \(n=1\) to the Ricker constants at \(n=\infty\), as predicted by Proposition~\ref{prop:analytic-interpolation}. The accumulation boundary \(c_\infty\) is computed numerically; its analytic continuation is not asserted here.}
    \label{fig:middle-ground-phase}
\end{figure}

For example, the loss of stability of the two-cycle has limiting value
\[
    C_2(0)=1.251645\ldots .
\]
Proposition~\ref{prop:analytic-interpolation} gives the large-depth expansion
\[
    C_2(1/n)
    =
    C_2(0)
    +
    O\!\left(\frac1n\right),
\]
and the actual learning-rate threshold is
\[
    a_2(n)=\frac{C_2(1/n)}{n}.
\]
Numerically, this branch continues all the way to the cubic endpoint,
\[
    C_2(1)=\sqrt5-1,
\]
and the expansion begins as
\[
    C_2(1/n)
    =
    C_2(0)
    -
    \frac{0.00593\ldots}{n}
    +
    O\!\left(\frac1{n^2}\right).
\]
Likewise, the monotone-loss tangency continues from
\[
    C_{\mathrm{cat}}(0)=0.721813\ldots
\]
to the cubic value
\[
    C_{\mathrm{cat}}(1)=2\sqrt2-2 .
\]
For each fixed level \(j\), the same mechanism applies to the \(j\)-th flip in the period-doubling cascade. What it does not by itself prove is uniformity as \(j\to\infty\), and hence it does not by itself establish analytic continuation of the accumulation curve \(c_\infty\).

\subsection{Critical Orbits and Singer's Theorem}
\label{sec:critical-orbits-singer}

The critical orbit is the natural probe of the attracting dynamics. For the finite-depth quotient map, write
\[
    h_{c,n}(u)
    =
    u\,b_{c,n}(u)^{2n},
    \qquad
    b_{c,n}(u)
    =
    1-\frac{c}{n}u^{(n-1)/n}(u-1).
\]
The principal branch is the interval where
\[
    b_{c,n}(u)>0.
\]
Let \(u_{\mathrm{out}}(c,n)\) be the first positive zero of \(b_{c,n}\):
\[
    b_{c,n}(u_{\mathrm{out}})=0.
\]
Equivalently,
\[
    \frac{c}{n}u_{\mathrm{out}}^{(n-1)/n}(u_{\mathrm{out}}-1)=1,
    \qquad
    u_{\mathrm{out}}(c,n)\sim \sqrt{\frac{n}{c}} .
\]
This is the outer endpoint of the principal branch.

On this branch,
\[
    h_{c,n}'(u)
    =
    b_{c,n}(u)^{2n-1}
    \left[
        1+\frac{c}{n}u^{(n-1)/n}
        \bigl((2n-1)-(4n-1)u\bigr)
    \right].
\]
Thus the interior critical points are determined by
\[
    \varphi_{c,n}(u)=1,
\]
where
\[
    \varphi_{c,n}(u)
    :=
    \frac{c}{n}u^{(n-1)/n}
    \bigl((4n-1)u-(2n-1)\bigr).
\]
The function \(\varphi_{c,n}\) is negative up to
\[
    u=\frac{2n-1}{4n-1},
\]
and then strictly increases to \(+\infty\). Hence it crosses the level \(1\) exactly once. Moreover, this crossing lies strictly inside the principal branch: at \(u_{\mathrm{out}}\),
\[
    \varphi_{c,n}(u_{\mathrm{out}})
    >
    \frac{c}{n}u_{\mathrm{out}}^{(n-1)/n}(u_{\mathrm{out}}-1)
    =
    1 .
\]
Finally, the critical point is nondegenerate. On the principal branch,
\[
    h_{c,n}'(u)
    =
    b_{c,n}(u)^{2n-1}
    \bigl(1-\varphi_{c,n}(u)\bigr),
\]
and \(\varphi_{c,n}\) crosses the level \(1\) strictly increasingly at the critical point. Hence \(h_{c,n}'\) has a simple zero there, so the critical point is a quadratic maximum.

\begin{proposition}[Unique critical point on the principal branch]
\label{prop:unique-critical-point-finite}
For every \(c>0\) and every \(n\ge1\), the finite-depth quotient map \(h_{c,n}\) has exactly one critical point on the principal branch \(b_{c,n}(u)>0\). It is a nondegenerate maximum of the one-hump map on this branch and is characterized by
\[
    \frac{c}{n}u^{(n-1)/n}
    \bigl((4n-1)u-(2n-1)\bigr)
    =
    1 .
\]
\end{proposition}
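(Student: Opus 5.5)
The plan is to carry out the argument sketched just before the statement carefully, working throughout on the principal branch \(0<u<u_{\mathrm{out}}(c,n)\), where \(b_{c,n}>0\). First I would check that this branch really is the interval \((0,u_{\mathrm{out}})\). The function \(\psi(u)=\frac{c}{n}u^{(n-1)/n}(u-1)\) is \(\le 0\) on \((0,1]\). On \((1,\infty)\) it is strictly increasing, being a product of positive increasing factors, and it tends to \(\infty\). Hence \(b_{c,n}=1-\psi\) has a unique positive zero \(u_{\mathrm{out}}>1\), and \(b_{c,n}>0\) exactly on \((0,u_{\mathrm{out}})\).

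Next I would differentiate \(h_{c,n}=u\,b^{2n}\) to obtain \(h'=b^{2n-1}(b+2nub')\). Expanding \(b+2nub'\) gives \(1-\varphi_{c,n}(u)\). Concretely,
\[
2nu\psi'=2c\,u^{(n-1)/n}\bigl((2-1/n)u-(1-1/n)\bigr),
\]
and adding \(\psi\) produces \(\frac{c}{n}u^{(n-1)/n}((4n-1)u-(2n-1))\). This is routine bookkeeping. Since \(b^{2n-1}>0\) on the branch, the critical points there are exactly the solutions of \(\varphi_{c,n}(u)=1\) in \((0,u_{\mathrm{out}})\).

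For uniqueness I would split at \(u_0=\frac{2n-1}{4n-1}\). On \((0,u_0]\) we have \(\varphi\le 0<1\), so there is no root. On \((u_0,\infty)\) both factors \(u^{(n-1)/n}\) and \((4n-1)u-(2n-1)\) are positive and nondecreasing, with the second strictly increasing, so \(\varphi\) is strictly increasing and continuous. Its value at \(u_0\) is \(0\), and it tends to infinity, so it crosses the level \(1\) exactly once, at some \(u_{\mathrm{crit}}\).

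To place the crossing inside the branch I would compare with \(\psi\) at \(u_{\mathrm{out}}>1\). Since \(u_{\mathrm{out}}>1\) and \(n\ge1\),
\[
(4n-1)u_{\mathrm{out}}-(2n-1)-n(u_{\mathrm{out}}-1)=(3n-1)u_{\mathrm{out}}-(n-1)>0 .
\]
This gives \(\varphi(u_{\mathrm{out}})>n\psi(u_{\mathrm{out}})\cdot\frac1n\cdot n\), that is, \(\varphi(u_{\mathrm{out}})>\frac{c}{n}u_{\mathrm{out}}^{(n-1)/n}(u_{\mathrm{out}}-1)=1\). By monotonicity it follows that \(u_0<u_{\mathrm{crit}}<u_{\mathrm{out}}\). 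The paper's displayed inequality is exactly this comparison, and the step most likely to need care is getting its constants right. I would verify the bound \(4n-1\ge n\) together with the \((3n-1)u-(n-1)>0\) form, including the edge case \(n=1\), where \(u^{0}=1\).

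Finally, for nondegeneracy and the one-hump shape, I would use that on \((u_0,\infty)\) the function \(\varphi\) is differentiable with \(\varphi'>0\). Indeed, \(\varphi'\) contains the term \(\frac{c}{n}u^{(n-1)/n}(4n-1)>0\) plus a nonnegative term. Hence
\[
h''(u_{\mathrm{crit}})=-b^{2n-1}\varphi'(u_{\mathrm{crit}})<0,
\]
because the derivative of \(b^{2n-1}\) multiplies \(1-\varphi=0\) and so drops out. The sign pattern of \(1-\varphi\), positive before \(u_{\mathrm{crit}}\) and negative after it up to \(u_{\mathrm{out}}\), then shows that \(h_{c,n}\) increases and then decreases on the branch. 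So \(u_{\mathrm{crit}}\) is the unique critical point and a nondegenerate maximum.
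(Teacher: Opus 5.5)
Your proposal is correct and follows the paper's argument essentially step for step: the factorization \(h_{c,n}'=b_{c,n}^{2n-1}(1-\varphi_{c,n})\), strict monotonicity of \(\varphi_{c,n}\) beyond \((2n-1)/(4n-1)\), the comparison \(\varphi_{c,n}(u_{\mathrm{out}})>1\), and the simple zero giving a quadratic maximum. Your check that \(\{b_{c,n}>0\}=(0,u_{\mathrm{out}})\) and your explicit \(h''\) computation fill in details the paper leaves implicit; the only blemish is the garbled line at \(u_{\mathrm{out}}\), where your estimate actually yields \(\varphi_{c,n}(u_{\mathrm{out}})>n\,\psi(u_{\mathrm{out}})=n\ge 1\) (more directly, \(\varphi_{c,n}=\psi+2nu\psi'>\psi\) because \(\psi'>0\) on \((1,\infty)\)), which is all you need.
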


In the limit \(n\to\infty\), this critical point converges to the unique positive critical point of the Ricker map,
\[
    u_{\mathrm{crit}}(c)
    =
    \frac{1+\sqrt{1+4/c}}{4}.
\]
Thus the cubic map, the finite-depth quotient maps, and the Ricker limit are all organized by the same object: the forward orbit of a single interior critical point.

Recall that for a \(C^3\) interval map \(f\), the Schwarzian derivative is
\[
    Sf
    =
    \frac{f'''}{f'}
    -
    \frac32\left(\frac{f''}{f'}\right)^2,
\]
defined away from critical points. The relevance of the Schwarzian sign is Singer's theorem: for a negative-Schwarzian interval map, every attracting cycle has an immediate basin containing either a critical point or a boundary point. Thus, away from boundary attractors, stable finite-step cycles are detected by critical orbits. In the present setting, the orbit of the unique critical point is therefore not merely a plotting device; it is the canonical probe of the attracting dynamics.

\subsection{Negative Schwarzian at Finite Depth}
\label{sec:finite-schwarzian}

The remaining structural question is whether the finite-depth quotient maps lie in the same negative-Schwarzian class as the cubic map and the Ricker limit. This is a coordinate-dependent question: the Schwarzian derivative is not invariant under arbitrary smooth coordinate changes. The original \(x\)-coordinate becomes misleading for \(n\ge2\) near the flat center. The quotient coordinate \(u=x^{2n}\), by contrast, carries the loss and radial dynamics.

For the analytically embedded family, the negative-Schwarzian property holds uniformly on the principal branch.

\begin{proposition}[Negative Schwarzian on the principal branch]
\label{prop:schwarzian-all-n}
For every \(0\le\varepsilon\le1\) and every \(c>0\), the map \(h^{(\varepsilon)}_c\) satisfies
\[
    Sh^{(\varepsilon)}_c(u)<0
\]
on the principal branch, away from critical points. In particular, this holds for the cubic map, for every finite depth \(n\), and for the Ricker limit.
\end{proposition}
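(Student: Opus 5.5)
The plan is to exploit the composition rule for the Schwarzian,
\[
    S(f\circ g)=(Sf\circ g)\,(g')^2+Sg ,
\]
together with the standard facts that negative-Schwarzian maps are closed under composition and that a sum-of-logs structure gives a convenient formula. Concretely, write the map in logarithmic form
\[
    h^{(\varepsilon)}_c(u)=\exp\{\psi(u)\},
    \qquad
    \psi(u)=\log u+\frac{2}{\varepsilon}\log\bigl(1-\varepsilon g_\varepsilon(u)\bigr),
    \qquad
    g_\varepsilon(u)=c\,u^{1-\varepsilon}(u-1).
\]
Since \(S\exp=-\tfrac12\), the chain rule gives
\[
    Sh=S\psi-\tfrac12(\psi')^2 .
\]
So it suffices to show \(S\psi\le\tfrac12(\psi')^2\) with strict inequality away from critical points. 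For \(\varepsilon=0\) this reduces to the Ricker case, Proposition~\ref{prop:ricker-schwarzian}, which serves as a consistency check. For \(\varepsilon=1\) it should reproduce the cubic computation \(Sg_a<0\), transported to the quotient \(u=x^2\).

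I would rather not compute \(S\psi\) directly; I would use the classical sufficient criterion instead. If \(h'\) is positive on a branch and \(1/\sqrt{|h'|}\) is convex there, then \(Sh<0\). Equivalently, \(Sh<0\) wherever \(|h'|^{-1/2}\) is strictly convex on each monotone branch. Using the factorization from Section~\ref{sec:critical-orbits-singer},
\[
    h'=b^{2/\varepsilon-1}\bigl(1-\varphi\bigr),
    \qquad
    b=1-\varepsilon g_\varepsilon,
    \qquad
    \varphi=\varepsilon\text{-analogue of }\varphi_{c,n},
\]
we get
\[
    |h'|^{-1/2}=b^{-(1/\varepsilon-1/2)}\,|1-\varphi|^{-1/2}.
\]
The key step is then a logarithmic-derivative bookkeeping. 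Set \(\log|h'|=(2/\varepsilon-1)\log b+\log|1-\varphi|\). Then
\[
    Sh=\tfrac{d^2}{du^2}\log|h'|-\tfrac12\Bigl(\tfrac{d}{du}\log|h'|\Bigr)^2 .
\]
Expanding, \(Sh\) splits into three pieces: a \(b\)-part, a \((1-\varphi)\)-part, and a cross term.

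For the \(b\)-part, \(b\) is positive and concave or nearly so on the principal branch. Indeed \(g_\varepsilon\) is convex for \(u\ge\) its inflection point, and the power \(-(1/\varepsilon-1/2)\) of a positive concave function is convex. This makes the \(b\)-contribution nonpositive, with coefficient \((2/\varepsilon-1)\), so it is favourable. For the \(\varphi\)-part, the analogous statement is that \(\varphi\) is affine-times-power and monotone on the relevant range. This gives \(S(1-\varphi)\le0\) away from its zero, plus a negative pole \(-\tfrac32(\varphi')^2/(1-\varphi)^2\) near the critical point.

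The main obstacle I expect is twofold. The first is the region \(0<u<\tfrac{1-\varepsilon}{\ldots}\) near the origin, where the factor \(u^{1-\varepsilon}\) has unbounded derivatives and \(g_\varepsilon\) is not convex. The second is controlling the cross term \(-(2/\varepsilon-1)(\log b)'(\log|1-\varphi|)'\), whose sign changes at the critical point.

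My first attempt would be to substitute \(t=u^{\varepsilon}\)-type variables or clear denominators. This should reduce \(Sh\cdot(\text{positive denominator})\) to an explicit polynomial-like expression in \(u\), \(u^{-\varepsilon}\) and \(c\). I would then show that every coefficient is nonpositive for \(0\le\varepsilon\le1\), \(c>0\), checking the two endpoints \(\varepsilon=0\) and \(\varepsilon=1\) against the known Ricker and cubic formulas. If a sign-definite decomposition fails, I would fall back on monotonicity in \(\varepsilon\) of the numerator, since it is analytic in \(\varepsilon\) by Proposition~\ref{prop:analytic-interpolation}'s setting.
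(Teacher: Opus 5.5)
Your first step matches the paper's: write \(h=e^\psi\) and use \(Sh=S\psi-\tfrac12(\psi')^2\). After that, however, the proposal is a plan whose concrete steps do not go through, and the decisive inequality is missing.

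The split of \(\log|h'|\) into a \(b\)-part, a \((1-\varphi)\)-part and a cross term cannot be signed piece by piece. On the decreasing branch, where \(\varphi>1\), one has \(b'<0\) and \(\varphi'>0\). So the cross term \(-(2/\varepsilon-1)(\log b)'(\log|1-\varphi|)'\) is strictly positive there. The \((1-\varphi)\)-part equals \(\varphi''/(\varphi-1)-\tfrac32\varphi'^2/(\varphi-1)^2\), and its first term is nonnegative because \(\varphi\) is convex. Showing that these positive contributions are dominated is the whole content of the proposition, not a bookkeeping step.

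Your plan of clearing denominators and checking that every coefficient is nonpositive fails already at \(\varepsilon=0\). There \(2u^4\psi'^2\,Sh=4A-A^4-3B^2\) with \(A=1+2cu-4cu^2\) and \(B=1+4cu^2\). Its \(c^3\)-part is \(32c^3u^3(2u-1)^3\), which is positive for \(u>1/2\) and contains the monomial \(+256c^3u^6\). The fallback is not an argument either: analyticity in \(\varepsilon\) gives no monotonicity, and the principal branch itself moves with \(\varepsilon\). Also, \(g_\varepsilon\) is convex on all of \((0,\infty)\), since \(g_\varepsilon''=c(1-\varepsilon)u^{-1-\varepsilon}((2-\varepsilon)u+\varepsilon)\ge0\). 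What changes near the origin is the sign of \(g_\varepsilon'\), not convexity.

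The missing idea is an absorption argument that does not work coefficient by coefficient.

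\emph{Setup.} With \(A=u\psi'\), \(B=-u^2\psi''\), \(C=u^3\psi'''\), the quantity \(F=2u^4\psi'^2\,Sh=2CA-3B^2-A^4\) has the sign of \(Sh\). Set \(\kappa=2cu^{1-\varepsilon}/b>0\) and \(\hat x=(2-\varepsilon)u-(1-\varepsilon)\), so that \(b'=-\varepsilon cu^{-\varepsilon}\hat x\) and \(A=1-\kappa\hat x\). The paper then obtains the exact factorization \(F=\kappa\bigl[f_1+f_2\kappa+(1-\tfrac{\varepsilon^2}{4})\hat x^3\kappa^2(4-\kappa\hat x)\bigr]\). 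Here \(f_1\le0\), and \(f_2\) is explicit in \(\hat x\) and two nonnegative quantities \(\hat y,\tilde z\) coming from \(b''\) and \(b'''\).

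\emph{Case \(\hat x\ge0\).} The positive cubic term is absorbed by \(X(4-X)\le4\) with \(X=\kappa\hat x\), which plays the role of \(4A-A^4\le3\) in the Ricker proof. This leaves \(f_1+\kappa(f_2+4\hat x^2)\), which is manifestly negative.

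\emph{Case \(\hat x<0\).} This is the region \(u<(1-\varepsilon)/(2-\varepsilon)\), where \(b\) is increasing. It needs a separate argument because \(f_2\) then contains the positive terms \(-3\varepsilon\hat x\hat y-2\hat x\tilde z\). The paper shows \(f_2<0\) there by reducing \(-f_2/\hat x^2\) to a quadratic, in a suitable variable, with negative discriminant.

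Neither step appears in your proposal. Finally, the \(\varepsilon=1\) endpoint does not follow by transporting \(Sg_a<0\) through \(u=x^2\), since the Schwarzian is not invariant under that change of variables. It holds directly because \(u(1-a(u-1))^2\) is a cubic polynomial with two real critical points.
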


The proof is given in Appendix~\ref{app:finite-depth-schwarzian}. Combining Proposition~\ref{prop:schwarzian-all-n} with Singer's theorem gives the structural consequence.

\begin{corollary}[Critical orbit controls attracting cycles]
\label{cor:singer-all-n}
At every finite depth, any attracting cycle of \(h_{c,n}\) whose immediate basin lies on the principal branch attracts either the unique interior critical point or a boundary point. In the post-edge regimes considered here, the relevant attractors are interior, so the critical orbit organizes the bifurcation diagram uniformly in \(n\).
\end{corollary}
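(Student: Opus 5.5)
The corollary should follow by specializing Proposition~\ref{prop:schwarzian-all-n} to \(\varepsilon=1/n\) and feeding the result into Singer's theorem on a suitable invariant interval. I would carry this out in four steps.

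First, I fix the interval on which Singer's theorem is applied. Let \(J=(0,u_{\mathrm{out}}(c,n))\) be the principal branch. On \(J\) we have \(b_{c,n}>0\), so \(h_{c,n}=u\,b_{c,n}^{2n}\) is real-analytic there, in particular \(C^3\). Since the statement only concerns attracting cycles whose immediate basin lies on the principal branch, I restrict to such a cycle \(\mathcal O\) with immediate basin \(B\subset J\). Each component of \(B\) is an open interval, and \(h_{c,n}\) permutes these components cyclically.

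Second, I verify the hypotheses of Singer's theorem on \(J\). By Proposition~\ref{prop:unique-critical-point-finite}, \(h_{c,n}\) has exactly one critical point \(u_\ast\) in \(J\). By Proposition~\ref{prop:schwarzian-all-n} with \(\varepsilon=1/n\in(0,1]\), the Schwarzian \(Sh_{c,n}\) is negative on \(J\setminus\{u_\ast\}\). This is precisely the input Singer's argument requires.

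Third, I recall how the argument goes, to confirm it only needs these local properties along the basin and not global invariance of \(J\). Let \(p\) be the period of \(\mathcal O\) and \(F=h_{c,n}^p\). By the chain rule for the Schwarzian,
\[
    S(f\circ g)=(Sf\circ g)\,(g')^2+Sg ,
\]
so \(SF<0\) wherever \(F'\neq0\). Take a component \(I\) of \(B\) containing a cycle point. Suppose neither \(I\) nor any of its forward images contains \(u_\ast\), and none has an endpoint on \(\partial J\). Then \(F\) is a diffeomorphism of \(I\) onto \(I\). The endpoints of \(I\) are then fixed by \(F\) or swapped by it, and are not attracting. Negative Schwarzian means \(|F'|^{-1/2}\) is convex, so \(|F'|\) has no interior minimum. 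Combined with \(|F'|<1\) at the attracting point and \(|F'|\ge1\) forced at the endpoints, this gives a contradiction, as in the standard Singer proof. Hence some \(h_{c,n}^j(I)\) contains \(u_\ast\), which means \(u_\ast\) lies in the immediate basin, or the basin reaches \(\partial J\). The boundary of \(J\) consists of \(0\) and \(u_{\mathrm{out}}\), and these are the boundary points in the statement.

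Fourth, I address the second sentence, which is interpretive. I would argue that the boundary alternatives cannot carry post-edge attractors. The point \(0\) is a repelling fixed point, since \(h_{c,n}'(0)=1\) with \(h_{c,n}(u)>u\) for small \(u\) when \(c>0\). The point \(u_{\mathrm{out}}\) maps to \(0\). So an attracting cycle containing a boundary point, or whose basin only accumulates on one, would have to involve \(0\). That is excluded, leaving \(u_\ast\).

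The main obstacle I expect is the boundary case, namely making rigorous that a basin touching \(\partial J\) cannot hold an interior post-edge cycle without also containing \(u_\ast\). I would handle it by the standard refinement: extend \(h_{c,n}\) to \([0,u_{\mathrm{out}}]\) and note that \(0\) is not attracting. The remaining claim, that the critical orbit organizes the diagram uniformly in \(n\), then follows because the constants in Propositions~\ref{prop:unique-critical-point-finite} and~\ref{prop:schwarzian-all-n} do not depend on \(n\).
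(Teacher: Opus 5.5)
Your Steps 1--3 are the paper's argument. The paper proves the corollary in one line by combining Proposition~\ref{prop:schwarzian-all-n} at \(\varepsilon=1/n\) and the unique critical point of Proposition~\ref{prop:unique-critical-point-finite} with Singer's theorem. It states the second sentence as an interpretation rather than proving it. Your remark that Singer's argument only needs \(SF<0\) along the immediate basin, so that forward invariance of the principal branch is never used, usefully sharpens that line.

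Step 4 goes beyond the paper, and it does not work as written.

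\begin{itemize}
\item Excluding cycles or basins that \emph{contain} \(0\) or \(u_{\mathrm{out}}\) is fine, since both points end at the non-attracting fixed point \(0\).
\item The inference that a basin which merely accumulates on a boundary point \emph{would have to involve \(0\)} is a non sequitur. A component \((0,\beta)\) says nothing about the cycle containing \(0\).
\item Your repair, extending to \([0,u_{\mathrm{out}}]\) and citing Singer there, needs \(h_{c,n}([0,u_{\mathrm{out}}])\subset[0,u_{\mathrm{out}}]\). This fails once the critical value exceeds \(u_{\mathrm{out}}\). At \(n=1\) the critical value is \(4(1+c)^3/(27c)\) and \(u_{\mathrm{out}}=(1+c)/c\), so invariance fails exactly for \(c>\sqrt{27}/2-1\). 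At every depth it fails for large \(c\).
\item A minor point: \(h_{c,n}'(0)=1\) only for \(n\ge2\). At \(n=1\) it equals \((1+c)^2\).
\end{itemize}

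The fix is to stay with your own local argument from Step 3.

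\begin{itemize}
\item If a basin component has endpoint \(u_{\mathrm{out}}\), continuity and \(h_{c,n}(u_{\mathrm{out}})=0\) force the next component to be some \((0,\beta)\). So it suffices to treat that case, with \(F=h_{c,n}^p\), \(F(0)=0\) and \(F(\beta)\in\{0,\beta\}\).
\item If \(F(\beta)=0\), Rolle's theorem gives a zero of \(F'\) in \((0,\beta)\). Since all forward images lie on the principal branch, some image meets \(u_\ast\), the only critical point of \(h_{c,n}\) there.
\item If \(F(\beta)=\beta\), apply the mean value theorem on \([0,x]\) and \([x,\beta]\), with \(x\) the cycle point. This gives \(y_1<x<y_2\) with \(F'(y_1)=F'(y_2)=1>|F'(x)|\). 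That contradicts the minimum principle for \(SF<0\) unless \(F\) has a critical point in \((y_1,y_2)\).
\end{itemize}

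Only continuity of \(F\) at \(0\) is used. This shows that, for basins inside the principal branch, the boundary alternative is in fact vacuous, which is the conclusion you were after.
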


The depth dependence changes the shape of the one-hump map and shifts the thresholds, but it does not introduce hidden attracting cycles on the principal branch.

\subsection{The Outer Scale}
\label{sec:outer-scale}

The universal dynamics lives on the \(u=O(1)\) scale. The finite-depth phenomenon absent from the Ricker limit is escape, and it occurs on a second scale. Let
\[
    u=s\sqrt{\frac{n}{c}},
    \qquad
    s=O(1).
\]
Then
\[
    b_{c,n}(u)
    =
    1-\frac{c}{n}u^{(n-1)/n}(u-1)
    =
    1-s^2+o(1).
\]
Therefore the quotient update
\[
    h_{c,n}(u)=u\,b_{c,n}(u)^{2n}
\]
has a sharp dichotomy. If
\[
    |1-s^2|<1,
    \qquad\text{equivalently}\qquad
    0<s<\sqrt2,
\]
then the factor \(b_{c,n}(u)^{2n}\) collapses exponentially and the iterate is sent back toward the inner window. If
\[
    s>\sqrt2,
\]
then \(|b_{c,n}(u)|>1\), and the iterate escapes outward in one step.

The escape boundary is determined by
\[
    b_{c,n}(u_{\mathrm{esc}})=-1,
\]
or equivalently
\[
    \frac{c}{n}u_{\mathrm{esc}}^{(n-1)/n}(u_{\mathrm{esc}}-1)=2.
\]
Hence
\[
    u_{\mathrm{esc}}(c,n)
    =
    \sqrt{\frac{2n}{c}}
    \left(1+O(n^{-1/2})\right).
\]
In the logarithmic coordinate \(z=\log u\), this boundary is
\[
    z_{\mathrm{esc}}(c,n)
    =
    \frac12\log\frac{2n}{c}
    +
    O(n^{-1/2}).
\]
Thus escape is pushed to infinity in the depth-limit coordinate \(z\). In the original parameter \(x\), however,
\[
    x_{\mathrm{esc}}
    =
    u_{\mathrm{esc}}^{1/(2n)}
    =
    \exp\left(\frac{z_{\mathrm{esc}}}{2n}\right)
    =
    1+\frac{1}{4n}\log\frac{2n}{c}
    +
    o\!\left(\frac1n\right).
\]
So the finite-depth escape cliff is far away on the quotient scale, but only logarithmically farther from the minimum in the original \(x\)-coordinate.

Between the universal \(u=O(1)\) window and the escape boundary there is no new finite-step dynamics of the same kind as the Ricker cascade. The outer region acts as a one-step sorter: points below the escape boundary are thrown back inward, while points above it escape.

\begin{proposition}[Outer escape scale]
\label{prop:outer-escape-scale}
At finite depth, escape occurs on the scale
\[
    u\asymp\sqrt{\frac{n}{c}}.
\]
More precisely, the one-step escape boundary satisfies
\[
    u_{\mathrm{esc}}(c,n)
    =
    \sqrt{\frac{2n}{c}}
    \left(1+O(n^{-1/2})\right).
\]
Equivalently,
\[
    x_{\mathrm{esc}}
    =
    1+\frac{1}{4n}\log\frac{2n}{c}
    +
    o\!\left(\frac1n\right).
\]
Thus escape is pushed to infinity in the \(u=O(1)\) depth-limit dynamics, even though it remains close to the minimum in the original \(x\)-coordinate.
\end{proposition}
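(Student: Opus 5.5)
The plan is to analyze the defining equation for the escape boundary directly, as a monotone scalar equation in \(u\), and then convert its asymptotics to the \(x\)-coordinate. Write
\[
    \psi_n(u)=u^{(n-1)/n}(u-1),
\]
which is strictly increasing on \((1,\infty)\) from \(0\) to \(\infty\). The condition \(b_{c,n}(u_{\mathrm{esc}})=-1\) is equivalent to \(\psi_n(u_{\mathrm{esc}})=2n/c\). This equation therefore has a unique solution \(u_{\mathrm{esc}}>1\), and this solution lies beyond \(u_{\mathrm{out}}\), which solves \(\psi_n=n/c\). The first step is to justify that this point is indeed the one-step sorting boundary. For \(u_{\mathrm{out}}<u<u_{\mathrm{esc}}\) one has \(-1<b_{c,n}(u)<0\), so \(b_{c,n}^{2n}<1\). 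For \(u>u_{\mathrm{esc}}\) one has \(|b_{c,n}|>1\). Thus \(h_{c,n}(u)=u\,b_{c,n}^{2n}\) exceeds \(u\) precisely beyond \(u_{\mathrm{esc}}\), and the factor grows like \(\exp(2n\log|b|)\). This matches the dichotomy derived for the outer scale just before the statement.

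Second, I will obtain the asymptotics by substituting \(u=s\sqrt{2n/c}\) and solving for \(s\). Writing \(u^{(n-1)/n}=u\cdot u^{-1/n}\), the equation becomes
\[
    u^2\Bigl(1-\frac1u\Bigr)u^{-1/n}=\frac{2n}{c}.
\]
For \(u\asymp\sqrt n\) the correction factors behave as follows:
\[
    1-\frac1u=1+O(n^{-1/2}),
    \qquad
    u^{-1/n}=\exp\Bigl(-\frac{\log u}{n}\Bigr)=1+O\Bigl(\frac{\log n}{n}\Bigr).
\]
The second is absorbed into \(O(n^{-1/2})\). Monotonicity of \(\psi_n\) gives a bracketing argument: evaluate \(\psi_n\) at \(s=1\pm Kn^{-1/2}\) for a large constant \(K\) and check the sign of \(\psi_n-2n/c\). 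This yields \(s=1+O(n^{-1/2})\), and hence
\[
    u_{\mathrm{esc}}=\sqrt{\frac{2n}{c}}\bigl(1+O(n^{-1/2})\bigr).
\]
The leading relative correction is actually \(1/(2u)\), which is \(\asymp n^{-1/2}\), so the stated order is sharp. This also establishes the scale \(u\asymp\sqrt{n/c}\).

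Third, taking logarithms gives
\[
    z_{\mathrm{esc}}=\tfrac12\log(2n/c)+O(n^{-1/2}),
\]
and then
\[
    x_{\mathrm{esc}}=u_{\mathrm{esc}}^{1/(2n)}=\exp\bigl(z_{\mathrm{esc}}/(2n)\bigr).
\]
Since \(z_{\mathrm{esc}}/(2n)=O(\log n/n)\to0\), expanding \(e^t=1+t+O(t^2)\) gives
\[
    x_{\mathrm{esc}}=1+\frac{1}{4n}\log\frac{2n}{c}+O\bigl(n^{-3/2}\bigr)+O\bigl((\log n)^2/n^2\bigr),
\]
which is \(1+\frac{1}{4n}\log\frac{2n}{c}+o(1/n)\) for fixed \(c\).

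The final claim, that escape is pushed to infinity in the \(u=O(1)\) dynamics, follows because \(u_{\mathrm{esc}}\to\infty\) on every compact set. This is consistent with the locally uniform convergence to \(h_c\) in the Ricker-type depth limit proposition.

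The main obstacle is the first step: making "one-step escape" precise. I need to show that the boundary defined by \(b=-1\) is the correct threshold, rather than only the first zero \(u_{\mathrm{out}}\), and to handle the odd power \(2n-1\) in the sign of \(b\) on \((u_{\mathrm{out}},\infty)\). I will handle this through the monotonicity of \(\psi_n\) together with the explicit factorization \(h_{c,n}(u)/u=b_{c,n}^{2n}\).
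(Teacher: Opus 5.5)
Your proposal is correct and follows essentially the paper's route: you locate the escape boundary through \(b_{c,n}(u_{\mathrm{esc}})=-1\), i.e. \(\frac{c}{n}u^{(n-1)/n}(u-1)=2\), solve it on the \(\sqrt{n}\) scale, and pass to \(z=\log u\) and \(x=e^{z/(2n)}\). Your monotonicity and bracketing argument, and the exact comparison of \(h_{c,n}(u)\) with \(u\) on either side of \(u_{\mathrm{esc}}\), make rigorous what the paper only motivates through \(b_{c,n}=1-s^2+o(1)\). The odd power \(2n-1\) that you flag as the main obstacle is not actually an issue: \(h_{c,n}(u)=u\,b_{c,n}^{2n}\) involves only the even power, so the sign of \(b_{c,n}\) never enters the escape criterion.
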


The middle ground therefore contains no additional organizing principle between the cubic base case and the Ricker limit. Its role is to show that the whole depth family is one coherent object. The thresholds move smoothly with reciprocal depth, the attracting dynamics is controlled by a single critical orbit on the principal branch, and finite-depth escape lives on a separate outer scale. Depth changes the scale and the location of thresholds, but not the dynamical mechanism.

\subsection{History and Related Work}
\label{sec:classical-context}

The structure assembled in this section --- one-hump maps, critical orbits, negative Schwarzian derivative --- is the core toolkit of one-dimensional dynamics. The Schwarzian derivative entered interval dynamics through Singer~\cite{singer1978}, who proved that for a map with negative Schwarzian derivative every attracting cycle attracts a critical point or a boundary point. This is what makes the critical orbit the canonical probe and bounds the number of coexisting attractors by the number of critical points. The analytical reason the condition is powerful is distortion control: negative Schwarzian derivative yields Koebe-type bounds on cross-ratios, the engine of the modern theory as developed in de Melo and van Strien~\cite{demelo1993}. Guckenheimer~\cite{guckenheimer1979} and Misiurewicz~\cite{misiurewicz1981} built the structure theory of unimodal maps on this hypothesis, including absence of wandering intervals, sensitivity, and the classification of attractors. Kozlovski~\cite{kozlovski2000} later showed that for smooth maps with nondegenerate critical points the condition can often be dispensed with; in our family it is not an assumption but a theorem, Proposition~\ref{prop:schwarzian-all-n}, valid uniformly in depth.

The period-doubling cascade and its universal accumulation belong to Feigenbaum~\cite{feigenbaum1978} and Coullet--Tresser~\cite{coullet1978}, with the renormalization picture made rigorous by Lanford~\cite{lanford1982}, Sullivan, McMullen~\cite{mcmullen1996}, and Lyubich~\cite{lyubich1999}. Since the critical point of \(h_{c,n}\) is quadratic at every depth, the family lies in the standard universality class. The numerically observed gap ratios are consistent with convergence to the Feigenbaum constant. Finally, the Ricker map itself is a classical object of population dynamics~\cite{ricker1954stock}, popularized as a canonical one-hump family by May~\cite{may1976simple}; here it is not a model choice but the exact large-depth limit of gradient descent.
\part{Learning Dynamics}
\label{part:learning-dynamics}
\section{Two-Factor Model}
\label{sec:beyond-scalar-chain}

\subsection{Depth-Two Neural Chain}

The scalar chain isolates an invariant symmetric trajectory of the deep linear model. 
This made the preceding analysis possible, but it also removed the factorization directions present in the full parameter space. 
We now return to the smallest model in which these directions are visible: the rank-one, two-factor loss
\[
    \ell(x,y)=\frac12(xy-1)^2,
    \qquad
    r=xy-1 .
\]
Gradient descent with step size \(a\) gives
\begin{align}
    x^+=x-a r y,
    \qquad
    y^+=y-a r x .
    \label{eq:two-factor-gd}
\end{align}
The diagonal \(x=y\) is invariant. 
Writing \(x=y=s\), the two-factor dynamics restricts exactly to the scalar quartic map:
\begin{align}
    s^+
    =
    s-a\,s(s^2-1)
    =
    g_a(s).
    \label{eq:two-factor-diagonal-map}
\end{align}
Thus the quartic map studied above is not merely analogous to the two-factor model; it is the exact balanced restriction of the full two-dimensional dynamics. 
The purpose of this section is to identify which conclusions of the scalar analysis persist away from the diagonal, and which effects are created by the additional factorization direction.
\subsection{Finite-Step Balancing}
\label{sec:balancing-versus-sharpening}

The natural coordinates for the off-diagonal dynamics are the residual
\[
    r=xy-1
\]
and the imbalance
\[
    \Delta=x^2-y^2 .
\]
Gradient flow conserves \(\Delta\). 
Finite-step gradient descent does not, and the resulting drift toward balance is the main effect in this subsection.

We shall also use
\[
    S=x^2+y^2 .
\]
Since \(xy=1+r\), this can be written as
\[
    S
    =
    \sqrt{\Delta^2+4(1+r)^2}.
\]
The residual and imbalance form a closed two-dimensional system:
\begin{align}
    r^+
    &=
    r\Bigl(1-aS+a^2r(1+r)\Bigr),
    \notag\\
    \Delta^+
    &=
    (1-a^2r^2)\Delta.
    \label{eq:imbalance-update}
\end{align}
Thus residual excursions contract \(|\Delta|\) whenever
\[
    0<|ar|<\sqrt2,
\]
and preserve the sign of \(\Delta\) whenever
\[
    |ar|<1 .
\]
This is the basic finite-step balancing mechanism: motion in the residual direction produces contraction in the factorization direction.

On the solution valley \(r=0\), the Hessian has one zero tangential eigenvalue and one nonzero normal eigenvalue
\[
    \lambda(\Delta)
    =
    \sqrt{\Delta^2+4}.
\]
This is the normal sharpness of the valley point. 
The valley point is normally stable for gradient descent exactly when
\[
    |1-a\lambda(\Delta)|<1.
\]
Thus, for \(0<a<1\), the normally stable arc of the valley is
\[
    |\Delta|<\Delta^*(a),
    \qquad
    \Delta^*(a)
    =
    \frac{2}{a}\sqrt{1-a^2}.
\]
At \(a=1\), the balanced point \(\Delta=0\) is at the flip threshold. 
For \(a>1\), no point of the solution valley is normally stable.

The same mechanism also determines the small-step endpoint selected on the solution valley. 
For \(\delta>0\) and \(r_0>0\), set
\[
    J(\delta,r_0)
    =
    \int_{1}^{1+r_0}
    \frac{w-1}{\sqrt{\delta^2+4w^2}}\,dw .
\]
Equivalently,
\[
    J(\delta,r_0)
    =
    \left[
        \frac14\sqrt{\delta^2+4w^2}
        -
        \frac12\operatorname{asinh}\!\left(\frac{2w}{\delta}\right)
    \right]_{w=1}^{w=1+r_0}.
\]

\begin{proposition}[Finite-step balancing]
\label{prop:finite-step-balancing}
For the two-factor gradient descent map \eqref{eq:two-factor-gd}, the following hold.

\emph{Exact imbalance identity.}
The imbalance evolves according to
\[
    \Delta^+=(1-a^2r^2)\Delta .
\]
Thus residual excursions contract \(|\Delta|\) whenever \(0<|ar|<\sqrt2\), and preserve \(\operatorname{sign}(\Delta)\) whenever \(|ar|<1\).

\emph{Stable arc.}
The solution valley \(r=0\) is normally stable exactly for
\[
    |\Delta|<\Delta^*(a)
\]
when \(0<a<1\). 
At \(a=1\), the balanced point is at the flip threshold. 
For \(a>1\), no point of the valley is normally stable.

\emph{Small-step endpoint.}
Let \(r_0>0\) and \(\Delta_0\ne0\). 
If the small-step trajectory converges to the solution valley, then
\begin{align}
    \log\frac{|\Delta_\infty|}{|\Delta_0|}
    =
    -aJ(|\Delta_0|,r_0)+O(a^2).
    \label{eq:smallstep-balancing-main}
\end{align}
Since \(J(|\Delta_0|,r_0)>0\), the selected endpoint has smaller imbalance than the corresponding gradient-flow endpoint. 
Because the nonzero Hessian eigenvalue along the valley is
\[
    \lambda(\Delta)=\sqrt{\Delta^2+4},
\]
the selected endpoint also has smaller normal curvature.
\end{proposition}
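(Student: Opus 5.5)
The plan is to verify the three claims in order. The first two are direct algebra and linearization. The third is a sum-to-integral comparison, and that is where the real work lies.

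\emph{Exact imbalance identity.} I would expand \((x^+)^2-(y^+)^2\) directly from \eqref{eq:two-factor-gd}. The cross terms \(\mp 2arxy\) cancel, and what remains is \(x^2-y^2+a^2r^2(y^2-x^2)=(1-a^2r^2)\Delta\). The two consequences follow from the multiplier. First, \(|1-a^2r^2|<1\) is equivalent to \(0<a^2r^2<2\). Second, \(1-a^2r^2>0\) is equivalent to \(|ar|<1\). For completeness I would also derive the residual update from \(x^+y^+=xy-ar(x^2+y^2)+a^2r^2xy\), which gives the stated formula for \(r^+\).

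\emph{Stable arc.} Since \(\nabla\ell=r\,(y,x)\), the Hessian is
\[
    H=vv^\top+r\begin{pmatrix}0&1\\1&0\end{pmatrix},
    \qquad v=(y,x).
\]
On the valley \(r=0\) this reduces to \(H=vv^\top\). Its eigenvalues are \(0\) (tangent to the valley) and \(|v|^2=S=\sqrt{\Delta^2+4}\). The Jacobian \(I-aH\) of the gradient map therefore has eigenvalues \(1\) and \(1-a\lambda(\Delta)\). Normal stability \(|1-a\lambda|<1\) is equivalent to \(\lambda<2/a\), that is, \(\Delta^2<4/a^2-4\), which is the condition \(|\Delta|<\Delta^*(a)\) when \(a<1\). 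Because \(\lambda\ge2\), with equality only at \(\Delta=0\), the case \(a=1\) puts the balanced point exactly at \(1-a\lambda=-1\), and the case \(a>1\) gives \(\lambda\ge 2>2/a\) everywhere on the valley.

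\emph{Small-step endpoint.} Iterating the identity gives the exact formula
\[
    \log\frac{|\Delta_\infty|}{|\Delta_0|}=\sum_k\log(1-a^2r_k^2).
\]
For small \(a\) and bounded \(r\), each term equals \(-a^2r_k^2+O(a^4r_k^4)\). It then suffices to show
\[
    a^2\sum_k r_k^2=aJ(|\Delta_0|,r_0)+O(a^2).
\]

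To do this I would compare the residual iteration with its gradient-flow limit in rescaled time \(t=ka\). To leading order, \(r^+=r-aSr+O(a^2r)\) and \(\Delta\) changes only by \(O(a^2r^2)\) per step, so \(\Delta\equiv\Delta_0\) up to an \(O(a)\) error that enters only at the next order. The flow limit is \(\dot r=-S(\Delta_0,r)\,r\), which yields
\[
    a\sum_k r_k^2\approx\int_0^\infty r^2\,dt=\int_0^{r_0}\frac{r}{S}\,dr .
\]
Substituting \(w=1+r\) turns this into exactly \(J(|\Delta_0|,r_0)\). The closed form for \(J\) then follows by differentiating the bracket: one checks that \(\frac{d}{dw}\bigl[\tfrac14\sqrt{\delta^2+4w^2}-\tfrac12\operatorname{asinh}(2w/\delta)\bigr]=(w-1)/\sqrt{\delta^2+4w^2}\). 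Positivity of \(J\) is immediate because the integrand is positive for \(w>1\).

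\emph{Main obstacle.} The hard step is making the \(O(a^2)\) error uniform over infinitely many steps. I would handle it in three moves.
\begin{enumerate}
    \item For small \(a\), \(S\ge 2\) and \(r_k\ge0\) decreases, since \(0<1-aS+O(a^2)<1\). Hence \(r_k\le r_0e^{-c ak}\), a geometric decay at rate \(\gtrsim a\).
    \item A standard Euler-scheme error bound then gives \(|r_k-r(ka)|=O(a)\,e^{-cak}\), again with geometric decay.
    \item The Riemann sum \(a\sum_k r(ka)^2\) differs from the integral by \(O(a)\) because the integrand is monotone, and the accumulated drift in \(\Delta\) is \(O(a^2\sum_k r_k^2)=O(a)\).
\end{enumerate}
Multiplying through by \(a\) gives the \(O(a^2)\) remainder in \eqref{eq:smallstep-balancing-main}.

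Finally, \(\lambda(\Delta)=\sqrt{\Delta^2+4}\) is increasing in \(|\Delta|\), so the smaller selected imbalance translates immediately into smaller normal curvature.
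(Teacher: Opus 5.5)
Your proposal is correct and follows the paper's proof essentially step by step. It uses the same exact imbalance and residual identities; the paper obtains the imbalance identity through the diagonal coordinates \(s^+=s(1-ar)\), \(d^+=d(1+ar)\) rather than by direct expansion. It also uses the same rank-one Hessian on the valley, and the same reduction of \(a\sum_k r_k^2\) to the gradient-flow integral \(J\) via \(w=1+r\).

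Your contraction-based, uniform-in-time Euler error bound is a genuine refinement. The paper only asserts the limit \(a\sum_k r_k^2\to J\) as \(a\to0\), which by itself gives an \(o(a)\) remainder rather than the stated \(O(a^2)\).
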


The proof is given in Appendix~\ref{app:finite-step-balancing-proof}.
\subsection{Residual Oscillation Selects the Minimum}
\label{sec:balancing-oscillation}

Proposition~\ref{prop:finite-step-balancing} shows that balancing is driven by residual motion. 
On the solution valley \(r=0\), the imbalance \(\Delta\) is unchanged. 
Away from the valley, residual excursions contract \(|\Delta|\). 
A trajectory started just beyond the normally stable arc therefore develops a residual burst: the residual grows and oscillates, each oscillation contracts the imbalance, the normal sharpness decreases, and the burst dies out once the trajectory has moved back into the stable part of the valley. 
The burst is self-extinguishing: it is created by instability, but removes the instability that created it.

We now compute the selected endpoint of such a burst near the edge. 
Along the solution valley, the normal sharpness is
\[
    \lambda(\Delta)=\sqrt{\Delta^2+4}.
\]
For fixed \(0<a<1\), define the signed distance to the stability boundary by
\[
    \varepsilon=a\lambda(\Delta)-2.
\]
Thus \(\varepsilon=0\) is the stability boundary, \(\varepsilon>0\) is the unstable side, and \(\varepsilon<0\) is the stable side. 
We also write
\[
    \mu_a
    :=
    \frac{(\Delta^*(a))^2}{2}
    =
    \frac{2(1-a^2)}{a^2}.
\]
The parameter \(\mu_a\) is fixed by the learning rate; it measures the squared width of the normally stable arc and satisfies \(\mu_a\downarrow0\) as \(a\uparrow1\).

Consider the near-edge scaling
\[
    \varepsilon=O(\mu_a),
    \qquad
    r=O(\sqrt{\mu_a}).
\]
Averaging over the fast sign oscillation of \(r\) gives a slow planar system for the residual envelope and the remaining instability. 
With
\[
    \alpha=2a^4r^2,
\]
the leading-order slow system is
\begin{align}
    \dot\alpha=4\alpha(\varepsilon-\alpha),
    \qquad
    \dot\varepsilon=-\alpha(\mu_a+2\varepsilon).
    \label{eq:slow-system-main}
\end{align}
The derivation from the exact map is given in Appendix~\ref{app:selection-law}.

The slow system is integrable. 
Eliminating time gives
\[
    \frac{d\alpha}{d\varepsilon}
    =
    -\frac{4(\varepsilon-\alpha)}{\mu_a+2\varepsilon},
\]
and hence
\begin{align}
    \alpha
    =
    \frac{\mu_a}{2}
    +
    2\varepsilon
    +
    C(\mu_a+2\varepsilon)^2,
    \label{eq:burst-invariant}
\end{align}
where \(C\) is constant along the averaged burst. 
A burst starts and ends with vanishing residual envelope, \(\alpha=0\). 
Therefore the initial excess \(\varepsilon_0>0\) and the selected endpoint \(\varepsilon_\infty<0\) lie on the same level set of
\begin{align}
    g_{\mu_a}(\varepsilon)
    =
    \frac{\mu_a+4\varepsilon}{(\mu_a+2\varepsilon)^2}.
    \label{eq:gmu-definition}
\end{align}
The selection law is
\begin{align}
    g_{\mu_a}(\varepsilon_\infty)
    =
    g_{\mu_a}(\varepsilon_0).
    \label{eq:selection-main}
\end{align}
For \(\varepsilon_0>0\), this equation has a unique negative solution
\[
    \varepsilon_\infty\in(-\mu_a/4,0).
\]
Indeed, \(g_{\mu_a}\) increases from \(0\) to \(1/\mu_a\) on \((-\mu_a/4,0)\), and decreases from \(1/\mu_a\) to \(0\) on \((0,\infty)\). 
Thus the burst maps the unstable side of the edge to a definite point inside the stable arc.

Converting back to the valley coordinate gives
\[
    \lambda_\infty=\frac{2+\varepsilon_\infty}{a},
    \qquad
    |\Delta_\infty|=\sqrt{\lambda_\infty^2-4}.
\]
Thus the burst does not merely return the trajectory to the stability boundary. 
It overshoots into the stable arc and selects a point with smaller imbalance and smaller normal sharpness.

In the universal coordinate \(X=\varepsilon/\mu_a\), the leading-order law becomes parameter-free:
\begin{align}
    G(X_\infty)=G(X_0),
    \qquad
    G(X)=\frac{1+4X}{(1+2X)^2}.
    \label{eq:universal-selection}
\end{align}
For \(\varepsilon_0/\mu_a\) fixed as \(a\uparrow1\), the endpoint error is \(O(\mu_a^2)\). 
Thus minimum selection, measured in units of the stable-arc width, is universal to leading order near the edge.

Several corrections refine this leading law. 
At fixed \(a<1\), the small-\(\varepsilon_0\) reflection law has the expansion
\begin{align}
    \varepsilon_\infty
    =
    -\varepsilon_0
    +
    c(a)\varepsilon_0^2
    +
    O(\varepsilon_0^3),
    \qquad
    c(a)
    =
    \frac{2(4a^2-1)}{3(1-a^2)}.
    \label{eq:c-of-a}
\end{align}
In particular, \(c(1/2)=0\): at \(a=1/2\), the burst reflects exactly to second order. 
A finite initial residual envelope \(\alpha_0=2a^4r_0^2\) shifts the invariant constant in \eqref{eq:burst-invariant} by \(O(r_0^2)\), and hence changes the selected endpoint only at second order in the seed size. 
Both corrections are derived in Appendix~\ref{app:selection-law}.

The burst can flatten only down to the balanced point. 
At balance,
\begin{align}
    \varepsilon_{\rm bal}=2a-2.
    \label{eq:balanced-excess}
\end{align}
The selection law has the lower endpoint floor \(-\mu_a/4\). 
For
\[
    \frac{1+\sqrt{17}}{8}<a<1,
\]
one has
\[
    \varepsilon_{\rm bal}<-\frac{\mu_a}{4},
\]
so even an arbitrarily deep burst extinguishes before reaching the balanced point. 
For
\[
    a<\frac{1+\sqrt{17}}{8},
\]
the balanced point lies inside the reachable range, and sufficiently large initial excess can drive the trajectory all the way to balance. 
The boundary is determined by
\[
    g_{\mu_a}(\varepsilon_0)=g_{\mu_a}(\varepsilon_{\rm bal}).
\]
Beyond this point no flatter valley minimum exists, and the valley-recapture mechanism may fail. 
This is the catapult boundary: it is caused not by small learning rate alone, but by starting sufficiently far beyond the stable arc for that learning rate.

\begin{figure}[ht]
    \centering
    \includegraphics[width=\linewidth]{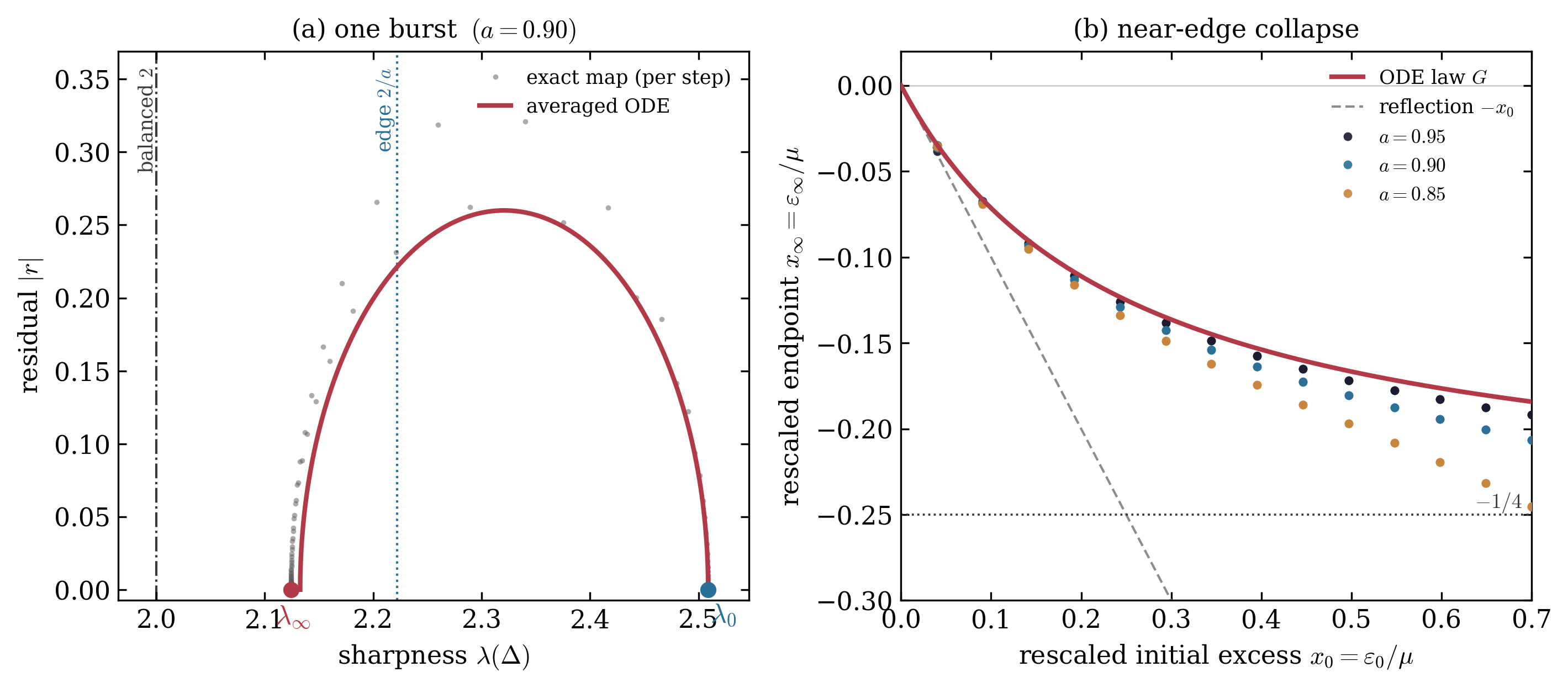}
    \caption{Minimum selection near the edge, from direct simulation of the exact two-factor map. 
    \textbf{(a)} One residual burst at \(a=0.90\), in the physical variables sharpness \(\lambda(\Delta)\) and residual \(|r|\): the map (grey, one point per step) starts at \(\lambda_0\) above the stability threshold \(2/a\) (dotted), grows a residual, and drives the sharpness below the edge to \(\lambda_\infty\), short of the balanced floor \(\lambda=2\) (dash-dotted). 
    The averaged slow system \eqref{eq:slow-system-main} traces the envelope. 
    \textbf{(b)} Endpoints in the universal variable \(X=\varepsilon/\mu_a\): for step sizes approaching the edge, measured endpoints from the exact map collapse onto the parameter-free law \(G(X_\infty)=G(X_0)\), \(G(X)=(1+4X)/(1+2X)^2\) from \eqref{eq:universal-selection}; dashed, the reflection \(-X_0\); dotted, the overshoot floor \(-1/4\). 
    The departure from \(G\) shrinks with \(1-a\), as predicted by the \(O(\mu_a)\) correction.}
    \label{fig:selection-law}
\end{figure}

Figure~\ref{fig:selection-law}(a) shows one such burst in the exact two-factor map: the residual grows, drives the sharpness from above the edge \(2/a\) to below it, and dies out at the selected point, short of the balanced value \(\lambda=2\). 
Figure~\ref{fig:selection-law}(b) shows the universal collapse of the endpoints in the variable \(X=\varepsilon/\mu_a\).
\subsection{The Post-Edge Diagonal Regime}
\label{sec:transverse-stability}

For \(a>1\), no point of the solution valley is normally stable, so the valley-recapture mechanism of Section~\ref{sec:balancing-oscillation} is no longer available. 
What remains is the invariant diagonal \(x=y\), on which the two-factor map reduces exactly to the scalar cubic map \eqref{eq:two-factor-diagonal-map}. 
In the bounded post-edge regime, this diagonal supplies the organizing dynamics. 
The remaining question is whether perturbations away from the diagonal are contracted.

Write a point near the positive diagonal as
\[
    x=s+d,
    \qquad
    y=s-d .
\]
Then \(d\) is the transverse coordinate and
\[
    \Delta=x^2-y^2=4sd .
\]
On the diagonal, \(d=0\), the residual is
\[
    r_k=s_k^2-1,
\]
and the scalar update is
\[
    s_{k+1}=s_k(1-a r_k).
\]
Linearizing the two-factor update in the transverse direction gives
\[
    d_{k+1}
    =
    (1+a r_k)d_k
    +
    O(d_k^3).
\]
Thus the same residual oscillations that generate the scalar post-edge dynamics also determine transverse contraction.

The exact imbalance identity gives an equivalent expression. 
Along the diagonal,
\[
    \Delta_{k+1}
    =
    (1-a^2r_k^2)\Delta_k .
\]
For a periodic diagonal orbit
\[
    s_0\mapsto s_1\mapsto\cdots\mapsto s_{m-1}\mapsto s_0,
\]
the scalar update implies
\[
    \prod_{k=0}^{m-1}(1-a r_k)=1.
\]
Therefore the transverse Lyapunov exponent can be written as
\[
    \chi_\perp
    =
    \frac1m\sum_{k=0}^{m-1}\log|1-a^2r_k^2|
    =
    \frac1m\sum_{k=0}^{m-1}\log|1+a r_k|.
\]
Negative \(\chi_\perp\) means that the diagonal orbit attracts nearby off-diagonal perturbations.

For the near-minimum period-two orbit, this transverse multiplier is explicit. 
Using the scalar two-cycle identities,
\[
    \mu_\perp(a)
    =
    (1+a r_-)(1+a r_+)
    =
    3-2a .
\]
Thus the period-two branch is transversely attracting for
\[
    1<a<2 .
\]
This is only a transverse statement. 
As a scalar orbit, the same two-cycle is attracting only for
\[
    1<a<\sqrt5-1 .
\]
After that, it loses scalar stability and the diagonal dynamics continues through the period-doubling cascade of Section~\ref{sec:quartic-period-doubling}. 
The transverse exponent is then evaluated along the resulting higher-period or chaotic diagonal attractor.

Numerically, we find
\[
    \chi_\perp<0
    \qquad
    \text{for}
    \qquad
    1<a<2,
\]
with \(\chi_\perp\uparrow0\) at the Chebyshev endpoint \(a=2\). 
Thus no transverse blowout is observed before the diagonal bounded dynamics itself disappears. 
In this sense, the two-factor model separates two roles: the scalar diagonal map organizes the post-edge oscillations, while residual oscillations contract the off-diagonal factorization direction.

\begin{figure}[ht]
    \centering
    \includegraphics[width=0.8\linewidth]{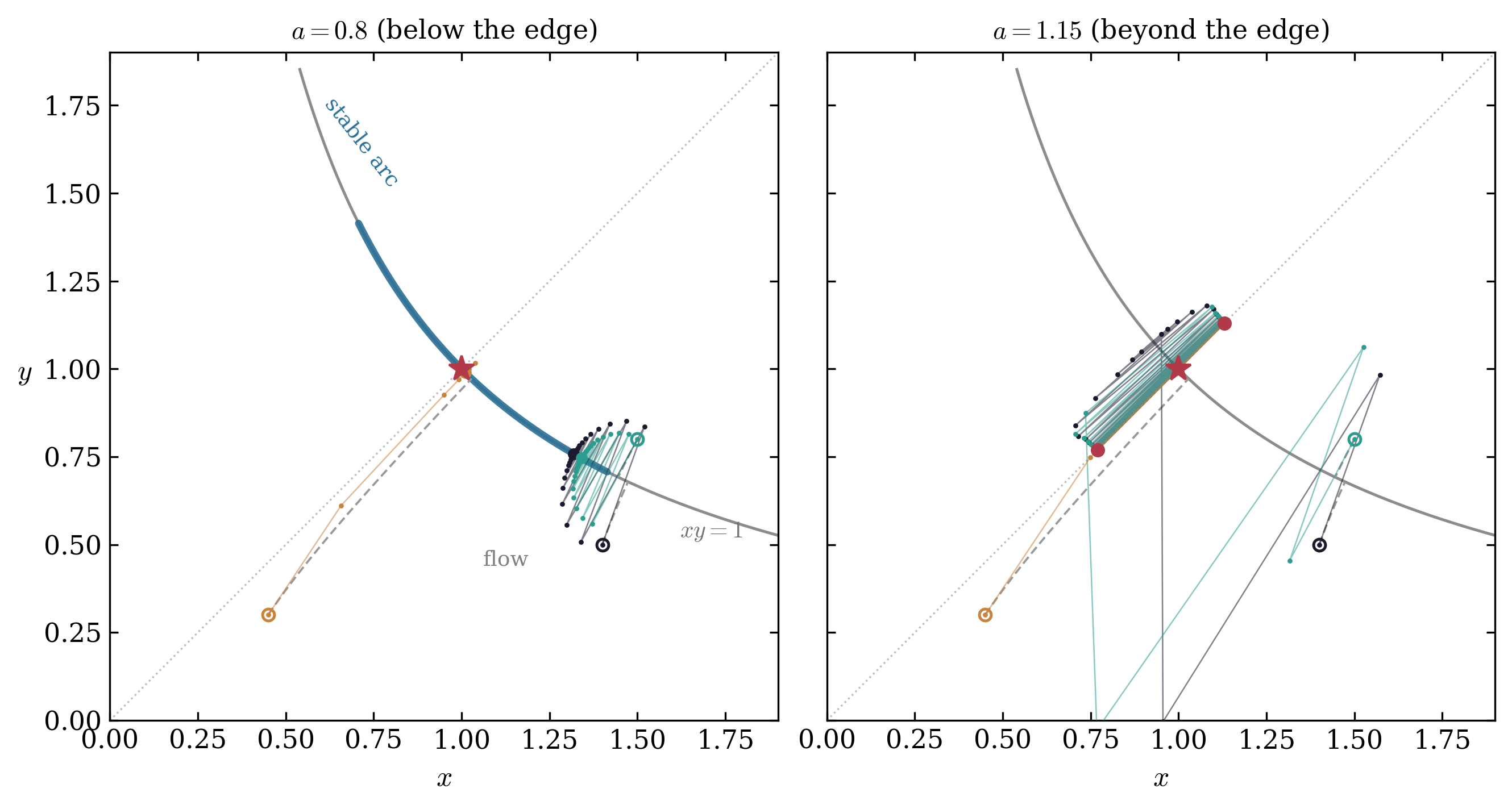}
    \caption{Two-factor dynamics below and beyond the edge, from the same initializations. 
    Left: below the edge, trajectories are captured by the stable arc of the valley \(xy=1\). 
    Right: beyond the edge, residual oscillations contract the imbalance and drive bounded trajectories toward the diagonal post-edge attractor.}
    \label{fig:two-factor-phase}
\end{figure}
\subsection{Conclusion}
\label{sec:two-factor-interpretation}

The two-factor model separates three mechanisms that are conflated on the scalar diagonal. First, finite-step gradient descent changes the factorization imbalance, whereas gradient flow preserves it. Second, below the edge, residual excursions can form a self-extinguishing burst: the residual grows, oscillates, contracts the imbalance, lowers the normal sharpness, and then dies out after the trajectory has moved back into the stable part of the valley. Third, beyond the local edge \(a=1\), no point of the solution valley is normally stable, but bounded trajectories remain organized by the invariant diagonal as long as the transverse exponent is negative.

The model also clarifies its limitations. It contains finite-step flattening and near-edge minimum selection, but it does not explain progressive sharpening from below. The minimum-selection law is an averaged near-edge approximation, with higher-order corrections away from the edge. Likewise, the negativity of the transverse exponent throughout the chaotic windows is supported by exact endpoint identities and numerical evidence, rather than by a complete analytic proof. These diagnostics should nevertheless be useful for deeper chains, stochastic gradients, and higher-dimensional factorizations, where the single imbalance \(\Delta\) is replaced by a family of gauge-like factorization directions.

\subsection{History and Related Work}
\label{sec:two-factor-related}

The conservation of the imbalance under gradient flow is the classical balancedness invariant of linear networks \cite{Saxe2014,arora2018optimization,du2018balanced}: continuous-time training preserves the differences \(x^2-y^2\) between layer norms, which is what makes the flow reduction to the balanced trajectory consistent. The observation that discrete steps violate this conservation law --- and do so in a systematically flattening direction --- is the finite-step effect quantified by Proposition~\ref{prop:finite-step-balancing}. Kreisler et al.~\cite{kreisler2023sharpness} proved for scalar networks that gradient descent monotonically decreases the sharpness of the associated gradient-flow solution; the exact identity \eqref{eq:imbalance-update} is the mechanism of this effect in the two-factor case, and the selection law \eqref{eq:selection-main} quantifies its endpoint near the edge.

The post-edge phenomenology connects to the edge-of-stability literature: the empirical observations of Cohen et al.~\cite{cohen2021gradient}, the analyses of unstable convergence and self-stabilization \cite{ahn2022unstable,damian2023self}, two-step oscillations \cite{chenbruna2023}, trajectory alignment near the flip bifurcation \cite{songyun2023trajectory}, and minimalist models \cite{zhu2023understanding}. The catapult effect of Lewkowycz et al.~\cite{lewkowycz2020large} appears here as the overshoot boundary \eqref{eq:balanced-excess} of the selection law. What the two-factor model adds to this literature is exactness away from the diagonal: the imbalance identity, the stable arc, and the transverse multiplier \(3-2a\) are closed-form statements about the full two-dimensional dynamics, not about a reduced or averaged model.
\section{Toward Realistic Networks}
\label{sec:wider}

The scalar chain admits several orthogonal extensions. 
This section adds width, data, and activation; stochasticity is treated separately in Section~\ref{sec:stochastic}. 
The aim is to show that the finite-step mechanisms isolated above are not artifacts of the scalar reduction. 
They reappear, in exact form, as alignment, spectral edges, balancing, and representation selection in progressively richer learning models.

\subsection{Wide Linear Networks: A Ladder of Edges}
\label{sec:wide-linear}

Consider a deep linear network
\[
    \ell(W_1,\ldots,W_L)=\frac12\|W_L\cdots W_1-M\|_F^2,
    \qquad
    M=USV^\top,
    \qquad
    S=\operatorname{diag}(s_1,\ldots,s_d).
\]
Call an initialization \emph{aligned} if all layers are diagonal in the singular frames of the target:
\[
    W_1=D_1V^\top,
    \qquad
    W_l=D_l \quad (1<l<L),
    \qquad
    W_L=UD_L .
\]
This aligned manifold is invariant under gradient descent. 
On it, the dynamics decouples across singular modes. 
For mode \(i\), the scalar product
\[
    \pi_i=\prod_l (D_l)_{ii}
\]
is trained toward the target singular value \(s_i\). 
Thus the scalar theory applies mode by mode, with the balanced solution satisfying \((D_l)_{ii}=s_i^{1/L}\).

\begin{proposition}[Ladder of spectral edges]
\label{prop:spectral-edge-ladder}
On the aligned manifold, the balanced solution for mode \(i\) has normal sharpness
\[
    Ls_i^{2-2/L}.
\]
Its local stability edge is therefore
\[
    a_i=\frac{2}{L s_i^{2-2/L}}.
\]
The edges are ordered oppositely to the singular values. 
Hence the top singular mode crosses first: if \(s_1>s_2\) and \(a_1<a<a_2\), the leading mode is post-edge while all remaining modes are still locally stable.
\end{proposition}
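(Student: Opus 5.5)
The plan has three steps: reduce to a single mode on the aligned manifold, compute the Hessian of that mode's scalar loss at the balanced point, and then order the resulting edges. Invariance of the aligned manifold and the decoupling across modes are taken from the discussion preceding the statement.

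\emph{Reduction to one mode.} On the aligned manifold, write \(\theta_l=(D_l)_{ii}\) for \(l=1,\ldots,L\). Since \(U\) and \(V\) are orthogonal, the Frobenius loss splits as a sum over modes, and mode \(i\) contributes
\[
    \ell_i(\theta)=\tfrac12\Bigl(\prod_l\theta_l-s_i\Bigr)^2 .
\]
Gradient descent on the full loss restricts to gradient descent on each \(\ell_i\) separately, so the relevant object is the Hessian of \(\ell_i\). Stability is meant here within the aligned manifold; I would note this as an interpretive choice, since transverse off-manifold directions are not claimed.

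\emph{Hessian at the balanced point.} At a zero-residual point \(\prod_l\theta_l=s_i\), the term containing the residual times the Hessian of the product vanishes. What remains is the Gauss--Newton term \(\nabla\pi\,\nabla\pi^\top\), where \(\pi=\prod_l\theta_l\). This matrix has rank one, with a single nonzero eigenvalue \(\|\nabla\pi\|^2=\sum_l\prod_{k\ne l}\theta_k^2\). At the balanced solution \(\theta_l=s_i^{1/L}\), each summand equals \(s_i^{2(L-1)/L}\), so the nonzero eigenvalue is \(L s_i^{2-2/L}\). This is the normal sharpness; the other \(L-1\) eigenvalues vanish along the solution manifold, as in the two-factor case.

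\emph{Edge and ordering.} The linearized gradient-descent map at the balanced point is \(I-aH\). Its normal multiplier is \(1-aL s_i^{2-2/L}\), and \(|1-aL s_i^{2-2/L}|<1\) holds exactly when \(a<2/(L s_i^{2-2/L})=a_i\). The tangential directions have multiplier \(1\); they are neutral along the valley, which is the same notion of normal stability used in Proposition~\ref{prop:finite-step-balancing}. For \(L\ge2\) the exponent \(2-2/L\) is positive, so \(s\mapsto s^{2-2/L}\) is strictly increasing. Hence \(s_1>s_2\) implies \(a_1<a_2\), and likewise \(a_1<a_2\le a_j\) for every \(j\ge2\) when the singular values are ordered. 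Therefore, for \(a_1<a<a_2\), mode \(1\) is post-edge while every other mode is still locally stable.

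The main obstacle is interpretive rather than computational. One must make precise that "locally stable" means normal stability of the balanced point within the aligned invariant manifold, and state the degenerate case \(L=1\), where the exponent vanishes and all edges coincide. The computation itself is elementary.
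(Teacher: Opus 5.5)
Your proof is correct and follows the paper's argument. You reduce to the scalar mode loss on the aligned manifold, evaluate the curvature at the balanced zero-residual point to get \(Ls_i^{2-2/L}\), read off the edge from the multiplier \(1-aLs_i^{2-2/L}\), and use monotonicity in \(s_i\). Your rank-one Gauss--Newton computation (with its \(L-1\) neutral tangential directions) and your remark that the strict ordering needs \(L\ge2\) make explicit what the paper's one-line proof, which computes the curvature along the balanced diagonal, leaves implicit.
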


This gives a literal ladder of spectral edges. 
As the leading singular mode grows, its curvature rises, crosses \(2/a\), and enters the same two-cycle, sharpness hovering, and period-doubling behavior described by the scalar map. 
The other modes continue to converge. 
Thus the loss may keep decreasing while the top curvature oscillates around the stability threshold. 
This reproduces the empirical edge-of-stability trace on an exactly invariant manifold. 
It does not yet explain slow, data-driven progressive sharpening; here the rise of curvature is transient signal growth in a decoupled mode.
\subsection{Alignment of the Singular Frames}
\label{sec:frame-alignment}

The previous reduction assumes alignment. 
In the two-mode, depth-two case, local stability of this assumption can be checked exactly. 
Let \(s_1>s_2>0\). 
Suppose the first mode is on its post-edge two-cycle and the second mode has converged. 
Frame misalignment is governed by a two-step Floquet multiplier in the off-diagonal directions. 
The calculation is given in Appendix~\ref{sec:appendix-wider}.

\begin{proposition}[Rotational edge]
\label{prop:rotational-edge}
The aligned two-cycle is locally stable against physical frame misalignment if and only if
\[
    a(s_1+s_2)<2 .
\]
Equivalently, the rotational edge is the ordinary curvature edge of the valley direction that mixes the two singular modes. 
In particular, alignment is stable throughout the full top-mode two-cycle window
\[
    1<as_1<\sqrt5-1
\]
if and only if
\[
    \frac{s_2}{s_1}<\frac{\sqrt5-1}{2}.
\]
\end{proposition}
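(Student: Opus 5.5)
Without loss of generality I would take \(U=V=I\), so that \(M=S=\operatorname{diag}(s_1,s_2)\). The first step is to reduce the aligned two-cycle to the scalar theory. The \(i\)-th diagonal mode of \(W_2W_1\) with balanced entries \(x_i=y_i=t_i\) evolves by gradient descent on \(\frac12(t_i^2-s_i)^2\). The rescaling \(t_i=\sqrt{s_i}\,z_i\) turns this into the quartic map \(g_{as_i}\) of Section~\ref{sec:cubic}. Hence mode \(1\) sits on the explicit two-cycle \(z_\pm\) of the period-two proposition with effective learning rate \(as_1\), and mode \(2\) sits at \(t_2=\sqrt{s_2}\), which is stable since \(as_2<as_1\). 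Write \(r_\pm=s_1(z_\pm^2-1)\) for the mode-\(1\) residuals along the cycle. The scalar identities from Appendix~\ref{sec:appendix-quartic} then give \((1-ar_-/s_1\cdot s_1)\)-type product relations, namely \(z_-z_+=1/(as_1)\) and \(z_-+z_+=\sqrt{1+3/(as_1)}\), which I will need in closed form.

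Next I would linearize in the off-diagonal entries \(W_1=\begin{pmatrix}x_1&e_{12}\\ e_{21}&x_2\end{pmatrix}\) and \(W_2=\begin{pmatrix}y_1&f_{12}\\ f_{21}&y_2\end{pmatrix}\). The updates use \(\nabla_{W_1}=W_2^\top R\) and \(\nabla_{W_2}=RW_1^\top\). To first order, the off-diagonal residuals are \(R_{12}=y_1e_{12}+f_{12}x_2\) and \(R_{21}=f_{21}x_1+y_2e_{21}\), while the diagonal residuals are \(r_1\) and \(r_2=0\). The off-diagonal variables \((e_{12},e_{21},f_{12},f_{21})\) then evolve by a \(4\times4\) linear map \(A_k\). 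Its coefficients depend only on \(t_1^{(k)}\), \(t_2\) and \(r_1^{(k)}\), and the \(r_1\) terms couple the \((12)\) and \((21)\) entries. The Floquet multipliers are the eigenvalues of \(A_+A_-\).

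To tame this I would change coordinates to separate gauge from physics. The infinitesimal gauge \(W_1\mapsto (I+G)W_1\), \(W_2\mapsto W_2(I-G)\) with \(G\) off-diagonal leaves the product fixed to first order. The two \emph{physical} combinations are the off-diagonal entries \(R_{12},R_{21}\) of the product; equivalently, the rotation of the input and output frames. The statement concerns only the physical directions. I expect the gauge directions to be governed by the imbalance mechanism of Proposition~\ref{prop:finite-step-balancing}, which contracts or is neutral on the cycle, and the physical ones to decouple into a block. I would try the symmetric and antisymmetric combinations \(R_{12}\pm R_{21}\), which match the balanced structure \(x_i=y_i\). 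On the physical block, each one-step factor should take the form \(1-a(t_1^2+t_2^2)\) plus an \(r_1\)-dependent correction. Summing over the cycle and using the scalar identities should collapse the two-step product to \((1-a(s_1+s_2))^2\) or a similar expression. This yields the stated criterion \(|1-a(s_1+s_2)|<1\), that is, \(a(s_1+s_2)<2\), and identifies it with the curvature edge of the mixing valley direction, whose Hessian eigenvalue is \(x_1^2+y_2^2=s_1+s_2\) at the minimum.

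The main obstacle is this last collapse: the cycle-dependent terms in \(r_\pm\) and \(t_\pm\) must cancel exactly in the two-step product. I would first verify it symbolically, by substituting \(z_\pm\) and using \(z_-z_+\) and \(z_-+z_+\), and then look for a structural reason, such as an invariance of the physical block under the scalar cycle analogous to \(\prod(1-ar_k)=1\) in Section~\ref{sec:transverse-stability}. Once the criterion is established, the final claim is elementary. Over the window \(1<as_1<\sqrt5-1\), the condition \(a(s_1+s_2)<2\) holds throughout if and only if it holds at \(as_1\uparrow\sqrt5-1\), which is equivalent to \((\sqrt5-1)(1+s_2/s_1)\le2\), i.e.\ \(s_2/s_1<(\sqrt5-1)/2\), with care about the open endpoint.
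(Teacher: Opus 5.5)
Your skeleton is the paper's: take $U=V=I$, linearize in the off-diagonal entries, form the two-step Floquet product, and close it with the two-cycle identities at parameter $as_1$. Your first-order residuals and $r_1$ cross-couplings are also correct. The gap is the decoupling step.

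What actually splits the $4\times4$ map is the swap symmetry $(W_1,W_2)\mapsto(W_2^\top,W_1^\top)$. In the coordinates $u_\pm=e_{12}\pm f_{21}$, $v_\pm=f_{12}\pm e_{21}$ one gets two $2\times2$ blocks
\[
    M_\pm=I-a\begin{pmatrix} t_1^2 & t_1t_2\\ t_1t_2 & t_2^2\pm r_1\end{pmatrix}.
\]
Each block pairs one physical coordinate, $R_{12}\pm R_{21}=t_1u_\pm+t_2v_\pm$, with one gauge direction $\propto(t_2,-t_1)$.

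Only in the $+$ block does your picture hold. There the gauge vector is the orthogonal rotation, which $M_+$ carries exactly to the next gauge vector $(t_2,-t_1(1-ar_1))$. So $R_{12}+R_{21}$ evolves by the scalar factor $1-a(t_1^2+t_2^2)-ar_1(1-at_1^2)$. Its two-step product, however, is $E^2-(as_1+1)E+1$ with $E=a(s_1-s_2)$, not $(1-a(s_1+s_2))^2$. This value lies in $(-1,1)$ throughout the window, so this direction is harmless.

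In the $-$ block the gauge vector is the symmetric, non-orthogonal one, which gradient descent does not respect: $M_-(t_2,-t_1)^\top=(t_2,-t_1(1+ar_1))^\top$. The update of $R_{12}-R_{21}$ picks up the term $-ar_1(t_1u_--t_2v_-)$, so it does not close on itself. Near the edge the two-step multipliers of this block are complex (the precession mentioned after the proposition). Hence no real invariant line exists, and there is no scalar factor to multiply around the cycle.

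You therefore have to work with the $2\times2$ two-step matrix itself. Set $p=as_1z^2$, so that $p_-+p_+=as_1+1$ and $p_-p_+=1$. Then $\det M_-=2-a(s_1+s_2)-p$, and
\[
    \det\bigl(M_-(x_+)M_-(x_-)\bigr)=D^2-(as_1-1)D-(as_1-1),
    \qquad
    D=1-a(s_1+s_2),
\]
which is the paper's formula. For $s_2\to0$ the two multipliers are $1-as_1$ and $3-2as_1$, whose product is not $D^2=(1-as_1)^2$.

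The criterion comes from the factorization $\det-1=(D+1)(D-as_1)$. This is negative exactly when $D>-1$, because $D<as_1$ holds automatically once $as_1>1$.

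To make this an ``if and only if'' you also need the remaining Jury conditions. With $\operatorname{tr}=4-3as_1+D^2+(as_1-1)D$, one has $1+\det-\operatorname{tr}=2(as_1-1)(1-D)>0$ and $1+\det+\operatorname{tr}=2D^2+6-4as_1>0$ throughout the window. At $D=-1$ the trace is $6-4as_1\in(-2,2)$, so the pair leaves the unit circle as complex conjugates.

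Two smaller points. Stability of mode $2$ needs $as_2<1$, which does not follow from $as_2<as_1$ once $as_1>1$. Your endpoint caution is justified: over the open window the sharp condition is $s_2/s_1\le(\sqrt5-1)/2$.
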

Thus the scalar post-edge backbone persists only when the spectral gap is large enough. 
The leading mode may already be beyond its own edge, while rotations mixing it with the second mode remain stable as long as \(a(s_1+s_2)<2\). 
Beyond this rotational edge, the unstable multipliers are complex. 
Misalignment therefore grows with precession rather than along a fixed direction. 
Numerically, the full two-mode system then saturates in a persistently misaligned oscillatory state with nonzero loss. 
Thus well-separated spectra retain the scalar aligned backbone, while near-degenerate spectra exhibit a genuinely non-scalar instability.
\subsection{Optimal Learning Rates Beyond the Edge}
\label{sec:optimal-rate}

The ladder of edges also changes the tuning problem. 
At depth two, write
\[
    A=as .
\]
For a single mode, the asymptotic per-step contraction factor is
\[
    \rho(A)=|1-2A|,
    \qquad
    A\le1,
\]
while on the stable two-cycle branch it is the square root of the two-step multiplier,
\[
    \rho(A)=\bigl|9-2(1+A)^2\bigr|^{1/2},
    \qquad
    1<A<\sqrt5-1 .
\]
A single learning rate must serve all modes, so the natural minimax problem is
\[
    a^*=\arg\min_a\max_i\rho(as_i).
\]
For two modes with spread
\[
    \sigma=\frac{s_2}{s_1},
\]
the best sub-edge rate gives contraction
\[
    \frac{1-\sigma}{1+\sigma}.
\]
For sufficiently spread spectra, however, this is not optimal. 
Solving the two-mode minimax equality shows that when
\[
    \sigma<0.5718\ldots ,
\]
the minimax rate lies beyond the top edge. 
The leading mode is then parked near its superstable two-cycle, while the trailing mode is accelerated. 
For example, at \(\sigma=1/2\), the optimum is
\[
    a^*s_1\approx1.120,
\]
giving contraction about \(0.120\) per step instead of the best sub-edge value \(0.333\). 
Thus the optimal learning rate need not lie at the first edge; for sufficiently spread spectra, it can lie slightly beyond it.
\subsection{Depth Beyond Two}
\label{sec:depth-beyond-two}

The same mechanism extends to \(L\) factors. 
On the balanced diagonal of a single mode, the dynamics is exactly the scalar chain of depth \(L\). 
The edge, quotient coordinate, two-cycle, period-doubling cascade, and depth limit therefore apply mode by mode. 
What is added are the \(L-1\) layer-imbalance directions within each mode.

Along a balanced one-mode trajectory, where all layer weights are equal to \(x\), the linearized multiplier in each imbalance direction is
\[
    \lambda_\perp=1+a r x^{L-2}.
\]
For \(L=2\), this reduces to the two-factor multiplier \(1+ar\). 
Thus the same residual oscillations that generate the scalar post-edge dynamics also act on the factorization directions. 
In the regimes where the corresponding products of transverse multipliers have modulus below one, these oscillations contract imbalance and drive the factors toward balance. 
Depth changes the constants and the scale of the edge, but not the mechanism.
\subsection{Coupling to Data}
\label{sec:data-coupling}

The preceding models learn a single scalar target. 
The smallest data-coupled extension makes the first factor a vector and keeps a shared scalar readout:
\begin{align}
    \ell(x,y)=\frac12\|xy-d\|^2,
    \qquad
    x,d\in\mathbb R^k,
    \quad
    y\in\mathbb R,
    \quad
    z=xy-d .
    \label{eq:data-model}
\end{align}
For \(k=1\), this is the two-factor loss. 
For \(k\ge2\), the residual direction can rotate, which is the ingredient absent from the scalar and rank-one models. 
We write \(k\) for the data dimension in order to reserve \(r\) for scalar residuals.

Let
\[
    \Delta=\|x\|^2-y^2 .
\]
Gradient flow conserves \(\Delta\). 
Finite-step gradient descent instead satisfies the exact identity
\begin{align}
    \Delta^+
    =
    (1-a^2\|z\|^2)\Delta
    +
    a^2\|x\wedge z\|^2,
    \qquad
    \|x\wedge z\|^2=\|x\|^2\|z\|^2-(x\cdot z)^2 .
    \label{eq:vector-imbalance-update}
\end{align}
The first term is finite-step damping; the second is a misalignment pump. 
Thus data coupling introduces a competition between balancing and residual rotation.

The zero-loss set is the aligned curve
\[
    x=\frac{d}{y}.
\]
Along this curve, the largest nonzero curvature is
\[
    \lambda(y)=y^2+\frac{\|d\|^2}{y^2},
\]
which is minimized at \(y^2=\|d\|\). 
Hence the flattest solution reaches the edge at
\[
    A:=a\|d\|=1 .
\]
Let \(\hat d=d/\|d\|\).

\begin{proposition}[Data-coupled two-cycle]
\label{prop:data-two-cycle}
For \(1<A<\sqrt5-1\), gradient descent on \eqref{eq:data-model} has an exactly aligned and balanced period-two orbit
\[
    x=\xi_\pm\sqrt{\|d\|}\,\hat d,
    \qquad
    y=\xi_\pm\sqrt{\|d\|},
\]
where \(\xi_\pm\) is the quartic two-cycle at parameter \(A\). 
This orbit is linearly stable. 
Its Floquet multipliers are
\[
    \mu_{\mathrm{cyc}}=9-2(1+A)^2,
    \qquad
    \mu_{\mathrm{bal}}=3-2A,
    \qquad
    \mu_{\mathrm{align}}=1-A,
\]
with \(\mu_{\mathrm{align}}\) of multiplicity \(k-1\).
\end{proposition}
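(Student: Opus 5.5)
The plan is to reduce the data-coupled model, by rescaling and a symmetry splitting, to the two-factor model of Section~\ref{sec:beyond-scalar-chain}, and then to compute one new family of multipliers by hand.

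\textbf{Rescaling and invariant plane.} Set \(x=\sqrt{\|d\|}\,X\), \(y=\sqrt{\|d\|}\,Y\) and \(d=\|d\|\hat d\). The gradient step
\[
x^+=x-a\,y\,z,\qquad y^+=y-a\,x\cdot z
\]
becomes the same step for the loss \(\tfrac12\|XY-\hat d\|^2\), with effective learning rate \(A=a\|d\|\).

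The set \(\{X=t\hat d\}\) is invariant. There the residual is \(Z=(tY-1)\hat d\), and the update is exactly the two-factor map \eqref{eq:two-factor-gd} in \((t,Y)\) with target \(1\) and step \(A\). On its diagonal \(t=Y=\xi\) it is the quartic map \(g_A\) by \eqref{eq:two-factor-diagonal-map}. The period-two orbit therefore exists for \(A>1\), by the earlier Proposition on the period-two orbit, and it is aligned and balanced by construction.

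\textbf{Splitting the linearization.} Decompose a perturbation as \((\delta t,\delta Y,\eta)\) with \(\eta\perp\hat d\). The reflection \(\eta\mapsto-\eta\) is a symmetry of the loss that fixes the invariant plane. Hence the Jacobian along the orbit is block diagonal: a \(2\times2\) block in the plane and a \((k-1)\)-dimensional block in \(\eta\).

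Indeed, \(Z=(tY-1)\hat d+Y\eta\). The \(y\)-update contains \(X\cdot Z=t(tY-1)+O(|\eta|^2)\), so it does not see \(\eta\) at linear order. The \(\eta\)-component of \(X^+\) is
\[
\eta^+=\eta-AY\cdot Y\eta=(1-AY^2)\,\eta .
\]

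The planar block is the two-factor linearization at the diagonal two-cycle. It splits further, by the symmetry \(x\leftrightarrow y\), into the diagonal direction and the transverse direction \(d\) of Section~\ref{sec:transverse-stability}. This yields
\[
\mu_{\mathrm{cyc}}=9-2(1+A)^2
\]
from \eqref{eq:quartic-cycle-multiplier}, and the transverse multiplier
\[
\mu_{\mathrm{bal}}=(1+Ar_-)(1+Ar_+)=3-2A,
\]
both already computed.

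\textbf{Alignment multiplier.} The two-step multiplier in each \(\eta\) direction is
\[
\mu_{\mathrm{align}}=(1-A\xi_-^2)(1-A\xi_+^2)=1-A(\xi_-^2+\xi_+^2)+A^2\xi_-^2\xi_+^2 .
\]
The two-cycle identities give \(\xi_-+\xi_+=\sqrt{1+3/A}\) and \(\xi_-\xi_+=1/A\). Hence
\[
\xi_-^2+\xi_+^2=1+\tfrac3A-\tfrac2A=1+\tfrac1A,
\qquad
A^2\xi_-^2\xi_+^2=1,
\]
so \(\mu_{\mathrm{align}}=1-A-1+1=1-A\), with multiplicity \(k-1\). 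Together with the two planar multipliers this accounts for all \(k+1\) dimensions.

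\textbf{Stability and the main difficulty.} On \(1<A<\sqrt5-1\) we have \(|\mu_{\mathrm{cyc}}|<1\) by the earlier two-cycle stability Proposition. The other two multipliers satisfy \(|3-2A|<1\) and \(|1-A|<1\) because \(A<2\). So all Floquet multipliers lie strictly inside the unit disk, and the orbit is linearly stable.

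The only step needing care is the exact block-diagonal decoupling of the Jacobian. I would argue it through the reflection symmetry \(\eta\mapsto-\eta\) and the \(x\leftrightarrow y\) symmetry of the invariant plane, rather than writing out the full \((k+1)\times(k+1)\) matrix.
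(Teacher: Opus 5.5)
Your proposal is correct and follows essentially the same route as the paper. You reduce to the quartic map on the aligned, balanced diagonal to get $\mu_{\mathrm{cyc}}$, reuse the two-factor transverse multiplier $3-2A$, and obtain $\mu_{\mathrm{align}}=1-A$ from the per-step factor $1-A\xi_\pm^2$ and the two-cycle identities. The one difference is that you justify the block-diagonal Jacobian by symmetry (the reflection fixing $\hat d$ and the swap $t\leftrightarrow Y$), whereas the paper simply notes that the feedback of the transverse perturbation onto $(\xi,y)$ is second order.
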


The derivation of \eqref{eq:vector-imbalance-update} and the proof of Proposition~\ref{prop:data-two-cycle} are given in Appendix~\ref{app:data-coupling}.

Thus, in this data-coupled model, post-edge dynamics drives both alignment and balancing: the attracting two-cycle lies at the flattest zero-loss representation. 
This complements the landscape results of \cite{yoo2025minimalist}: their bounds identify balanced solutions as flat, while the orbit-level calculation above shows how finite-step gradient descent can select such a solution dynamically. 
With anisotropic data scales, the residual direction no longer disappears; the attractor can remain misaligned and imbalanced. 
A full treatment of residual rotations is beyond the scope of this paper.
\subsection{A Neuron with Linear Readout}
\label{sec:neuron-readout}

The minimal nonlinear extension is one activation and one linear readout,
\[
    \ell(w,v)=\frac12\bigl(v\sigma(w)-1\bigr)^2 .
\]
For \(\sigma=\mathrm{id}\), this is exactly the two-factor model. 
For general \(\sigma\), the zero-loss valley is
\[
    v=\frac1{\sigma(w)},
\]
and its nonzero curvature is
\[
    \lambda(w)=\sigma(w)^2+\frac{\sigma'(w)^2}{\sigma(w)^2}.
\]
Thus activation shape becomes a representation-selection problem: large steps exclude valley regions whose curvature exceeds the stability threshold \(2/a\).

For \(\sigma=\tanh\), the curvature profile has a wall near zero, a unique finite flattest point, and a saturation plateau:
\[
    \tanh^2 w_*=\frac1{\sqrt2},
    \qquad
    \lambda_* = 2(\sqrt2-1),
    \qquad
    \lambda(w)\to1
    \quad (w\to\infty).
\]
Consequently the stable part of the valley has three regimes:
\[
\begin{array}{ccl}
    a<2
    &:&
    \text{saturated solutions remain stable},\\[1mm]
    2<a<1+\sqrt2
    &:&
    \text{only the flat window around } w_* \text{ is stable},\\[1mm]
    a>1+\sqrt2
    &:&
    \text{no valley point is stable}.
\end{array}
\]
In the middle regime, saturated representations are beyond the edge, while the finite flat window remains stable. 
Post-edge oscillations can therefore expel the neuron from saturation and funnel it toward the flattest representation. 
The edge acts not only as a stability boundary, but as a representation selector.
\subsection{ReLU: An Absorbing State}
\label{sec:relu-death}

For \(\sigma=\mathrm{ReLU}\), the active half-space \(w>0\) is the two-factor model. 
On the balanced diagonal \(w=v=x\), it reduces to the quartic gradient map. 
The inactive half-line \(w\le0\) is absorbing: \(\sigma(w)=\sigma'(w)=0\), the gradients vanish, and the residual is frozen at \(-1\). 
A crossing of the origin, harmless in the smooth model, therefore inactivates the unit permanently.

Let
\[
    a_*=\frac{\sqrt{27}}2-1 .
\]

\begin{proposition}[ReLU inactivation threshold]
\label{prop:relu-death}
On the balanced diagonal, the positive well of the quartic map is forward invariant for \(a\le a_*\). 
Hence a unit initialized in this well remains active forever and follows the linear two-factor dynamics. 
For \(a>a_*\), the critical image crosses the sign-change boundary, so active trajectories can enter the absorbing half-line \(w\le0\). 
Any such crossing is terminal.
\end{proposition}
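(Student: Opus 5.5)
We reduce everything to the scalar cubic map and reuse the envelope analysis of the post-edge band (Proposition on envelope and well merging). In the active half-space \(w>0\) with \(v\) arbitrary, \(\sigma(w)=w\) and \(\sigma'(w)=1\), so the ReLU loss coincides with the two-factor loss and the update is exactly \eqref{eq:two-factor-gd}. On the balanced diagonal \(w=v=x>0\) this restricts to \(g_a(x)=(1+a)x-ax^3\), by \eqref{eq:two-factor-diagonal-map}. The diagonal remains invariant as long as \(w>0\), since the update is symmetric in \((w,v)\). Hence, as long as the iterate stays in \(x>0\), the ReLU dynamics on the diagonal is the quartic map. The first task is to make precise what "the positive well" means. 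We take it to be the interval \(I_a=(0,z_a]\) with \(z_a=\sqrt{(1+a)/a}\) the positive zero of \(g_a\), or more naturally \((0,E(a)]\) with \(E(a)=g_a(x_c)\) the critical value.

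\textbf{Invariance for \(a\le a_*\).} On \([0,z_a]\), \(g_a\ge0\), since \(g_a(x)=x(1+a-ax^2)\). Because \(g_a\) increases on \([0,x_c]\) and decreases on \([x_c,z_a]\), we have \(g_a([0,z_a])=[0,E(a)]\). The envelope computation gives \(E(a)/z_a=2(1+a)/\sqrt{27}\le1\) exactly when \(a\le a_*\). Hence \(g_a([0,z_a])\subseteq[0,z_a]\), and \(g_a(x)>0\) for \(0<x<z_a\). The endpoint \(x=z_a\) maps to \(0\); in the ReLU model this is the boundary \(w=0\), which already counts as inactive. So we will state the well as the open interval \((0,z_a)\). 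For \(a<a_*\) its image lies in \((0,E(a)]\subset(0,z_a)\), so it is invariant. At \(a=a_*\), only the critical point maps onto \(z_a\), and then to \(0\). This borderline case needs care. We plan to use \((0,E(a)]\) (or its interior) as the invariant well and check that \(g_a\) maps it into itself for \(1/2\le a\le a_*\), as the envelope section asserts. For small \(a\) we instead use \((0,\sqrt{1+2/a})\cap(0,z_a)\) together with monotonicity. Every iterate then stays in \(w>0\), the unit is active forever, and it follows the linear two-factor dynamics.

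\textbf{Crossing for \(a>a_*\).} Now \(E(a)>z_a\), so \(g_a(x_c)\) lands beyond the zero, and the next iterate satisfies \(g_a(g_a(x_c))<0\) because \(g_a<0\) on \((z_a,\infty)\). By continuity, an open set of active initial points near \(x_c\) reaches \(w<0\) in two steps. Finally, once \(w\le0\), the ReLU gradients \(\partial_w\ell=(v\sigma(w)-1)v\sigma'(w)\) and \(\partial_v\ell=(v\sigma(w)-1)\sigma(w)\) both vanish. Thus \((w,v)\) is a fixed point with residual \(-1\), and the crossing is terminal.

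\textbf{Main obstacle.} We expect the main difficulty to be the precise invariant-well statement, rather than the crossing direction. One issue is the borderline \(a=a_*\), where the critical orbit hits \(0\) exactly. The other is that for \(a<1/2\) the interval \([0,E(a)]\) is not mapped into itself. We would handle small \(a\) separately, using the elementary bound \(g_a(x)>0\) on \((0,z_a)\) together with \(g_a(x)\le E(a)<z_a\).
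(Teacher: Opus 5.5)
Your treatment of \(a<a_*\), of \(a>a_*\), and of absorption follows the paper's argument. The appendix appeals to the envelope computation \(E(a)/z_a=2(1+a)/\sqrt{27}\), says the active well is forward invariant below \(a_*\), and says its image reaches the absorbing half-line above \(a_*\). You spell out what the paper leaves implicit: \(g_a([0,z_a])=[0,E(a)]\), \(g_a^2(x_c)<0\), openness by continuity, and the vanishing of both ReLU partial derivatives for \(w\le0\).

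The one-line argument in your first paragraph already covers every \(0<a<a_*\), because \(g_a((0,z_a))\subseteq(0,E(a)]\subset(0,z_a)\). No separate treatment of small \(a\) is needed. Your worry there is also unfounded. For \(a<1/2\) one has \(1<E(a)<x_c\), so \(g_a\) is increasing on \([0,E(a)]\) and \(g_a(E(a))<E(a)\); hence \([0,E(a)]\) is mapped into itself.

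The step that would fail is your plan for the borderline \(a=a_*\), and no choice of interval can repair it. At \(a=a_*\) one has \(E(a_*)=z_{a_*}\). The interval \((0,E(a_*)]=(0,z_{a_*}]\) contains \(z_{a_*}\), which maps to \(0\). Its interior \((0,z_{a_*})\) contains \(x_c\), whose orbit is \(x_c\mapsto z_{a_*}\mapsto 0\). Under ReLU the point \(w=0\) is absorbing, so a unit started at \(x_c\) dies after two steps. The same holds for any point of the countable backward orbit of \(x_c\) in \((0,z_{a_*})\), which is exactly the set of well points that ever leave. So the first two sentences of the proposition are false at equality, and no proof can establish them there.

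What you can prove is the strict statement: for \(a<a_*\), \((0,z_a)\) is forward invariant and every unit started in it stays active forever. This is all the paper's proof actually asserts (``below this value''), and its own phrase that \(a_*\) is where the well ``first reaches the origin'' concedes the equality case. At \(a=a_*\), only the closed interval \([0,z_a]\) is invariant, and only for the smooth map, where \(0\) is a fixed barrier. Activity then persists only for initial points off the backward orbit of \(x_c\). State the result with strict inequality and record the equality case as this measure-zero exception.
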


The first statement is exact and is precisely the envelope bound from Section~\ref{sec:quartic-envelope}. 
The second identifies the ReLU-specific failure mechanism. 
Numerically, beyond \(a_*\), typical active orbits are absorbed after intermittent excursions, with survival times growing as \(a\downarrow a_*\). 
The well-merging crisis of the smooth scalar map thus acquires an operational meaning under ReLU: it becomes the exact large-step threshold at which inactive, or ``dead,'' units can first occur on the balanced diagonal.
\subsection{What Survives in General}
\label{sec:what-survives}

The extensions above suggest three levels of robustness. 
First, on invariant manifolds --- aligned spectral modes, balanced diagonals, aligned data directions, and fixed activation patterns --- the scalar chain is exact. 
Second, near the edge, the birth of the two-cycle is a flip bifurcation, so curvature hovering and straddling are normal-form phenomena rather than artifacts of the quartic. 
Third, beyond exact reductions, realistic edge-of-stability dynamics often remains effectively one-dimensional along the leading curvature direction. 
In this sense, the scalar chain is not merely a toy model: in the regimes analyzed here, it is the normal form of the direction in which training oscillates.
\subsection{History and Related Work}

The exact reductions used in this section are rooted in the classical analysis of deep linear networks. 
Saxe et al.~\cite{Saxe2014} showed that, under aligned initializations, deep linear training decouples into scalar modes, making the singular directions analytically tractable. 
The ladder of spectral edges above is the finite-step version of this decoupling: each singular mode has its own curvature scale and therefore its own stability threshold.

Recent work on edge-of-stability dynamics in matrix factorization and deep linear networks has observed precisely this post-edge behavior: leading modes cross their edge first, enter oscillatory regimes, and may undergo period doubling or chaotic dynamics as the learning rate increases 
\cite{Ghosh2025DeepMatrixFactorizationEOS,ghosh2025dln}. 
The contribution here is to isolate the corresponding exact maps and to separate three effects that are usually intertwined: scalar post-edge oscillation, factorization balancing, and frame alignment.

Balancing in homogeneous models has a separate history. 
For gradient flow, layer imbalance is conserved in the simplest factorizations, while overparameterized homogeneous models exhibit implicit balancing and acceleration effects 
\cite{du2018balanced,arora2018optimization}. 
The two-factor calculation above shows the finite-step analogue in its sharpest form: residual oscillations break the gradient-flow conservation law and contract imbalance directly. 
This connects the dynamical edge-of-stability picture to the flatness of balanced factorizations.

The data-coupled and activation examples are meant as minimal extensions rather than full models of realistic networks. 
They show that the same finite-step mechanism can select aligned data directions, balanced representations, and nonsaturated activation states. 
This is consistent with recent minimalist analyses of sharpness dynamics and representation selection in neural-network training \cite{yoo2025minimalist}. 
The common point is that the edge is not only a local stability threshold. 
In these exact reductions, it also acts as a selection mechanism for the representations that remain dynamically accessible at large step size.
\section{Stochastic Targets}
\label{sec:stochastic}

\subsection{Learning Dynamics with Noisy Targets}
\label{sec:noisy-target-dynamics}

The theory so far has been deterministic. We now add persistent randomness in the simplest setting that still admits exact identities: the two-factor model with a noisy target,
\begin{align*}
    \ell_t(x,y)
    =
    \frac12\bigl(xy-1-\xi_t\bigr)^2,
    \qquad
    \xi_t\stackrel{\mathrm{iid}}{\sim}\mathcal N(0,\sigma^2).
\end{align*}
This is a tractable caricature of stochastic gradient descent. It differs from minibatch noise in one important respect: label noise does not vanish at interpolation. Even at a minimum, the update still sees the random residual \(-\xi_t\).\footnote{One could also study noise schedules with \(\sigma\to0\). Here \(\sigma\) is fixed, so the noise is persistent.} These persistent residuals create a flatness-selecting mechanism \cite{blanc2020implicit,damian2021label}.

Our methodology remains unchanged: preserve the exact deterministic identities wherever possible, and resort to asymptotic expansions only where noise introduces genuinely new behavior. The only formal change is that the deterministic residual
\begin{align*}
    r_t=x_ty_t-1
\end{align*}
is replaced by the noisy residual
\begin{align*}
    \rho_t=r_t-\xi_t.
\end{align*}
Thus gradient descent becomes
\begin{align*}
    x_{t+1}
    =
    x_t-a\,y_t\rho_t,
    \qquad
    y_{t+1}
    =
    y_t-a\,x_t\rho_t.
\end{align*}
The next subsection records the exact imbalance identity that survives this replacement.

\subsection{Noise Removes the Freeze Below the Edge}
\label{sec:noise-removes-freeze}

The central deterministic identity survives pathwise. 
With
\[
    \Delta_t:=x_t^2-y_t^2,
\]
the noisy update satisfies
\begin{align}
    \Delta_{t+1}
    =
    (1-a^2\rho_t^2)\Delta_t .
    \label{eq:noisy-imbalance-update}
\end{align}
Thus the imbalance evolves as a random product. 
A single step contracts \(|\Delta_t|\) whenever
\[
    |a\rho_t|<\sqrt2 .
\]
With Gaussian noise, however, \(\rho_t\) is unbounded, so there is no uniform stepwise contraction condition. 
The relevant stability notion is logarithmic: the transverse Lyapunov exponent, the noisy counterpart of \(\chi_\perp\) from Section~\ref{sec:transverse-stability}, is
\begin{align}
    \chi(a,\sigma)
    =
    \lim_{T\to\infty}
    \frac1T
    \sum_{t<T}
    \log |1-a^2\rho_t^2|.
    \label{eq:chi-def}
\end{align}
In a stationary ergodic regime this becomes
\[
    \chi(a,\sigma)
    =
    \mathbb E\,\log|1-a^2\rho^2|.
\]
A negative value of \(\chi\) means that the random product in \eqref{eq:noisy-imbalance-update} contracts exponentially almost surely. 
In this sense, the balanced diagonal is transversely attracting, even though rare individual noisy steps may expand the imbalance.

For \(0<a<1\), the residual dynamics near the balanced point is stable. 
Persistent label noise therefore creates a stationary residual cloud rather than a deterministic transient. 
Linearizing the noisy residual recursion at the balanced point gives a stable autoregressive process. 
Its stationary variance is
\[
    \operatorname{Var}(\rho)
    =
    \frac{\sigma^2}{1-a}
    +
    O(\sigma^4).
\]
Expanding the logarithm in \eqref{eq:chi-def} then gives the leading transverse contraction rate.

\begin{proposition}[Stochastic balancing below the edge]
\label{prop:stochastic-balancing}
Fix \(0<a<1\). 
In the small-noise stationary regime near the balanced valley point,
\begin{align}
    \chi(a,\sigma)
    =
    -\frac{a^2\sigma^2}{1-a}
    +
    O(\sigma^4).
    \label{eq:stochastic-balancing-rate}
\end{align}
In particular, for sufficiently small target noise, the balanced diagonal is transversely attracting throughout the sub-edge range.
\end{proposition}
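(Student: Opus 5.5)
Our route is to linearize the noisy residual recursion at the balanced valley point, read off the stationary law of \(\rho\) to leading order, and then expand \(\mathbb E\log|1-a^2\rho^2|\) in powers of \(\sigma\). The imbalance identity \eqref{eq:noisy-imbalance-update} is exact, so all of the approximation sits in the law of \(\rho_t\).

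\emph{Step 1: residual recursion.} The noisy update is the deterministic one with \(r\) replaced by \(\rho\). Hence the computation behind \eqref{eq:imbalance-update} gives
\[
r_{t+1}=r_t-aS_t\rho_t+a^2\rho_t^2(1+r_t),
\]
with \(S=x^2+y^2\). At the balanced point we have \(x=y=1\), \(S=2\) and \(\Delta=0\). To first order in the small quantities this becomes
\[
r_{t+1}=r_t-2a(r_t-\xi_t),
\]
so \(\rho_{t+1}=(1-2a)\rho_t+2a\xi_t-\xi_{t+1}\). The coefficient \(|1-2a|<1\) for \(0<a<1\), so this is a stable ARMA(1,1) process. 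Its stationary variance is
\[
\operatorname{Var}(\rho)=\frac{4a^2\sigma^2+\sigma^2+2(1-2a)(-1)(2a)\sigma^2\cdot 0}{1-(1-2a)^2}+\ldots .
\]
The cleanest route is through \(r\). The stationary variance of \(r\) is
\[
\frac{4a^2\sigma^2}{1-(1-2a)^2}=\frac{a\sigma^2}{1-a}.
\]
Because \(r_t\) is independent of \(\xi_t\), we get \(\operatorname{Var}(\rho)=\operatorname{Var}(r)+\sigma^2=\sigma^2/(1-a)\), which matches the stated variance. The neglected terms \(a^2\rho^2\), the \(S\)-fluctuations from \(r\) and \(\Delta\), and the mean shift of \(r\) are all \(O(\sigma^2)\) in amplitude. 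They contribute only \(O(\sigma^4)\) to second moments.

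\emph{Step 2: expand the exponent.} For small \(\rho\), \(\log|1-a^2\rho^2|=-a^2\rho^2+O(\rho^4)\). Taking the stationary expectation gives
\[
\chi=-a^2\,\mathbb E\rho^2+O(\mathbb E\rho^4)=-\frac{a^2\sigma^2}{1-a}+O(\sigma^4),
\]
using \(\mathbb E\rho=O(\sigma^2)\) so that \((\mathbb E\rho)^2\) is also \(O(\sigma^4)\). Since the result is strictly negative for \(0<a<1\), transverse attraction follows, as does the final sentence of Proposition~\ref{prop:stochastic-balancing}.

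\emph{Main obstacle.} The main obstacle is justifying the Taylor expansion of the logarithm. \(\rho\) is Gaussian-like and unbounded, and \(\log|1-a^2\rho^2|\) has a singularity at \(\rho^2=1/a^2\) and grows logarithmically. We would split the expectation into the region \(|\rho|\le 1/(2a)\), where the expansion holds with a uniform remainder \(O(\rho^4)\), and its complement. On the complement the log singularity is integrable against a bounded density, and the probability is \(e^{-c/\sigma^2}\), so that piece is \(o(\sigma^4)\).

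A second, subtler issue is that the "small-noise stationary regime near the balanced point" must be meaningful. The valley is neutral in \(\Delta\), but \(\Delta\) itself contracts by the result being proved, so the full nonlinear process does not drift away. We would treat \(\Delta\) as slaved to the contraction, with \(\Delta\to0\) almost surely. The rare large-noise escapes from the basin have exponentially small probability, and we would take the proposition's phrasing as licensing a quasi-stationary interpretation rather than proving ergodicity of the full nonlinear chain.
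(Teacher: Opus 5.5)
Your argument is essentially the paper's: you linearize the diagonal residual recursion at the balanced point to a stable AR(1) with stationary variance \(a\sigma^2/(1-a)\), use independence of \(r_t\) and \(\xi_t\) to get \(\operatorname{Var}(\rho)=\sigma^2/(1-a)\), expand \(\log|1-a^2\rho^2|\), discard the singularity at \(|a\rho|=1\) as exponentially rare, and treat \(\Delta\) self-consistently via \(\chi<0\). Two small repairs are needed, neither affecting the conclusion. First, the discarded ARMA line is wrong: substituting \(r_t=\rho_t+\xi_t\) gives \(\rho_{t+1}=(1-2a)\rho_t+\xi_t-\xi_{t+1}\), whose variance is again \(\sigma^2/(1-a)\), so fix or delete it together with the garbled variance display. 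Second, \(O(\sigma^2)\)-amplitude corrections would by themselves leave \(O(\sigma^3)\) cross terms in \(\mathbb E\rho^2\); the \(O(\sigma^4)\) claim actually rests on the vanishing of third moments of the leading-order centered Gaussian process, a point the paper also leaves implicit.
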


The exact derivation is given in Appendix~\ref{app:stochastic-rate}. 
The mechanism is the stochastic counterpart of finite-step balancing: balancing requires non-vanishing residual sequences. 
Below the edge, deterministic residuals die out and the imbalance freezes; persistent label noise keeps producing residual fluctuations, and these fluctuations contract the imbalance on average.

\paragraph{Numerical validation.}

Figure~\ref{fig:noise-freeze} validates the sub-edge balancing law. 
The markers show direct simulations of the target-noise model for several noise levels. 
The solid curves use the stationary Gaussian approximation for the noisy residual, while the dashed curves show the leading small-noise truncation from Proposition~\ref{prop:stochastic-balancing}. 
Over the plotted sub-edge range, the two predictions are nearly indistinguishable and match the simulations well.

\begin{figure}[t]
    \centering
    \includegraphics[width=0.68\linewidth]{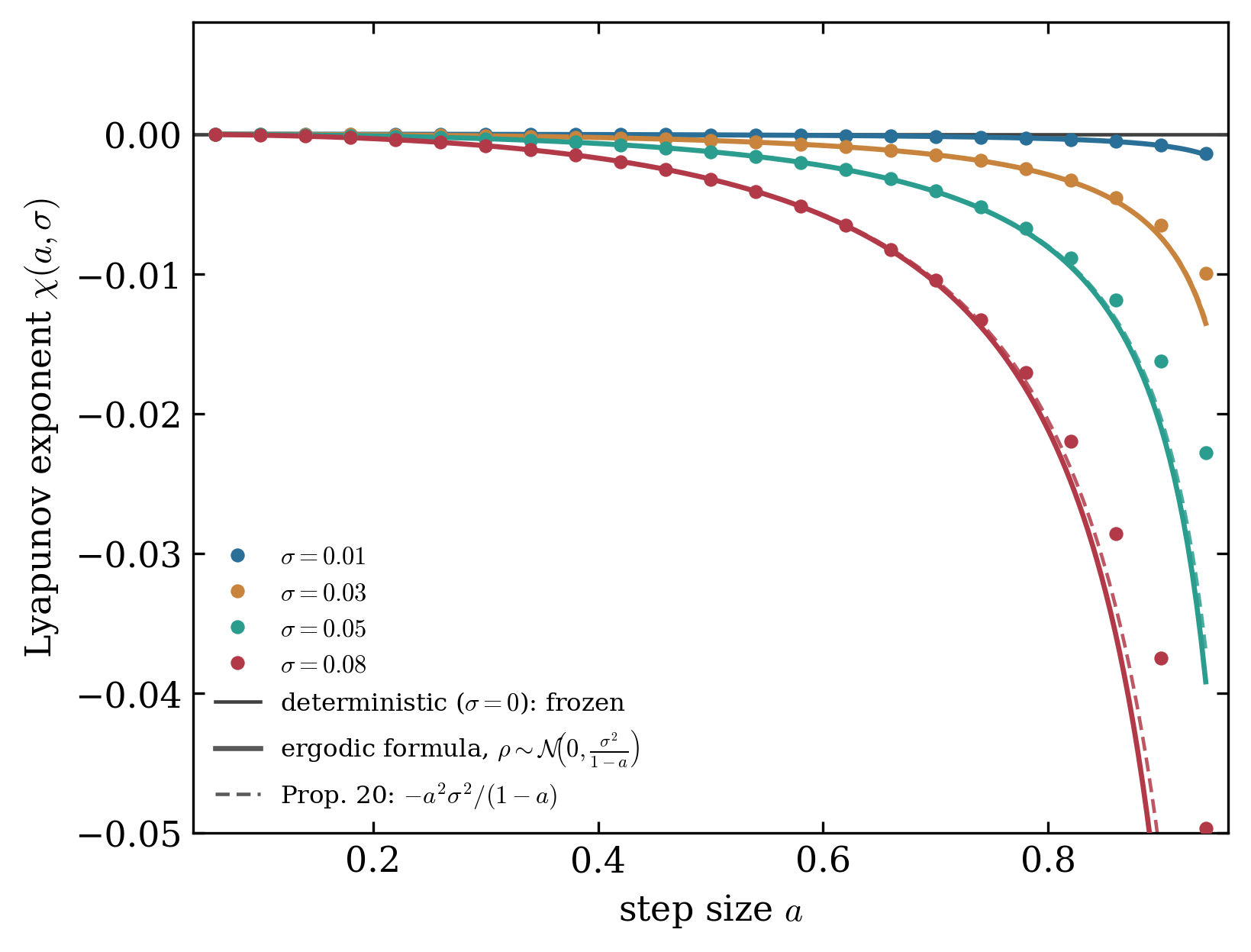}
    \caption{Noise removes the freeze. 
    Markers: simulated \(\chi(a,\sigma)\), for \(\sigma=0.01,0.03,0.05,0.08\). 
    Solid: ergodic formula \(\mathbb E[\log|1-a^2\rho^2|]\) under the stationary approximation
    \(\rho\sim\mathcal N(0,\sigma^2/(1-a))\). 
    Dashed: the small-noise truncation \(-a^2\sigma^2/(1-a)\) from Proposition~\ref{prop:stochastic-balancing}. 
    Black: deterministic freeze, \(\sigma=0\), where \(\chi=0\).}
    \label{fig:noise-freeze}
\end{figure}

\subsection{Crossover at the Edge}
\label{sec:noise-crossover}

The below-edge rate \eqref{eq:stochastic-balancing-rate} becomes singular as \(a\uparrow1\). 
This does not mean that the true balancing rate diverges; it means that the small-noise expansion has reached the edge of its validity. 
Near \(a=1\), two mechanisms have comparable size. 
Below the edge, noise creates residual fluctuations and
\[
    \chi(a,\sigma)\sim-\frac{\sigma^2}{1-a}.
\]
Above the edge, even without noise, the fixed point has been replaced by a deterministic two-cycle, whose transverse exponent near the edge is
\[
    \chi(a,0)=\frac12\log(3-2a)=-(a-1)+O((a-1)^2).
\]
Equating the two magnitudes gives a crossover window of width \(O(\sigma)\) around \(a=1\). 
We therefore zoom into the edge by keeping \((a-1)/\sigma\) fixed. 
In this scaling, the sub-edge stochastic law and the deterministic post-edge law are connected by a single universal curve.

On the diagonal, the residual dynamics near the edge has the noisy flip form
\begin{align}
    r_{t+1}
    =
    -\bigl(1+2(a-1)\bigr)r_t
    -
    r_t^2
    +
    2\xi_t
    +
    \cdots .
    \label{eq:noisy-flip-map}
\end{align}
The leading minus sign is the flip: the residual changes sign from one step to the next. 
As in the deterministic burst analysis, the slowly varying quantity is the squared residual envelope
\[
    v_t=r_t^2.
\]

In the edge scaling
\[
    a-1=O(\sigma),
    \qquad
    v=O(\sigma),
\]
the one-step change in \(v\) is small compared with \(v\) itself. 
We therefore approximate the slow envelope by a continuous-time diffusion whose drift and variance match the conditional mean and variance of the one-step increment. 
Squaring \eqref{eq:noisy-flip-map}, averaging over the fast sign alternation, and matching the nonlinear saturation to the deterministic two-cycle gives
\begin{align}
    \mathbb E[v_{t+1}-v_t\mid v_t=v]
    =
    4(a-1)v
    -
    4v^2
    +
    4\sigma^2
    +
    \text{higher-order terms}.
    \label{eq:v-drift}
\end{align}
The term \(4(a-1)v\) is the linear instability of the flip, the term \(-4v^2\) is the deterministic nonlinear saturation, and the term \(4\sigma^2\) comes from squaring the additive noise. 
The leading conditional variance comes from the cross term between the current residual and the new noise, and equals
\[
    16\sigma^2 v .
\]
Thus the slow envelope is approximated by the diffusion
\begin{align}
    \mathrm d v
    =
    \bigl(4(a-1)v-4v^2+4\sigma^2\bigr)\,\mathrm dt
    +
    4\sigma\sqrt v\,\mathrm dB_t .
    \label{eq:noisy-edge-v-diffusion}
\end{align}
The derivation from the discrete map is given in Appendix~\ref{app:stochastic-crossover}.

The diffusion \eqref{eq:noisy-edge-v-diffusion} is explicitly solvable. 
Its stationary density, obtained from the zero-flux stationary Fokker--Planck equation, is
\begin{align}
    p(v)
    \propto
    v^{-1/2}
    \exp\!\left\{
        \frac{(a-1)v}{2\sigma^2}
        -
        \frac{v^2}{4\sigma^2}
    \right\},
    \qquad v>0 .
    \label{eq:noisy-edge-stationary-density}
\end{align}
For the normal form, the transverse balancing exponent is
\[
    \chi=-\mathbb E[v]+o(\sigma).
\]
Thus the crossover reduces to computing the stationary mean of \(v\).

\begin{proposition}[Noisy edge crossover normal form]
\label{prop:noise-crossover}
For the diffusion normal form \eqref{eq:noisy-edge-v-diffusion}, in the scaling limit \(\sigma\to0\) with
\[
    w=\frac{a-1}{\sqrt2\,\sigma}
\]
fixed, the balancing exponent satisfies
\begin{align}
    \chi(a,\sigma)
    =
    -\sqrt2\,\sigma\,F(w)+o(\sigma),
    \label{eq:noisy-edge-crossover-law}
\end{align}
where
\begin{align}
    F(w)
    =
    \frac{\int_0^\infty z^{1/2}e^{wz-z^2/2}\,\mathrm dz}
         {\int_0^\infty z^{-1/2}e^{wz-z^2/2}\,\mathrm dz}.
    \label{eq:noisy-edge-F}
\end{align}
The asymptotes
\[
    F(w)\sim w
    \quad (w\to+\infty),
    \qquad
    F(w)\sim\frac1{2|w|}
    \quad (w\to-\infty)
\]
recover the deterministic two-cycle rate above the edge and the near-edge stochastic balancing rate below it. 
At the edge,
\[
    \chi(1,\sigma)
    =
    -\frac{2\Gamma(3/4)}{\Gamma(1/4)}\,\sigma
    +
    o(\sigma)
    \approx
    -0.6760\,\sigma .
\]
\end{proposition}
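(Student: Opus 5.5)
The plan is to take as given the stationary density \eqref{eq:noisy-edge-stationary-density} and the normal-form identity \(\chi=-\mathbb E[v]+o(\sigma)\), and then reduce everything to a ratio of one-dimensional integrals. The first step is to rescale the density. Set \(v=\lambda z\) with \(\lambda=\sqrt2\,\sigma\). Then \(v^2/(4\sigma^2)=\lambda^2z^2/(4\sigma^2)=z^2/2\), and
\[
(a-1)v/(2\sigma^2)=(a-1)\sqrt2\,\sigma z/(2\sigma^2)=\frac{(a-1)z}{\sqrt2\,\sigma}=wz .
\]
The factor \(v^{-1/2}\) turns into \(\lambda^{-1/2}z^{-1/2}\), and this constant cancels under normalization. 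So in the variable \(z\) the stationary law has density proportional to \(z^{-1/2}e^{wz-z^2/2}\), which depends on \(a\) and \(\sigma\) only through \(w\). Consequently \(\mathbb E[v]=\lambda\,\mathbb E[z]=\sqrt2\,\sigma F(w)\) with \(F\) exactly as in \eqref{eq:noisy-edge-F}. Both integrals converge, since \(z^{\mp1/2}\) is integrable at \(0\) and the Gaussian factor controls large \(z\). This gives \eqref{eq:noisy-edge-crossover-law}.

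For \(w\to+\infty\) I would use Laplace's method. Write \(wz-z^2/2=-(z-w)^2/2+w^2/2\). Both measures then concentrate at \(z=w\) with \(O(1)\) width, so the ratio equals \(w^{1/2}/w^{-1/2}\bigl(1+O(w^{-2})\bigr)\), that is \(F(w)\sim w\). Translating back, \(\chi\approx-\sqrt2\sigma w=-(a-1)\), which matches \(\chi(a,0)=\tfrac12\log(3-2a)\approx-(a-1)\).

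For \(w\to-\infty\) the factor \(e^{-|w|z}\) dominates and confines \(z\) to the scale \(1/|w|\), where \(z^2/2\) is negligible. Replacing \(e^{-z^2/2}\) by \(1\) and using \(\int_0^\infty z^{s-1}e^{-|w|z}\,dz=\Gamma(s)|w|^{-s}\), the ratio becomes \(\Gamma(3/2)|w|^{-3/2}\big/\bigl(\Gamma(1/2)|w|^{-1/2}\bigr)=1/(2|w|)\). Then \(\chi\approx-\sqrt2\sigma/(2|w|)=-\sigma^2/(1-a)\), which is the near-edge form of Proposition~\ref{prop:stochastic-balancing}. The correction from \(e^{-z^2/2}-1=O(z^2)\) is of relative order \(|w|^{-2}\), so it is controlled by dominated convergence.

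At the edge \(w=0\) I would substitute \(t=z^2/2\), which gives \(\int_0^\infty z^{s-1}e^{-z^2/2}\,dz=2^{s/2-1}\Gamma(s/2)\). Hence
\[
F(0)=\frac{2^{-1/4}\Gamma(3/4)}{2^{-3/4}\Gamma(1/4)}=\sqrt2\,\frac{\Gamma(3/4)}{\Gamma(1/4)},
\]
and \(\chi(1,\sigma)=-2\sigma\,\Gamma(3/4)/\Gamma(1/4)\approx-0.6760\,\sigma\).

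The only genuinely delicate point is the \(o(\sigma)\) statement. The proposition is formulated for the diffusion normal form, so within that model the identity \(\chi=-\mathbb E[v]\) is exact up to the stated error, and the scaling computation above is exact. I would therefore not attempt here to justify the passage from the discrete map to the diffusion; that justification is deferred to Appendix~\ref{app:stochastic-crossover}. I would only note that at fixed \(w\) the \(o(\sigma)\) term is uniform on compact sets of \(w\).
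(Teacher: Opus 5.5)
Your proposal is correct and follows essentially the same route as the paper. You rescale $v=\sqrt2\,\sigma z$ in the stationary density to get $\langle v\rangle=\sqrt2\,\sigma F(w)$, use concentration at $z=w$ for $w\to+\infty$, use Gamma integrals on the scale $z\sim1/|w|$ for $w\to-\infty$, and substitute $t=z^2/2$ at $w=0$. Your asymptotic steps are spelled out more carefully than in the paper. The only difference is that you take the stationary density and $\chi=-\langle v\rangle+o(\sigma)$ as given, whereas the paper's appendix rederives them from the zero-flux Fokker--Planck formula and by expanding $\log|1-a^2\rho^2|$ with $\rho^2=v-2r\xi+\xi^2$. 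A small correction: the appendix gives only a heuristic drift-and-variance matching for the diffusion and states explicitly that it does not prove convergence of the discrete map to the diffusion. It does not supply the justification you say is deferred there, though this does not affect the proof of the proposition as stated.
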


Proposition~\ref{prop:noise-crossover} is a theorem for the diffusion normal form. 
For the original noisy map, it is an asymptotic normal-form prediction: the diffusion is derived from the slow envelope \(v=r^2\), but we do not prove convergence of the discrete process to \eqref{eq:noisy-edge-v-diffusion}. 
The numerical experiment below tests this prediction directly.

\paragraph{Numerical validation.}

Figure~\ref{fig:noise-crossover} simulates the noisy map for \(\sigma=0.01,0.03,0.05,0.08\). 
After rescaling by the single variable
\[
    w=\frac{a-1}{\sqrt2\,\sigma},
\]
all four runs collapse onto the same curve near \(w=0\), with no free parameters. 
The residual gap is a finite-\(\sigma\) effect: the measured-to-predicted ratio at the point nearest the edge is \(1.35\), \(1.13\), \(1.05\), and \(0.99\) for \(\sigma=0.08,0.05,0.03,0.01\), respectively, converging to one as \(\sigma\to0\).

\begin{figure}[t]
    \centering
    \includegraphics[width=0.65\linewidth]{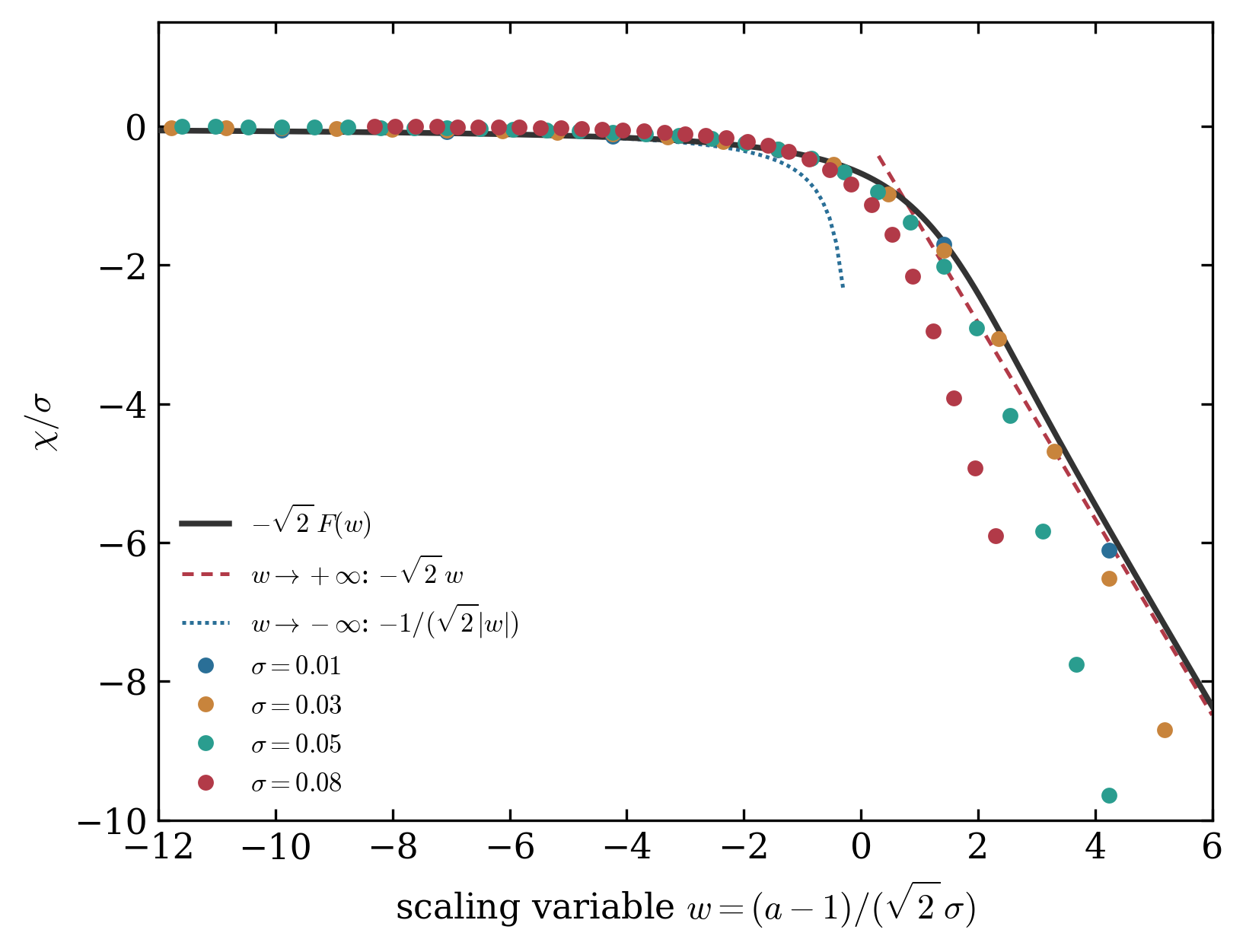}
    \caption{Noisy edge crossover. 
    Markers: measured \(\chi/\sigma\) against \(w=(a-1)/(\sqrt2\,\sigma)\), for \(\sigma=0.01,0.03,0.05,0.08\). 
    Solid: \(-\sqrt2\,F(w)\), the closed form of Proposition~\ref{prop:noise-crossover}. 
    Dashed/dotted: the asymptotes \(-\sqrt2\,w\) as \(w\to+\infty\) and \(-1/(\sqrt2\,|w|)\) as \(w\to-\infty\).}
    \label{fig:noise-crossover}
\end{figure}
\subsection{Attractors Under Noise}
\label{sec:noise-attractors}

The crossover calculation concerns one observable: the transverse balancing exponent. 
It does not describe the full stochastic phase diagram. 
We therefore record how the deterministic attractors of the earlier sections are expected to respond to persistent noise, and where the exact results above stop.

First, noise broadens attracting cycles. 
Near a stable periodic orbit, the return map has the local form
\[
    e_+=m e+\sigma \zeta+O(e^2),
    \qquad |m|<1,
\]
where \(m\) is the deterministic multiplier and \(\zeta\) is an effective noise term. 
The corresponding stationary cloud has typical width
\[
    O\!\left(\frac{\sigma}{\sqrt{1-m^2}}\right).
\]
Thus the cloud widens near a bifurcation, where \(|m|\uparrow1\). 
The noisy-edge crossover of Proposition~\ref{prop:noise-crossover} is the solvable version of this broadening at the first flip.

Second, noise truncates fine period-doubling structure. 
The deterministic cascade of Section~\ref{sec:quartic-period-doubling} contains arbitrarily small scales. 
At fixed \(\sigma>0\), sufficiently fine levels of the cascade are washed out. 
In the classical theory of noisy period doubling, each successive level tolerates about \(6.6\) times less noise than the previous one \cite{crutchfield1981scaling,shraiman1981scaling}. 
Thus only the coarse part of the deterministic diagram remains visible as distinguishable stochastic structure.

Third, noise creates leakage between basins. 
Deterministic attractors near basin boundaries become metastable: away from a crisis, escape rates are exponentially small in \(\sigma^{-2}\), while near a crisis they rise rapidly. 
A sharp deterministic threshold therefore becomes a smooth escape-rate curve. 
In the ReLU model of Section~\ref{sec:relu-death}, for instance, target noise produces rare inactivation events even below the deterministic dying-ReLU threshold, with rates that increase quickly as the threshold is approached.

The lesson for the present paper is modest. 
Persistent noise does not destroy the balancing mechanism. 
Below the edge, it creates the residual fluctuations that keep balancing active. 
Near the edge, it produces the universal crossover of Proposition~\ref{prop:noise-crossover}. 
Farther into the post-edge regime, it rounds and blurs the deterministic attractor structure rather than producing a new exactly solvable phase diagram \cite{agarwala2024stochastic}.
\subsection{History and Related Work}
\label{sec:stochastic-related}

The flatness-selecting effect of label and target noise is known from the implicit-regularization literature. 
Blanc et al.~\cite{blanc2020implicit} and Damian et al.~\cite{damian2021label} show that label-noise SGD drifts toward flat minima, with rates proportional to the squared step size and the noise variance in the small-step limit. 
Proposition~\ref{prop:stochastic-balancing} is the finite-step version of this effect in the two-factor model. 
The rate
\[
    \frac{a^2\sigma^2}{1-a}
\]
reduces to the small-step scaling \(a^2\sigma^2\) as \(a\to0\), while the factor \(1/(1-a)\) is a finite-step amplification invisible to continuous-time analyses.

The behavior of period-doubling systems under external noise is classical. 
The scaling theory of noisy cascades was developed by Crutchfield, Nauenberg, and Rudnick~\cite{crutchfield1981scaling} and by Shraiman, Wayne, and Martin~\cite{shraiman1981scaling}. 
Proposition~\ref{prop:noise-crossover} is the corresponding normal-form statement at the first flip, specialized to the observable that matters here: the transverse balancing exponent. 
On the machine-learning side, a stochastic edge of stability, where sharpness stabilizes below \(2/\eta\) under SGD, has been analyzed in high-dimensional models by Agarwala and Pennington~\cite{agarwala2024stochastic}. 
The label-noise two-factor model gives a minimal solvable counterpart.
\section{Conclusion}
\label{sec:conclusion}

The aim of this paper was to derive exact results for the effect of finite step sizes in gradient-based training. 
The starting point was the cubic gradient map generated by the quartic potential
\[
    \ell(x)=\frac14(x^2-1)^2 .
\]
Despite its simplicity, this map already contains a striking range of finite-step phenomena: loss of fixed-point stability, stable two-cycles, sharpness hovering, period doubling, chaotic windows, and an exactly solvable Chebyshev endpoint. 
These phenomena are not tied to a single depth. 
After the appropriate learning-rate scaling, the depth family admits a universal Ricker-type limit, and the post-edge phase diagram becomes depth-independent.

The broader strategy was then to move outward from this solvable core. 
The two-factor model adds a factorization direction; wider linear models add multiple spectral modes and frame alignment; activation functions add representation-dependent curvature; persistent label noise adds stochastic residuals. 
In each case the dynamics becomes richer, but the central lesson is the same: the scalar finite-step dynamics is not simply discarded or replaced. 
It remains at work inside the more structured models, either as an invariant diagonal subsystem, as the backbone of post-edge dynamics, or as the deterministic structure around which additional effects act.

This perspective separates mechanisms that are often conflated. 
The edge of stability is a local statement about the loss of one-step stability of a fixed point. 
The post-edge regime is non-local: it is governed by multi-step orbits and their return multipliers. 
Factorization directions add another mechanism, because finite steps can change imbalance and therefore sharpness even when the represented function is unchanged. 
Persistent noise adds yet another mechanism, because it keeps residuals alive and turns transient balancing into a long-time effect.

Thus the solvable models do not claim to be realistic neural networks in miniature. 
Their role is more specific: they isolate finite-step mechanisms that any broader theory of edge-of-stability training must account for. 
Progressive sharpening itself remains outside the scope of the present analysis. 
What we have described is what can happen once the dynamics reaches the edge: pointwise convergence can give way to organized oscillation; oscillation can stabilize and balance factorized representations; and finite-step training can remain structured far beyond the local quadratic threshold.
\part{Appendix}
\label{part:appendix}
\appendix

\noindent
The appendices mirror the sections of the paper: Appendix~\ref{app:cubic} collects the derivations for Section~\ref{sec:cubic}, Appendix~\ref{app:depth} for Section~\ref{sec:depth-limit}, and so on. Appendix~\ref{app:toolbox} is a self-contained, informal glossary of the dynamical-systems tools used throughout; it is supplementary and assumes no background.

\section{Derivations for Section~\ref{sec:cubic}: The Cubic Map}
\label{app:cubic}

\subsection{Derivation of the Period-Two Orbit}
\label{sec:appendix-quartic}

We derive the identities used in Section~\ref{sec:quartic-two-cycle}. Recall that the two-cycle satisfies
\begin{align*}
x_+=(1+a)x_- -a x_-^3,
\qquad
x_-=(1+a)x_+ -a x_+^3 .
\end{align*}

Adding the two equations gives
\begin{align*}
x_-+x_+
=
(1+a)(x_-+x_+)
-
a(x_-^3+x_+^3).
\end{align*}
Since the orbit born from the positive minimum satisfies \(x_-+x_+>0\), we divide by
\(a(x_-+x_+)\) and use
\begin{align*}
x_-^3+x_+^3
=
(x_-+x_+)(x_-^2-x_-x_++x_+^2)
\end{align*}
to obtain
\begin{align}
x_-^2-x_-x_++x_+^2
=
1.
\label{eq:cycle-id1}
\end{align}

Similarly, subtracting the two equations yields
\begin{align*}
x_+-x_-
=
(1+a)(x_- -x_+)
-
a(x_-^3-x_+^3).
\end{align*}
Since \(x_+\neq x_-\), we divide by \(a(x_+-x_-)\) and use
\begin{align*}
x_-^3-x_+^3
=
(x_- -x_+)
(x_-^2+x_-x_++x_+^2)
\end{align*}
to obtain
\begin{align}
x_-^2+x_-x_++x_+^2
=
1+\frac2a.
\label{eq:cycle-id2}
\end{align}

Subtracting \eqref{eq:cycle-id1} from \eqref{eq:cycle-id2} immediately gives
\begin{align*}
x_-x_+
=
\frac1a.
\end{align*}
Adding the two identities yields
\begin{align*}
x_-^2+x_+^2
=
1+\frac1a.
\end{align*}
Hence
\begin{align*}
(x_-+x_+)^2
=
x_-^2+x_+^2+2x_-x_+
=
1+\frac3a,
\end{align*}
and therefore
\begin{align*}
x_-+x_+
=
\sqrt{1+\frac3a},
\end{align*}
where we choose the positive root because \(x_\pm>0\).

The two orbit points are therefore the roots of
\begin{align*}
t^2
-
(x_-+x_+)t
+
x_-x_+
=
0,
\end{align*}
which gives
\begin{align*}
x_\pm
=
\frac12
\left(
\sqrt{1+\frac3a}
\pm
\sqrt{1-\frac1a}
\right).
\end{align*}

Finally,
\begin{align*}
x_\pm^2
=
\frac14
\left(
1+\frac3a
+
1-\frac1a
\pm
2\sqrt{\left(1+\frac3a\right)\left(1-\frac1a\right)}
\right),
\end{align*}
which simplifies to
\begin{align*}
x_\pm^2
=
\frac1{2a}
\left[
a+1
\pm
\sqrt{(a-1)(a+3)}
\right].
\end{align*}

\subsection{Multiplier of the Cubic Two-Cycle}
\label{sec:appendix-cycle-multiplier}

We derive the multiplier formula used in Section~\ref{sec:quartic-two-cycle}. The derivative of the quartic gradient map is
\begin{align*}
    g_a'(x)=1+a-3ax^2.
\end{align*}
For the period-two orbit \(x_-\mapsto x_+\mapsto x_-\), the multiplier is
\begin{align*}
    \mu_{\mathrm{cyc}}(a)
    =
    g_a'(x_-)g_a'(x_+).
\end{align*}
Therefore
\begin{align*}
    \mu_{\mathrm{cyc}}(a)
    &=
    \bigl(1+a-3a x_-^2\bigr)
    \bigl(1+a-3a x_+^2\bigr)\\
    &=
    (1+a)^2
    -
    3a(1+a)(x_-^2+x_+^2)
    +
    9a^2x_-^2x_+^2 .
\end{align*}
From the period-two identities,
\begin{align*}
    x_-+x_+
    =
    \sqrt{1+\frac3a},
    \qquad
    x_-x_+
    =
    \frac1a,
\end{align*}
we get
\begin{align*}
    x_-^2+x_+^2
    =
    (x_-+x_+)^2-2x_-x_+
    =
    1+\frac3a-\frac2a
    =
    1+\frac1a,
\end{align*}
and
\begin{align*}
    x_-^2x_+^2
    =
    \frac1{a^2}.
\end{align*}
Substitution gives
\begin{align*}
    \mu_{\mathrm{cyc}}(a)
    &=
    (1+a)^2
    -
    3a(1+a)\left(1+\frac1a\right)
    +
    9\\
    &=
    (1+a)^2
    -
    3(1+a)^2
    +
    9\\
    &=
    9-2(1+a)^2.
\end{align*}
This proves
\begin{align*}
    \mu_{\mathrm{cyc}}(a)=9-2(1+a)^2.
\end{align*}

\subsection{Sharpness Straddling Along the Two-Cycle}
\label{sec:appendix-sharpness-hovering}

We prove the sharpness-straddling relation used in Section~\ref{sec:quartic-sharpness-hovering}. Since
\begin{align*}
    \ell''(x)=3x^2-1,
\end{align*}
the local edge-of-stability threshold
\begin{align*}
    \ell''(x)=\frac2a
\end{align*}
is equivalent to
\begin{align*}
    x^2=\frac13\left(1+\frac2a\right).
\end{align*}
Thus it suffices to prove
\begin{align*}
    x_-^2
    <
    \frac13\left(1+\frac2a\right)
    <
    x_+^2 .
\end{align*}

Using the explicit formula
\begin{align*}
    x_\pm^2
    =
    \frac1{2a}
    \left[
        a+1
        \pm
        \sqrt{(a-1)(a+3)}
    \right],
\end{align*}
we first check the upper inequality. We need
\begin{align*}
    \frac13\left(1+\frac2a\right)
    <
    \frac1{2a}
    \left[
        a+1
        +
        \sqrt{(a-1)(a+3)}
    \right].
\end{align*}
Multiplying by \(6a>0\), this is equivalent to
\begin{align*}
    2(a+2)
    <
    3(a+1)+3\sqrt{(a-1)(a+3)}.
\end{align*}
Equivalently,
\begin{align*}
    0
    <
    a-1+3\sqrt{(a-1)(a+3)},
\end{align*}
which holds for \(a>1\).

For the lower inequality, we need
\begin{align*}
    \frac1{2a}
    \left[
        a+1
        -
        \sqrt{(a-1)(a+3)}
    \right]
    <
    \frac13\left(1+\frac2a\right).
\end{align*}
Multiplying again by \(6a>0\), this is equivalent to
\begin{align*}
    3(a+1)-3\sqrt{(a-1)(a+3)}
    <
    2(a+2),
\end{align*}
or
\begin{align*}
    a-1
    <
    3\sqrt{(a-1)(a+3)}.
\end{align*}
For \(a>1\), both sides are nonnegative, and squaring gives
\begin{align*}
    (a-1)^2
    <
    9(a-1)(a+3),
\end{align*}
which is immediate. Hence
\begin{align*}
    x_-^2
    <
    \frac13\left(1+\frac2a\right)
    <
    x_+^2 .
\end{align*}
This proves
\begin{align*}
    \ell''(x_-)
    <
    \frac2a
    <
    \ell''(x_+).
\end{align*}

Finally, we compute the average curvature. From the two-cycle identities,
\begin{align*}
    x_-^2+x_+^2
    =
    1+\frac1a.
\end{align*}
Therefore
\begin{align*}
    \frac{\ell''(x_-)+\ell''(x_+)}2
    &=
    \frac12\left(3x_-^2-1+3x_+^2-1\right)\\
    &=
    \frac32(x_-^2+x_+^2)-1\\
    &=
    \frac32\left(1+\frac1a\right)-1\\
    &=
    \frac12+\frac{3}{2a}.
\end{align*}
Its difference from the nominal edge value is
\begin{align*}
    \left(
        \frac12+\frac{3}{2a}
    \right)
    -
    \frac2a
    =
    \frac12\left(1-\frac1a\right).
\end{align*}

\subsection{Period-Doubling Thresholds}
\label{sec:appendix-period-doubling}

We record how the later period-doubling thresholds are defined. Let
\begin{align*}
    x_0\mapsto x_1\mapsto\cdots\mapsto x_{m-1}\mapsto x_0
\end{align*}
be an attracting \(m\)-cycle of the quartic map \(g_a\). Its multiplier is the derivative of the \(m\)-step return map:
\begin{align*}
    (g_a^m)'(x_0)
    =
    \prod_{j=0}^{m-1} g_a'(x_j).
\end{align*}
The cycle loses stability by a flip bifurcation when this multiplier reaches \(-1\):
\begin{align*}
    \prod_{j=0}^{m-1} g_a'(x_j)=-1.
\end{align*}
Equivalently, the period-doubling parameters are obtained by solving
\begin{align*}
    g_a^m(x_0)=x_0,
    \qquad
    (g_a^m)'(x_0)=-1,
\end{align*}
for \(m=2,4,8,\ldots\), following the attracting branch.

For \(m=2\), this calculation can be done explicitly and gives
\begin{align*}
    a_2=\sqrt5-1.
\end{align*}
For the later thresholds, the equations are solved numerically. The first values are
\begin{align*}
\begin{array}{ccl}
2\to4  &:& a_2=\sqrt5-1=1.2360679\ldots,\\[1mm]
4\to8  &:& a_4=1.2880317\ldots,\\[1mm]
8\to16 &:& a_8=1.2992279\ldots,\\[1mm]
16\to32&:& a_{16}=1.3016289\ldots,\\[1mm]
32\to64&:& a_{32}=1.3021432\ldots .
\end{array}
\end{align*}
The consecutive gaps shrink geometrically. More precisely,
\begin{align*}
    \frac{a_4-a_2}{a_8-a_4},
    \quad
    \frac{a_8-a_4}{a_{16}-a_8},
    \quad
    \frac{a_{16}-a_8}{a_{32}-a_{16}},
    \quad \ldots
\end{align*}
approach the Feigenbaum constant. The cascade accumulates at
\begin{align*}
    a_\infty
    =
    1.3022834\ldots .
\end{align*}

\section{Derivations for Section~\ref{sec:depth-limit}: The Depth Limit}
\label{app:depth}

\subsection{Derivation of the Scaling Limit}
\label{app:scaling-around-minimum}

We derive the depth-limit map used in Section~\ref{sec:depth-limit}. The finite-depth gradient descent map is
\begin{align*}
    x_{k+1}
    =
    x_k-a\,x_k^{2n-1}(x_k^{2n}-1).
\end{align*}
Set
\begin{align*}
    u_k=x_k^{2n},
    \qquad
    a=\frac cn .
\end{align*}
Then
\begin{align*}
    x_k=u_k^{1/(2n)}
\end{align*}
and
\begin{align*}
    x_{k+1}
    =
    x_k
    \left(
        1-\frac cn\,x_k^{2n-2}(x_k^{2n}-1)
    \right).
\end{align*}
Raising to the power \(2n\) gives the exact quotient map
\begin{align}
    u_{k+1}
    =
    h_{c,n}(u_k),
    \qquad
    h_{c,n}(u)
    =
    u
    \left(
        1-\frac cn\,u^{(n-1)/n}(u-1)
    \right)^{2n}.
    \label{eq:app-depth-quotient-map}
\end{align}

For fixed \(u\) in a compact subset \(K\subset(0,\infty)\),
\begin{align*}
    u^{(n-1)/n}
    =
    u\,u^{-1/n}
    =
    u\left(1+O_K\left(\frac1n\right)\right).
\end{align*}
Hence
\begin{align*}
    1-\frac cn\,u^{(n-1)/n}(u-1)
    =
    1-\frac cn\,u(u-1)
    +
    O_K\left(\frac1{n^2}\right).
\end{align*}
Taking logarithms,
\begin{align*}
    2n
    \log\left(
        1-\frac cn\,u^{(n-1)/n}(u-1)
    \right)
    =
    -2c\,u(u-1)
    +
    O_K\left(\frac1n\right).
\end{align*}
Therefore
\begin{align*}
    h_{c,n}(u)
    =
    u\exp\{2c\,u(1-u)\}
    \left(
        1+O_K\left(\frac1n\right)
    \right),
\end{align*}
uniformly on compact subsets of \((0,\infty)\). Thus
\begin{align*}
    h_{c,n}(u)
    =
    h_c(u)
    +
    O_K\left(\frac1n\right),
    \qquad
    h_c(u)=u\exp\{2c\,u(1-u)\}.
\end{align*}

In the additive quotient coordinate
\begin{align*}
    w=\log u=2n\log |x|,
\end{align*}
the limiting map is
\begin{align*}
    \Phi_c(w)
    =
    \log h_c(e^w)
    =
    w+2c e^w(1-e^w).
\end{align*}
Equivalently,
\begin{align*}
    \Phi_c=\log\circ h_c\circ\exp .
\end{align*}
This is the additive-coordinate representative of the Ricker-type quotient map \(h_c\).

Finally,
\begin{align*}
    h_c'(1)=\Phi_c'(0)=1-2c.
\end{align*}
Thus the limiting fixed point is locally attracting precisely for \(0<c<1\), and it loses stability at \(c=1\). In the original variables this is exactly the finite-depth threshold \(a=1/n\).
\subsection{Negative Schwarzian of the Ricker Limit}
\label{app:ricker-schwarzian}

\begin{proof}[Proof of Proposition~\ref{prop:ricker-schwarzian}]
Write
\begin{align*}
    h_c=e^\psi,
    \qquad
    \psi(u)=\log u+2cu(1-u).
\end{align*}
Using the composition rule
\begin{align*}
    S(e^\psi)=S\psi-\frac12(\psi')^2,
\end{align*}
set
\begin{align*}
    A=u\psi'=1+2cu-4cu^2,
    \qquad
    B=-u^2\psi''=1+4cu^2 .
\end{align*}
A direct calculation gives
\begin{align*}
    2u^4(\psi')^2\,Sh_c
    =
    \bigl(4A-A^4\bigr)-3B^2 .
\end{align*}
Since
\begin{align*}
    4A-A^4\le 3
\end{align*}
for all real \(A\), while \(B>1\) for \(u>0\), the right-hand side is strictly negative. Hence
\begin{align*}
    Sh_c(u)<0
\end{align*}
for all \(u>0\) with \(h_c'(u)\neq0\).
\end{proof}

\section{Derivations for Section~\ref{sec:middle-ground}: The Middle Ground}
\label{app:middle}

\subsection{Negative Schwarzian at Finite Depth}
\label{app:finite-depth-schwarzian}

\begin{proof}[Proof of Proposition~\ref{prop:schwarzian-all-n}]
Write \(\varepsilon=1/n\) and \(h=e^\psi\) with
\(\psi=\log u+\tfrac2\varepsilon\log b_{c,n}\). As in the limiting case,
\begin{align*}
    2u^4\psi'^2\,Sh
    =
    2CA-3B^2-A^4=:F,
    \qquad
    A=u\psi',\quad B=-u^2\psi'',\quad C=u^3\psi''' .
\end{align*}
Set \(\kappa=2c\,u^{1-\varepsilon}/b_{c,n}>0\), where \(b_{c,n}(u)=1-\varepsilon c\,u^{1-\varepsilon}(u-1)\) is the branch factor of Section~\ref{sec:critical-orbits-singer}, and
\begin{align*}
    \hat x=(2-\varepsilon)u-(1-\varepsilon),
    \quad
    \hat y=(1-\varepsilon)\bigl[(2-\varepsilon)u+\varepsilon\bigr]\ge0,
    \quad
    \tilde z=\varepsilon(1-\varepsilon)\bigl[(2-\varepsilon)u+1+\varepsilon\bigr]\ge0 .
\end{align*}
A direct computation gives
\(A=1-\kappa\hat x\),
\(B=1+\kappa\hat y+\tfrac\varepsilon2\kappa^2\hat x^2\),
\(C=2+\kappa\tilde z-\tfrac{3\varepsilon}2\kappa^2\hat x\hat y-\tfrac{\varepsilon^2}2\kappa^3\hat x^3\),
and hence the exact factorization
\begin{align*}
    F=\kappa\left[
        f_1+f_2\,\kappa
        +\Bigl(1-\tfrac{\varepsilon^2}4\Bigr)\hat x^3\kappa^2\,(4-\kappa\hat x)
    \right],
\end{align*}
with
\begin{align*}
    f_1=2\tilde z-6\hat y
    =-2(1-\varepsilon)(2-\varepsilon)\bigl[(3-\varepsilon)u+\varepsilon\bigr]\le0,
    \qquad
    f_2=-3(2+\varepsilon)\hat x^2-3\hat y^2-3\varepsilon\hat x\hat y-2\hat x\tilde z .
\end{align*}
If \(\hat x\ge0\): since \(X(4-X)\le4\) for \(X=\kappa\hat x\), the last term
is at most \(4\kappa\hat x^2\), and
\(f_2+4\hat x^2=-(2+3\varepsilon)\hat x^2-3\hat y^2
-3\varepsilon\hat x\hat y-2\hat x\tilde z<0\),
so \(F/\kappa\le f_1+\kappa(f_2+4\hat x^2)<0\).
If \(\hat x<0\): the last term is negative, \(f_1<0\), and \(f_2<0\) as well.
Writing \(\beta=1-\varepsilon\), \((2-\varepsilon)u=\beta-\xi\) with
\(\xi\in(0,\beta]\), and \(t=(1-\xi)/\xi\),
\begin{align*}
    -\frac{f_2}{\xi^2}
    =
    3\beta^2t^2-7\beta(1-\beta)\,t+\bigl(9-5\beta+2\beta^2\bigr),
\end{align*}
a quadratic in \(t\) with discriminant
\(\beta^2\bigl(25\beta^2-38\beta-59\bigr)<0\) on \((0,1]\).
Thus every term of \(F/\kappa\) is negative, and \(F<0\).
The case \(\varepsilon=0\) recovers the Ricker limit of
Appendix~\ref{app:ricker-schwarzian}.
\end{proof}

\section{Derivations for Section~\ref{sec:beyond-scalar-chain}: The Two-Factor Model}
\label{app:beyond-scalar-chain}

This appendix collects the calculations behind Section~\ref{sec:beyond-scalar-chain}. Throughout, \(a\) is the actual step size of the two-factor gradient descent map \eqref{eq:two-factor-gd}.

\subsection{Proof of Proposition~\ref{prop:finite-step-balancing}}
\label{app:finite-step-balancing-proof}

\emph{Exact closed coordinates.} Let
\begin{align*}
    r=xy-1,
    \qquad
    x=s+d,
    \qquad
    y=s-d ,
\end{align*}
matching the diagonal coordinates of Section~\ref{sec:transverse-stability}. Then \(xy=s^2-d^2\), and the update \eqref{eq:two-factor-gd} gives
\begin{align*}
    x^+ &= (s+d)-a r(s-d), \\
    y^+ &= (s-d)-a r(s+d).
\end{align*}
Adding and subtracting yields the diagonal form
\begin{align*}
    s^+=s(1-a r),
    \qquad
    d^+=d(1+a r).
\end{align*}
Since \(\Delta=x^2-y^2=4sd\), we obtain the exact imbalance identity
\begin{align}
    \Delta^+=(1-a^2r^2)\Delta ,
    \label{eq:app-imbalance-update}
\end{align}
which is \eqref{eq:imbalance-update}. No linearization is involved. This is why \(\Delta\) is the right transverse variable: in continuous-time gradient flow the corresponding equation is \(\dot\Delta=0\), while in discrete time the imbalance is multiplied by a finite-step factor.

The residual also closes with \(\Delta\). Let \(S=x^2+y^2\). Because
\begin{align*}
    S^2=(x^2-y^2)^2+4x^2y^2=\Delta^2+4(1+r)^2,
\end{align*}
we have
\begin{align*}
    S=\sqrt{\Delta^2+4(1+r)^2} .
\end{align*}
Furthermore,
\begin{align*}
    x^+y^+&=(x-a r y)(y-a r x) \\
    &=xy-a r(x^2+y^2)+a^2r^2xy \\
    &=1+r-a rS+a^2r^2(1+r).
\end{align*}
Thus
\begin{align}
    r^+=r\Bigl(1-a S+a^2r(1+r)\Bigr),
    \qquad
    S=\sqrt{\Delta^2+4(1+r)^2} .
    \label{eq:app-r-update}
\end{align}
Equations \eqref{eq:app-imbalance-update} and \eqref{eq:app-r-update} form an autonomous two-dimensional system in \((r,\Delta)\).

\emph{Stable arc.} On the valley \(r=0\), the Hessian is
\begin{align*}
    \nabla^2 \ell(x,y)
    =
    \begin{pmatrix}
        y^2 & xy \\
        xy & x^2
    \end{pmatrix}.
\end{align*}
It has one zero tangential eigenvalue and one normal eigenvalue
\(x^2+y^2=\sqrt{\Delta^2+4}\). The normal multiplier of gradient descent is therefore \(1-a\lambda(\Delta)\), and normal stability is equivalent to
\begin{align*}
    |1-a\lambda(\Delta)|<1
    \qquad\Longleftrightarrow\qquad
    a\sqrt{\Delta^2+4}<2 .
\end{align*}
This gives the stable arc
\begin{align*}
    |\Delta|<\Delta^*(a),
    \qquad
    \Delta^*(a)=\sqrt{\frac{4}{a^2}-4},
\end{align*}
which is nonempty exactly for \(a<1\). At \(a=1\), only the balanced point reaches the boundary, with multiplier \(-1\). For \(a>1\), even the balanced point has \(a\lambda(0)>2\), so no valley point is normally stable.

\emph{Small-step endpoint.} Iterating the exact imbalance identity gives
\begin{align*}
    \log\frac{|\Delta_\infty|}{|\Delta_0|}
    =
    \sum_k \log(1-a^2r_k^2)
    =
    -a\left(a\sum_k r_k^2\right)+O(a^2).
\end{align*}
As \(a\to0\), the discrete trajectory is evaluated along gradient flow. Along gradient flow,
\begin{align*}
    \dot x=-r y,
    \qquad
    \dot y=-r x,
    \qquad\text{hence}\qquad
    \dot\Delta=2x\dot x-2y\dot y=0 ,
\end{align*}
so \(\Delta(t)=\Delta_0\). With \(w=xy=1+r\) and \(\delta=|\Delta_0|\),
\begin{align*}
    \dot w
    =
    \dot x\,y+x\,\dot y
    =
    -r(x^2+y^2)
    =
    -(w-1)\sqrt{\delta^2+4w^2}.
\end{align*}
Consequently,
\begin{align*}
    a\sum_k r_k^2
    \;\longrightarrow\;
    \int_0^\infty r(t)^2\,dt
    =
    \int_1^{1+r_0}
    \frac{w-1}{\sqrt{\delta^2+4w^2}}\,dw
    =
    J(\delta,r_0),
\end{align*}
with the closed form
\begin{align*}
    J(\delta,r_0)
    =
    \left[\frac14\sqrt{\delta^2+4w^2}-\frac12\operatorname{asinh}\!\left(\frac{2w}{\delta}\right)\right]_{w=1}^{w=1+r_0} .
\end{align*}
This proves \eqref{eq:smallstep-balancing-main}; since the integrand is positive for \(w>1\), one has \(J(\delta,r_0)>0\). The diagonal case \(\delta=0\) is singular only because it is already exactly balanced; then \(\Delta_k\equiv0\). \hfill\(\square\)

\begin{figure}[t]
\centering
\includegraphics[width=\textwidth]{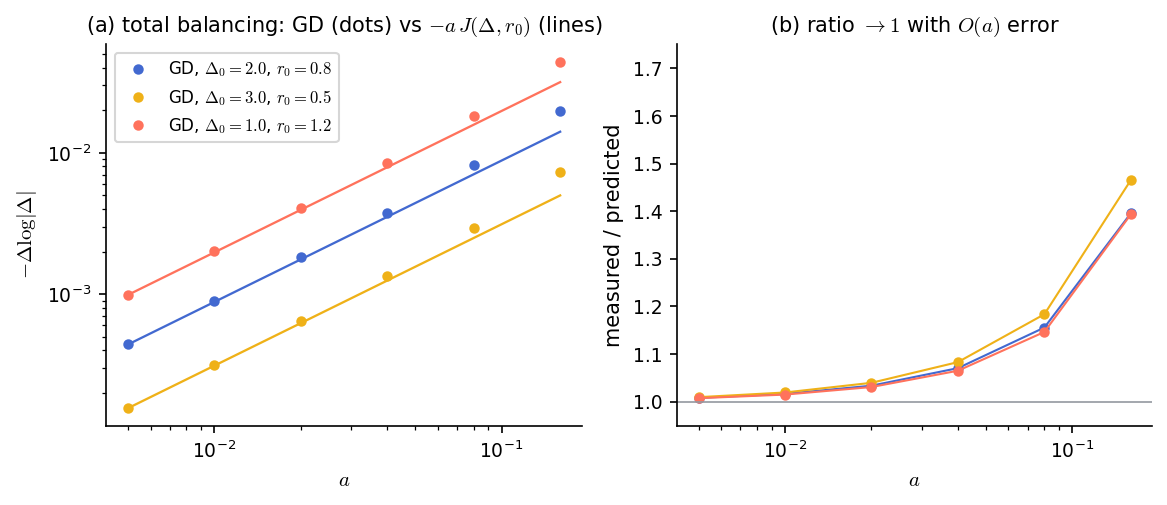}
\caption{Small-step balancing. Dots show the measured total imbalance shift under gradient descent; lines show the closed form \(-aJ(\delta,r_0)\). The ratio of measured to predicted shift approaches \(1\) linearly in the step size \(a\).}
\label{fig:app-smallstep}
\end{figure}

\subsection{Averaged Minimum Selection Beyond the Stable Arc}
\label{app:selection-law}

We derive the slow system \eqref{eq:slow-system-main}, the selection law \eqref{eq:selection-main}, and the fixed-step reflection coefficient \eqref{eq:c-of-a} of Section~\ref{sec:balancing-oscillation}. Assume \(0<a<1\) and start on, or just beyond, the stability boundary of the valley. Write
\begin{align*}
    \varepsilon=a\lambda(\Delta)-2,
    \qquad
    \lambda(\Delta)=\sqrt{\Delta^2+4},
    \qquad
    \mu_a=\frac{(\Delta^*(a))^2}{2}=\frac{2(1-a^2)}{a^2},
\end{align*}
and work in the near-edge scaling \(\varepsilon=O(\mu_a)\), \(r=O(\sqrt{\mu_a})\) as \(a\uparrow1\).

\emph{Normal form.} Expanding the exact residual update \eqref{eq:app-r-update} about the edge \(a\lambda=2\), with \(\Delta\) held fixed across a single step,
\begin{align}
    r^+=-(1+\varepsilon)\,r-a^2r^2+a^4r^3+O(r^4,\varepsilon r^2).
    \label{eq:app-normal-form}
\end{align}
The factor \(-(1+\varepsilon)\) reverses the sign of \(r\) at each step while its magnitude drifts slowly, so the slow variable is the squared residual, which we rescale as
\begin{align*}
    \alpha=2a^4 r^2 .
\end{align*}

\emph{Two-step envelope.} Compose \eqref{eq:app-normal-form} with itself. Over two steps \(r\) returns to its own sign, \(|r|\) is multiplied by \((1+\varepsilon)^2\) to linear order, and the quadratic terms combine into a saturation; a direct expansion gives
\begin{align}
    \frac{d\alpha}{d\tau}=4\alpha(\varepsilon-\alpha)+O(\mu_a^2),
    \label{eq:app-envelope}
\end{align}
where \(\tau\) counts residual oscillations (one per two steps). The odd cross term \(-2a^2\varepsilon r\) generated by the composition reverses sign with \(r\) and cancels between consecutive steps; the envelope therefore depends on \(r\) only through \(\alpha\), which is why the selected endpoint is independent of the sign of the initial seed.

\emph{Coupling to the excess.} The exact imbalance identity \eqref{eq:app-imbalance-update} gives \(d\log|\Delta|/d\tau=-a^2r^2\cdot2=-\alpha/a^2\) per oscillation. Differentiating \(\varepsilon=a\lambda-2\) along the valley,
\begin{align*}
    \frac{d\varepsilon}{d\log|\Delta|}
    =
    \frac{a\Delta^2}{\lambda}
    =
    \frac{(2+\varepsilon)^2-4a^2}{2+\varepsilon}
    =
    a^2\mu_a+2\varepsilon+O(\mu_a^2),
\end{align*}
using \(2a^2\mu_a=4(1-a^2)\). Multiplying, and using \(a^2\to1\) at the edge so that \(2\varepsilon/a^2=2\varepsilon+O(\mu_a^2)\),
\begin{align}
    \frac{d\varepsilon}{d\tau}=-\alpha\,(\mu_a+2\varepsilon)+O(\mu_a^2).
    \label{eq:app-eps-drift}
\end{align}
Equations \eqref{eq:app-envelope} and \eqref{eq:app-eps-drift} are the slow system \eqref{eq:slow-system-main}.

\emph{Integration and the selection law.} Only the ratio of the two rates matters for the orbit. Eliminating \(\tau\),
\begin{align*}
    \frac{d\alpha}{d\varepsilon}=-\frac{4(\varepsilon-\alpha)}{\mu_a+2\varepsilon},
\end{align*}
which is linear in \(\alpha(\varepsilon)\) and solved by
\begin{align}
    \alpha
    =
    \frac{\mu_a}{2}+2\varepsilon+C\,(\mu_a+2\varepsilon)^2 .
    \label{eq:app-invariant}
\end{align}
A burst starts and ends with vanishing residual envelope, \(\alpha=0\); at such a point \(C=-\tfrac12 g_{\mu_a}(\varepsilon)\) with \(g_{\mu_a}\) as in \eqref{eq:gmu-definition}. Since \(C\) is conserved, the initial and final excesses satisfy
\begin{align}
    g_{\mu_a}(\varepsilon_\infty)=g_{\mu_a}(\varepsilon_0),
    \qquad
    g_{\mu_a}(\varepsilon)=\frac{\mu_a+4\varepsilon}{(\mu_a+2\varepsilon)^2},
    \label{eq:app-selection}
\end{align}
which is \eqref{eq:selection-main}, with the unique root \(\varepsilon_\infty\in(-\mu_a/4,0)\) for \(\varepsilon_0>0\). The selected valley point is \(\lambda_\infty=(2+\varepsilon_\infty)/a\), \(|\Delta_\infty|=\sqrt{\lambda_\infty^2-4}\). Because both endpoints have \(\alpha=0\), the selected minimum is independent, to leading order, of the initial residual seed; and since the burst cannot flatten past the balanced point \(\varepsilon_{\rm bal}=2a-2\), a predicted \(\varepsilon_\infty<\varepsilon_{\rm bal}\) marks the catapult boundary.

\emph{Fixed-step reflection coefficient.} For small \(\varepsilon_0\) the scaling-limit law \eqref{eq:app-selection} reflects almost symmetrically. Writing \(g_{\mu_a}(\varepsilon)=\mu_a^{-1}\bigl(1-4X^2+16X^3+\cdots\bigr)\) with \(X=\varepsilon/\mu_a\) and solving \(g_{\mu_a}(\varepsilon_\infty)=g_{\mu_a}(\varepsilon_0)\) gives
\begin{align*}
    \varepsilon_\infty=-\varepsilon_0+\frac{4}{\mu_a}\varepsilon_0^2+O(\varepsilon_0^3).
\end{align*}
At fixed \(a<1\) the exact map is not the continuous flow: the two-step composition adds discretization corrections to the invariant \eqref{eq:app-invariant}, of relative size \(O(\varepsilon_0)\). Carrying these corrections shifts the quadratic coefficient by a universal amount \(-\tfrac23\), independent of \(a\), so that
\begin{align*}
    \varepsilon_\infty=-\varepsilon_0+c(a)\,\varepsilon_0^2+O(\varepsilon_0^3),
    \qquad
    c(a)=\frac{4}{\mu_a}-\frac23=\frac{2(4a^2-1)}{3(1-a^2)},
\end{align*}
which is \eqref{eq:c-of-a}. In particular \(c(1/2)=0\): at \(a=1/2\) the burst reflects exactly to second order. Direct iteration of the exact map confirms \(c(a)\) to the digits shown: \(c(0.3)=-0.469\), \(c(0.5)=0.000\), \(c(0.6)=0.458\), \(c(0.8)=2.889\).

For a worked example away from the edge, take \(a=0.6\) and \(\Delta_0=3.2\), so \(\varepsilon_0\approx0.264\) and \(\mu_a=32/9\). The selection law \eqref{eq:app-selection} predicts \(\varepsilon_\infty\approx-0.204\), hence \(|\Delta_\infty|\approx2.23\), while direct gradient descent gives \(\approx2.15\); the stable boundary is \(\Delta^*\approx2.67\). The \(4\%\) gap is the expected \(O(\mu_a^2)\) error of the leading-order law at a step size this far below the edge.

\begin{figure}[t]
\centering
\includegraphics[width=\textwidth]{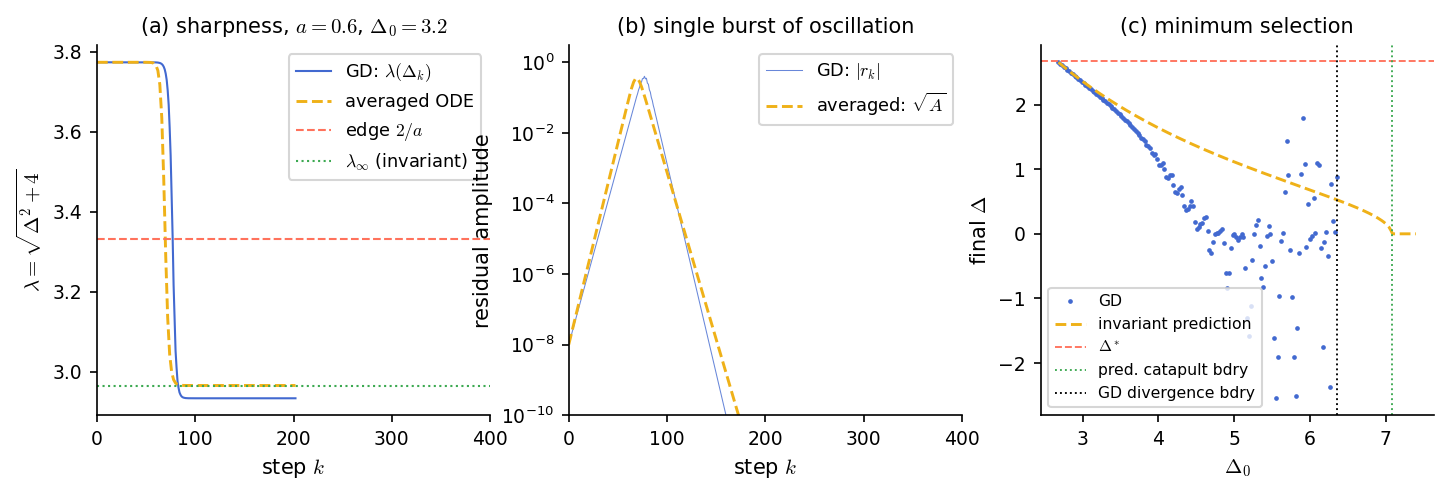}
\caption{Post-edge drift at step size \(a=0.6\). The trajectory performs a single residual burst, burns excess instability, and returns to the valley inside the stable arc. The selection law predicts the final sharpness and the onset of the catapult regime.}
\label{fig:app-eos-drift}
\end{figure}

\begin{figure}[t]
\centering
\includegraphics[width=\textwidth]{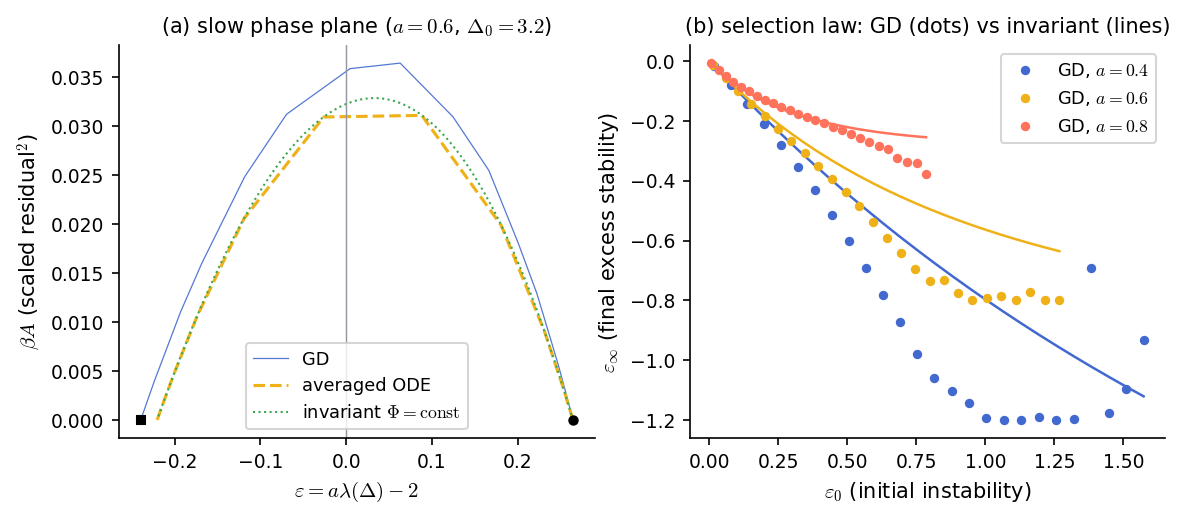}
\caption{Selection law. The averaged phase plane follows the integral curve \eqref{eq:app-invariant} from \((\varepsilon_0,0)\) to \((\varepsilon_\infty,0)\). Across steps, direct gradient descent agrees with the implicit law \eqref{eq:app-selection} until the burst reaches the hard floor imposed by the balanced point.}
\label{fig:app-selection-law}
\end{figure}

\subsection{Diagonal Cascade and Exact Crisis}
\label{app:diagonal-cascade}

On the diagonal, the map is
\begin{align*}
    g_a(s)=s-a s(s^2-1),
\end{align*}
in the diagonal coordinate \(s\) of Section~\ref{sec:transverse-stability}. The first stability edge is \(a=1\). The stable period-two orbit satisfies the exact identities
\begin{align}
    s_-^2+s_+^2=1+\frac1a,
    \qquad
    s_-s_+=\frac1a .
    \label{eq:app-two-cycle-identities}
\end{align}
Continuation of the principal \(2^j\)-cycles gives the following flip thresholds:
\begin{center}
\begin{tabular}{c|ccccc}
transition & \(1\to2\) & \(2\to4\) & \(4\to8\) & \(8\to16\) & \(16\to32\) \\
\hline
\(a\) & \(1\) & \(\sqrt5-1\) & \(1.28803175\) & \(1.29922794\) & \(1.30162891\)
\end{tabular}
\end{center}
The ratios of successive gaps converge to Feigenbaum's constant, and the accumulation point is \(a_\infty\approx1.3022834\).

The bounded diagonal attractor is destroyed in a boundary crisis at the exact step size \(a=2\). The scalar map has, for every \(a>0\), the antisymmetric two-cycle
\begin{align*}
    \{+A_a,-A_a\},
    \qquad
    A_a=\sqrt{1+\frac2a} .
\end{align*}
If \(|s|>A_a\), the orbit diverges monotonically in magnitude; hence \([-A_a,A_a]\) is the natural trapping interval. The critical point of \(g_a\) satisfies
\begin{align*}
    s_c^2=\frac{1+a}{3a},
    \qquad
    g_a(s_c)=\frac{2(1+a)}{3}s_c .
\end{align*}
The trapping condition \(g_a(s_c)\le A_a\) fails when
\begin{align}
    4(1+a)^3=27(a+2).
    \label{eq:app-crisis-equation}
\end{align}
The positive boundary relevant to the attractor is \(a=2\); both sides of \eqref{eq:app-crisis-equation} then equal \(108\).

\begin{figure}[t]
\centering
\includegraphics[width=\textwidth]{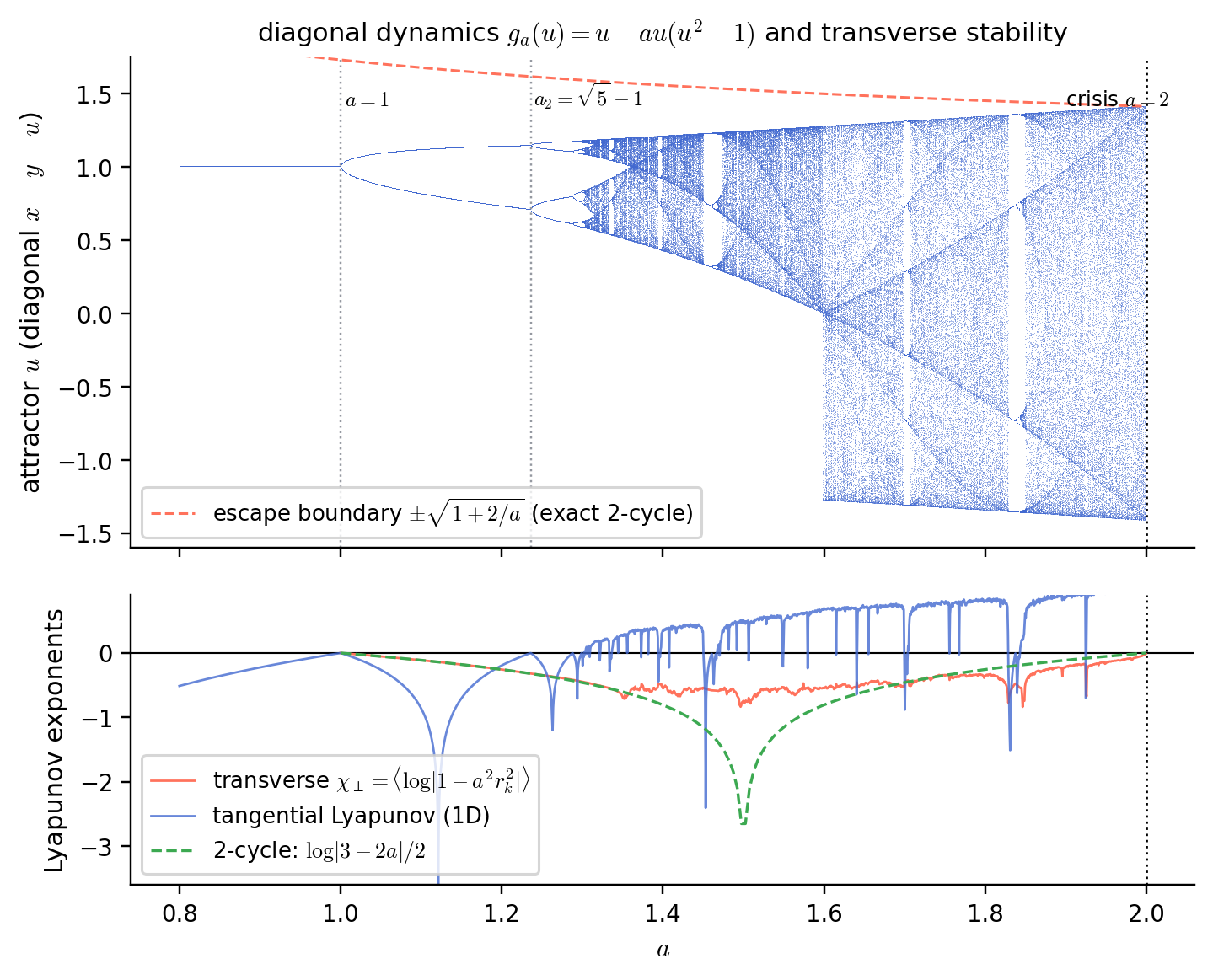}
\caption{Diagonal bifurcation diagram. The exact escape boundary is \(\pm\sqrt{1+2/a}\), and the attractor touches it at the crisis \(a=2\).}
\label{fig:app-bifurcation}
\end{figure}

\subsection{Transverse Lyapunov Exponent}
\label{app:transverse-exponent}

The same identity that gives balancing also gives the transverse Lyapunov exponent of the diagonal attractor. Along the diagonal \(s_k\), the residual is \(r_k=s_k^2-1\), and
\begin{align*}
    s_{k+1}=s_k(1-a r_k).
\end{align*}
Thus, for any bounded diagonal orbit that stays away from \(s=0\) in time average,
\begin{align*}
    \left\langle \log|1-a r_k|\right\rangle=\lim_{T\to\infty}\frac1T\log\left|\frac{s_T}{s_0}\right|=0 .
\end{align*}
Since the exact imbalance multiplier is \((1-a r_k)(1+a r_k)=1-a^2r_k^2\), we get
\begin{align*}
    \chi_\perp=\left\langle \log|1-a^2r_k^2|\right\rangle=\left\langle \log|1+a r_k|\right\rangle .
\end{align*}
For a periodic orbit this is simply the statement that \(\prod_k(1-a r_k)=1\) over the period. On the period-two branch, \eqref{eq:app-two-cycle-identities} gives the two-step transverse multiplier
\begin{align}
    \mu_\perp=3-2a .
    \label{eq:app-period-two-transverse}
\end{align}
It reaches \(-1\) exactly at the crisis \(a=2\).

At the crisis the transverse exponent can be computed in closed form. For \(a=2\),
\begin{align*}
    g_a(s)=3s-2s^3 .
\end{align*}
With \(s=\sqrt2\sin\varphi\), this becomes
\begin{align*}
    g_a(\sqrt2\sin\varphi)=\sqrt2\sin(3\varphi),
\end{align*}
so the invariant density on \([-\sqrt2,\sqrt2]\) is
\begin{align*}
    \rho(s)=\frac{1}{\pi\sqrt{2-s^2}} .
\end{align*}
The telescoped transverse factor is
\begin{align*}
    1+2r=1+2(s^2-1)=1-2\cos(2\varphi).
\end{align*}
Therefore
\begin{align}
    \chi_\perp(a=2)=\frac{1}{2\pi}\int_0^{2\pi}\log|1-2\cos\psi|\,d\psi=m(z^2-z+1)=0,
    \label{eq:app-mahler}
\end{align}
where \(m(\cdot)\) denotes logarithmic Mahler measure. The polynomial \(z^2-z+1\) is cyclotomic, which explains the final equality.

The exact endpoint information is consistent with numerical sweeps over the bounded interval \(1<a<2\): the measured transverse exponent remains negative until it approaches zero at the crisis. The approach is well fit by
\begin{align*}
    \chi_\perp(a)\approx -0.58\sqrt{2-a}
\end{align*}
over several decades in \(2-a\). A complete analytic proof of \(\chi_\perp<0\) throughout the interval would have to use the physical measure of the scalar attractor; it cannot be a statement about all invariant measures, because the escape cycle \(\{\pm A_a\}\) has positive transverse exponent.

\begin{figure}[t]
\centering
\includegraphics[width=\textwidth]{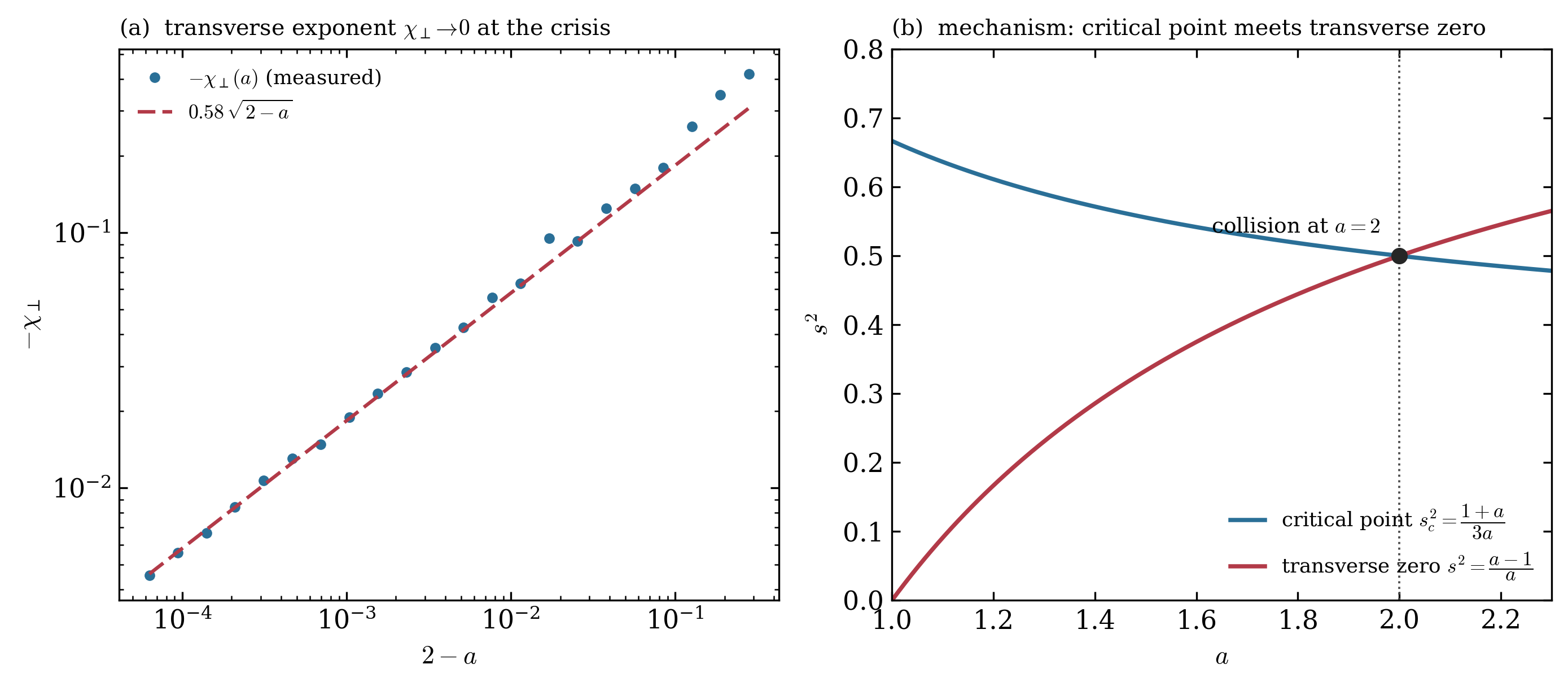}
\caption{Structure of the transverse exponent near the crisis. The exponent approaches zero with an apparent square-root law. At \(a=2\), the zero of the transverse factor collides with the critical point of the scalar map, and \eqref{eq:app-mahler} gives \(\chi_\perp=0\) exactly.}
\label{fig:app-crisis}
\end{figure}

\subsection{Regime Map and Representative Trajectories}
\label{app:regime-map}

The results above assemble into the following qualitative picture. For \(a<1\), the valley has a stable arc \(|\Delta|<\Delta^*(a)\). Initial points beyond the arc undergo a residual burst that reduces \(|\Delta|\), with the selected endpoint predicted by \eqref{eq:app-selection} near the edge. At larger initial imbalance, the same burst can overshoot the balanced point and lead to divergence. For \(1<a<2\), the whole valley is locally unstable, but bounded trajectories are attracted toward the diagonal scalar attractor. At \(a=2\), the scalar attractor itself is destroyed in the boundary crisis.

\begin{figure}[t]
\centering
\includegraphics[width=0.78\textwidth]{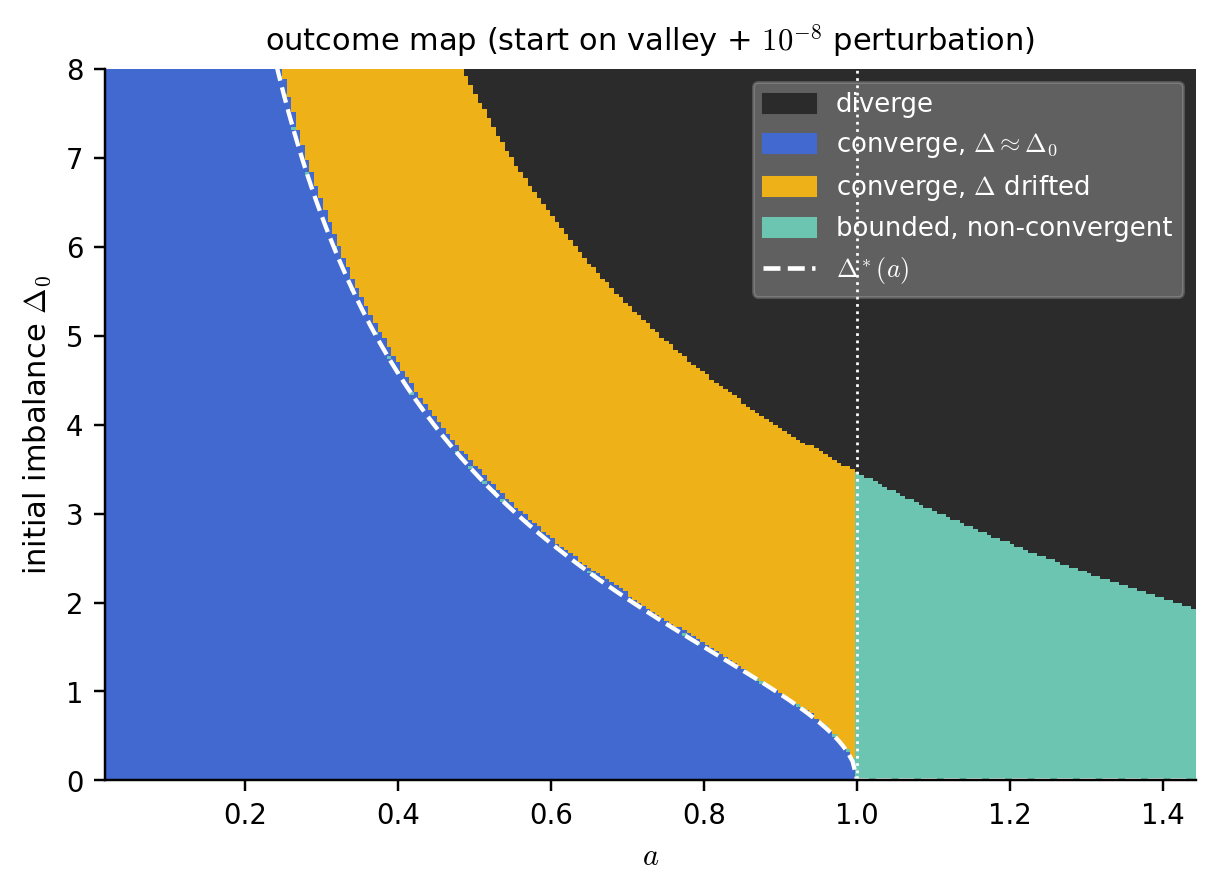}
\caption{Outcome map for starts on the valley with a small transverse seed. The stable arc, the drift region, the catapult boundary, and the post-edge diagonal attractor are the four visible regimes.}
\label{fig:app-regime-map}
\end{figure}

\begin{figure}[t]
\centering
\includegraphics[width=\textwidth]{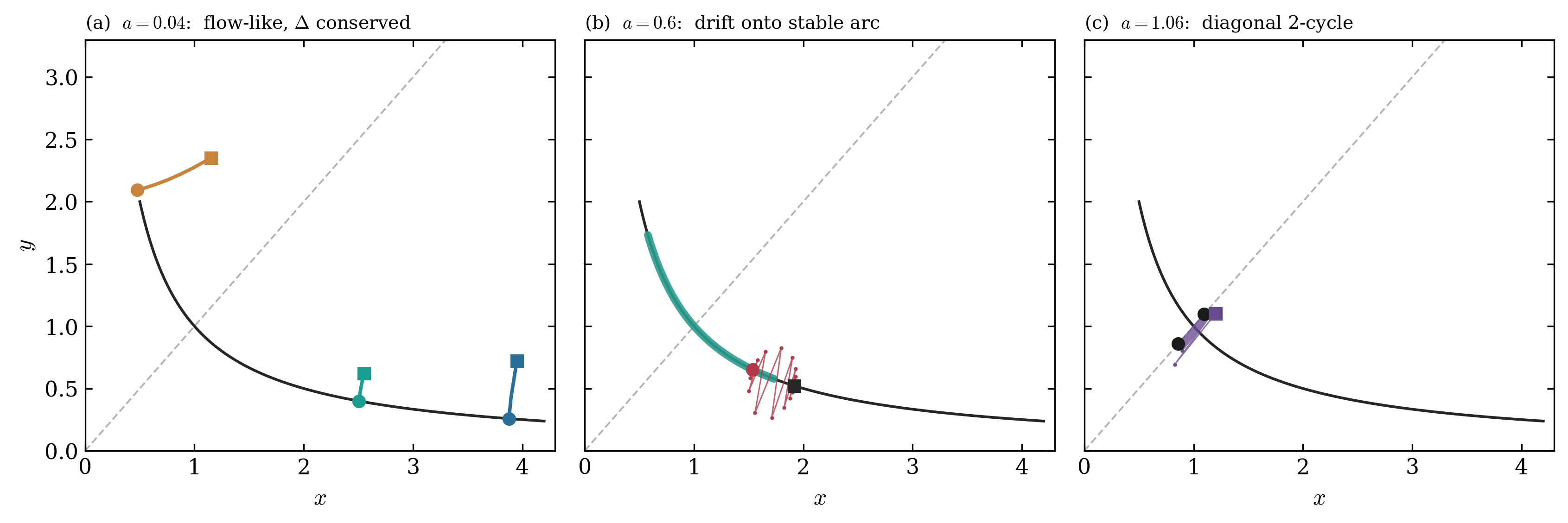}
\caption{Representative trajectories in the \((x,y)\) plane. At very small step size, imbalance is nearly conserved. Below the edge \(a=1\), an unstable valley point can drift onto the stable arc. Above the edge, bounded trajectories collapse onto the diagonal scalar attractor.}
\label{fig:app-trajectories}
\end{figure}

\section{Derivations for Section~\ref{sec:wider}: Toward Realistic Networks}
\label{sec:appendix-wider}

\subsection{Alignment and the Ladder of Edges}
\label{app:alignment-ladder}

\emph{Alignment is invariant.} With \(W_1=D_1V^\top\), \(W_l=D_l\) for \(1<l<L\), \(W_L=UD_L\), the residual is \(R=U(D_L\cdots D_1-S)V^\top\). The gradients are
\begin{align*}
    \nabla_{W_1}\ell
    =
    (W_L\cdots W_2)^\top R
    =
    \bigl[\textstyle\prod_{l\ge2}D_l\bigr](D_L\cdots D_1-S)\,V^\top,
\end{align*}
and similarly for the other layers: every gradient is of the same aligned form, with a diagonal factor. Hence gradient descent moves only the diagonals,
\begin{align*}
    (D_l)_{ii}
    \;\longmapsto\;
    (D_l)_{ii}
    -
    a\,\frac{\pi_i}{(D_l)_{ii}}\,(\pi_i-s_i),
    \qquad
    \pi_i=\prod_l (D_l)_{ii},
\end{align*}
which is the \(L\)-layer scalar chain for each mode separately. Within a mode, the balanced diagonal \((D_l)_{ii}=x_i\) is invariant, and the update reduces to
\begin{align*}
    x_i
    \;\longmapsto\;
    x_i-a\,x_i^{L-1}\bigl(x_i^{L}-s_i\bigr).
\end{align*}

\emph{Proof of Proposition~\ref{prop:spectral-edge-ladder}.} At the balanced minimum \(x_i=s_i^{1/L}\), the curvature of the mode loss \(\frac12(\pi_i-s_i)^2\) along the balanced diagonal is \(L\,s_i^{2-2/L}\). The gradient descent multiplier crosses \(-1\) when this curvature equals \(2/a\), giving the mode edge
\begin{align*}
    a_i=\frac{2}{Ls_i^{2-2/L}}.
\end{align*}
Since this is decreasing in \(s_i\), the largest singular value crosses first. \hfill\(\square\)

\subsection{Proof of Proposition~\ref{prop:rotational-edge}}
\label{app:rotational-edge-proof}

For two depth-two modes, linearize around the orbit where the first mode follows its two-cycle \(x_1\in\{x_-,x_+\}\) and the second is fixed at \(x_2=\sqrt{s_2}\). The off-diagonal directions split into two blocks
\begin{align*}
    M_\pm
    =
    I-a
    \begin{pmatrix}
        x_1^2 & x_1x_2\\
        x_1x_2 & x_2^2\pm r_1
    \end{pmatrix},
    \qquad
    r_1=x_1^2-s_1 .
\end{align*}
The \(+\) block contains the neutral internal factorization rotation: the transformation \(W_1\mapsto RW_1\), \(W_2\mapsto W_2R^\top\) commutes with gradient descent and moves only between equivalent factorizations of the same product. The physical frame misalignment is the \(-\) block. Using the two-cycle identities for the quartic map gives
\begin{align*}
    |\mu_{\mathrm{rot}}|^2
    =
    \det\bigl(M_-(x_+)M_-(x_-)\bigr)
    =
    D^2-(as_1-1)D-(as_1-1),
    \qquad
    D=1-a(s_1+s_2).
\end{align*}
This expression is less than one exactly when \(a(s_1+s_2)<2\). Requiring this for every \(a\) in the top-mode two-cycle window \(1<as_1<\sqrt5-1\) gives
\begin{align*}
    \frac{s_2}{s_1}<\frac{\sqrt5-1}{2}. \tag*{\(\square\)}
\end{align*}

\subsection{The Post-Edge Tuning Calculation}
\label{app:tuning}

At depth two, a mode with target \(s\) has effective parameter \(A=as\). Below the edge, the fixed-point multiplier is \(1-2A\). On the two-cycle, the two-step multiplier is \(9-2(1+A)^2\), so the per-step contraction is its square root in modulus. The two-mode minimax problem therefore reduces to
\begin{align*}
    \min_a \max\{\rho(as_1),\rho(as_2)\}.
\end{align*}
Solving the one-dimensional balance gives the numerical transition \(\sigma_c=0.5718\ldots\) and, for \(\sigma=1/2\), the post-edge optimum \(a^*s_1\approx1.120\).

\subsection{Data Coupling: Imbalance Law and Floquet Multipliers}
\label{app:data-coupling}

\emph{The imbalance law.} For \(\ell(x,y)=\frac12\|xy-d\|^2\), gradient descent is \(x^+=x-a\,y\,z\), \(y^+=y-a\,x\cdot z\) with \(z=xy-d\). Hence
\begin{align*}
    \|x^+\|^2=\|x\|^2-2ay\,(x\cdot z)+a^2y^2\|z\|^2,
    \qquad
    (y^+)^2=y^2-2ay\,(x\cdot z)+a^2(x\cdot z)^2 .
\end{align*}
The cross terms cancel in the difference, and writing
\(y^2\|z\|^2-(x\cdot z)^2=-\|z\|^2\Delta+\bigl(\|x\|^2\|z\|^2-(x\cdot z)^2\bigr)\)
gives \eqref{eq:vector-imbalance-update}. With per-mode scales \(\sigma_i\), the gradients carry \(z'=\sigma\odot z\) in place of \(z\), and the identical computation gives the same law in \(z'\).

\emph{Proof of Proposition~\ref{prop:data-two-cycle}.} On the aligned diagonal \(x=\xi\sqrt{\|d\|}\,\hat d\), \(y=\xi\sqrt{\|d\|}\), the update reduces to the quartic map \(\xi^+=\xi-A\xi(\xi^2-1)\) with \(A=a\|d\|\), so the longitudinal multiplier is \eqref{eq:quartic-cycle-multiplier} at parameter \(A\), namely \(\mu_{\mathrm{cyc}}=9-2(1+A)^2\).

Perturbing the alignment, \(x=\xi\sqrt{\|d\|}\,\hat d+\varepsilon e_\perp\) with \(e_\perp\perp\hat d\), the residual acquires the transverse component \(\varepsilon y\,e_\perp\), and
\begin{align*}
    \varepsilon^+
    =
    \varepsilon\,(1-a y^2)
    =
    \varepsilon\,(1-A\xi^2),
\end{align*}
while the feedback on \((\xi,y)\) is \(O(\varepsilon^2)\); each of the \(k-1\) transverse directions carries this same factor. Over the two-cycle, the identities \eqref{eq:app-two-cycle-identities} (with \(s_\pm=\xi_\pm\), at parameter \(A\)) give
\begin{align*}
    \mu_{\mathrm{align}}
    =
    (1-A\xi_-^2)(1-A\xi_+^2)
    =
    1-A\bigl(\xi_-^2+\xi_+^2\bigr)+A^2\xi_-^2\xi_+^2
    =
    1-A\Bigl(1+\frac1A\Bigr)+1
    =
    1-A .
\end{align*}
The imbalance direction is the two-factor transverse direction along \(\hat d\), with per-step factor \(1+Ar_\pm\) and two-step multiplier \(\mu_{\mathrm{bal}}=3-2A\) by \eqref{eq:app-period-two-transverse}. For \(1<A<\sqrt5-1\), all three multipliers lie in \((-1,1)\), so the orbit is linearly stable. \hfill\(\square\)

\subsection{The Neuron Valley and ReLU}
\label{app:neuron-relu}

\emph{The neuron valley.} For \(\ell(w,v)=\frac12(v\sigma(w)-1)^2\), write \(r=v\sigma(w)-1\). On the valley \(r=0\), the Hessian is the rank-one matrix \(\nabla r\,\nabla r^\top\) with \(\nabla r=(v\sigma'(w),\,\sigma(w))\), so the nonzero eigenvalue is
\begin{align*}
    \lambda(w)
    =
    v^2\sigma'(w)^2+\sigma(w)^2
    =
    \frac{\sigma'(w)^2}{\sigma(w)^2}+\sigma(w)^2,
    \qquad
    v=\frac1{\sigma(w)} .
\end{align*}
For \(\sigma=\tanh\), with \(t=\tanh^2 w\), \(\lambda=\frac1t-2+2t\); differentiating gives the unique minimum at \(t=1/\sqrt2\) with
\(\lambda_\ast=2(\sqrt2-1)\) and \(2/\lambda_\ast=1+\sqrt2\); as \(w\to\infty\), \(\lambda\to1\), so the saturated plateau is stable precisely for \(a<2\).

\emph{ReLU.} For \(\sigma=\mathrm{ReLU}\), the active region is exactly the linear two-factor model. The threshold \(a_*=(\sqrt{27}/2)-1\) is the envelope value of Section~\ref{sec:quartic-envelope} at which the positive invariant well first reaches the origin. Below this value the active well is forward invariant; above it, the image of the positive well extends into the absorbing half-line, which proves Proposition~\ref{prop:relu-death}.

\section{Derivations for Section~\ref{sec:stochastic}: Stochastic Targets}
\label{app:stochastic}

Throughout, \(r=xy-1\), \(\rho_t=r_t-\xi_t\) with \(\xi_t\sim\mathcal N(0,\sigma^2)\) i.i.d.,
and \(\xi_t\) is independent of the state \((x_t,y_t)\). (The noise \(\xi_t\) is unrelated to
the two-cycle coordinate \(\xi_\pm\) of Appendix~\ref{app:data-coupling}.)

\subsection{The Sub-Edge Balancing Rate}
\label{app:stochastic-rate}

We prove Proposition~\ref{prop:stochastic-balancing}. On the diagonal \(x=y=s\) the noisy update
is \(s_{t+1}=s_t(1-a\rho_t)\), so the residual satisfies the exact recursion
\begin{align*}
    r_{t+1}=(1+r_t)(1-a\rho_t)^2-1 .
\end{align*}
Expanding at the balanced point \(r=0\) and keeping leading orders,
\begin{align}
    r_{t+1}=(1-2a)\,r_t+2a\,\xi_t+O(r_t^2,r_t\xi_t,\xi_t^2).
    \label{eq:app-ar1}
\end{align}
For \(0<a<1\) the coefficient satisfies \(|1-2a|<1\), so \eqref{eq:app-ar1} is a stable AR(1)
recursion driven by the fresh noise \(2a\xi_t\). Its stationary variance is
\begin{align*}
    \operatorname{Var}(r)
    =
    \frac{(2a)^2\sigma^2}{1-(1-2a)^2}
    =
    \frac{a}{1-a}\,\sigma^2+O(\sigma^4).
\end{align*}
Since \(\xi_t\) is independent of \(r_t\) (which depends only on \(\xi_{t-1},\xi_{t-2},\dots\)),
\begin{align*}
    \operatorname{Var}(\rho)
    =
    \operatorname{Var}(r)+\sigma^2
    =
    \frac{\sigma^2}{1-a}+O(\sigma^4),
    \qquad
    \mathbb E\rho=O(\sigma^2).
\end{align*}
In the stationary regime \(\rho\) is Gaussian to leading order, so \(\mathbb
E\rho^4=O(\sigma^4)\), and expanding the logarithm in \eqref{eq:chi-def},
\begin{align*}
    \chi(a,\sigma)
    =
    \mathbb E\log|1-a^2\rho^2|
    =
    -a^2\,\mathbb E\rho^2+O(\mathbb E\rho^4)
    =
    -\frac{a^2\sigma^2}{1-a}+O(\sigma^4).
\end{align*}
The expectation is finite despite the logarithmic singularity at \(|a\rho|=1\): the singularity
is integrable against the Gaussian density, and since \(\operatorname{sd}(\rho)=O(\sigma)\), the
region \(|a\rho|\approx1\) carries probability \(O(e^{-c/\sigma^2})\), which contributes beyond
all orders in \(\sigma\). Off the diagonal, a small imbalance perturbs the recursion
\eqref{eq:app-ar1} only at \(O(\Delta^2)\), and \(\chi<0\) contracts \(\Delta\), so the balanced
stationary regime is self-consistent. \hfill\(\square\)

\subsection{The Noisy Edge: Normal Form, Stationary Density, and Crossover}
\label{app:stochastic-crossover}

\emph{The noisy flip map.}
On the diagonal, the exact residual recursion is
\begin{align*}
    r_{t+1}
    =
    (1+r_t)(1-a\rho_t)^2-1,
    \qquad
    \rho_t=r_t-\xi_t .
\end{align*}
Near \(a=1\), with \(\delta=a-1\) and \(r\) small, this expands as
\begin{align*}
    r_{t+1}
    =
    -(1+2\delta)\,r_t
    -
    r_t^2
    +
    2\xi_t
    +
    O(r_t^3,\delta r_t^2,r_t\xi_t,\xi_t^2).
\end{align*}
This is the noisy flip normal form \eqref{eq:noisy-flip-map}: the deterministic residual alternates sign, while the fresh target noise enters additively to leading order.

\emph{The slow envelope.}
The leading multiplier \(-(1+2\delta)\) changes the sign of \(r_t\) at every step, while the magnitude changes slowly near the edge. The slow variable is therefore
\begin{align*}
    v_t=r_t^2 .
\end{align*}
Squaring the noisy flip map gives an increment of the form
\begin{align*}
    v_{t+1}-v_t
    =
    4\delta v_t
    +
    2\operatorname{sign}(r_t)v_t^{3/2}
    -
    4r_t\xi_t
    +
    4\xi_t^2
    +
    \cdots .
\end{align*}
The odd term alternates with the sign of \(r_t\) and cancels to leading order over consecutive steps. Its net effect is the deterministic nonlinear saturation already fixed by the two-cycle normal form. Without noise, the squared residual relaxes to the deterministic two-cycle amplitude
\begin{align*}
    v^*=a-1+O((a-1)^2).
\end{align*}
The quadratic drift
\begin{align*}
    4\delta v-4v^2
\end{align*}
has exactly this positive equilibrium \(v^*=\delta\) to leading order, and its corresponding transverse rate \(-v^*\) agrees with
\begin{align*}
    \frac12\log(3-2a)=-(a-1)+O((a-1)^2).
\end{align*}

\emph{Noise terms.}
The square of the fresh noise contributes the positive mean drift
\begin{align*}
    \mathbb E[4\xi_t^2]=4\sigma^2 .
\end{align*}
The cross term \(-4r_t\xi_t\) has conditional mean zero and conditional variance
\begin{align*}
    16\sigma^2 v_t .
\end{align*}
Higher noise terms contribute only at subleading order in the edge scaling below.

\emph{Diffusion approximation.}
Balancing the drift terms gives
\begin{align*}
    \delta v\sim v^2\sim\sigma^2,
    \qquad
    \text{hence}
    \qquad
    v=O(\sigma),
    \quad
    \delta=O(\sigma).
\end{align*}
In this regime, the one-step increment of \(v\) is small compared with \(v\) itself. Matching the conditional mean and variance of the slow increment gives the diffusion approximation
\begin{align*}
    \mathrm d v
    =
    \bigl(4\delta v-4v^2+4\sigma^2\bigr)\,\mathrm dt
    +
    4\sigma\sqrt v\,\mathrm dB_t ,
\end{align*}
which is \eqref{eq:noisy-edge-v-diffusion}. This is a normal-form approximation for the slow envelope of the original map; we do not prove convergence of the discrete process to this diffusion.

\emph{Stationary density.}
For a diffusion
\begin{align*}
    \mathrm dv=\mu(v)\,\mathrm dt+g(v)\,\mathrm dB_t
\end{align*}
with
\begin{align*}
    \mu(v)=4\delta v-4v^2+4\sigma^2,
    \qquad
    g^2(v)=16\sigma^2v,
\end{align*}
the zero-flux stationary density is
\begin{align*}
    p(v)
    \propto
    \frac1{g^2(v)}
    \exp\left\{
        \int^v \frac{2\mu(s)}{g^2(s)}\,ds
    \right\}.
\end{align*}
Since
\begin{align*}
    \frac{2\mu(v)}{g^2(v)}
    =
    \frac{\delta-v}{2\sigma^2}
    +
    \frac1{2v},
\end{align*}
we obtain
\begin{align*}
    p(v)
    \propto
    v^{-1/2}
    \exp\!\left\{
        \frac{\delta v}{2\sigma^2}
        -
        \frac{v^2}{4\sigma^2}
    \right\},
    \qquad v>0,
\end{align*}
which is \eqref{eq:noisy-edge-stationary-density}. The singular factor \(v^{-1/2}\) is integrable at the origin, and the positive drift \(4\sigma^2\) pushes the process into the interior.

\emph{The exponent reduces to the mean.}
Near the edge, \(a^2=1+O(\sigma)\) and
\begin{align*}
    \rho^2=(r-\xi)^2=v-2r\xi+\xi^2.
\end{align*}
Therefore
\begin{align*}
    \chi
    =
    \mathbb E\log|1-a^2\rho^2|
    =
    -\mathbb E\rho^2+O(\mathbb E\rho^4)
    =
    -\langle v\rangle-\sigma^2+O(\sigma^2).
\end{align*}
Since \(\langle v\rangle=O(\sigma)\), the \(\sigma^2\) term is lower order, and
\begin{align*}
    \chi=-\langle v\rangle+o(\sigma).
\end{align*}

\emph{The crossover function.}
Set
\begin{align*}
    w=\frac{\delta}{\sqrt2\,\sigma},
    \qquad
    v=\sqrt2\,\sigma z .
\end{align*}
Then the stationary density becomes, up to normalization,
\begin{align*}
    z^{-1/2}e^{wz-z^2/2}\,dz .
\end{align*}
Hence
\begin{align*}
    \langle v\rangle
    =
    \sqrt2\,\sigma\,F(w),
\end{align*}
where
\begin{align*}
    F(w)
    =
    \frac{\int_0^\infty z^{1/2}e^{wz-z^2/2}\,\mathrm dz}
         {\int_0^\infty z^{-1/2}e^{wz-z^2/2}\,\mathrm dz}.
\end{align*}
This gives
\begin{align*}
    \chi(a,\sigma)
    =
    -\sqrt2\,\sigma F(w)+o(\sigma),
    \qquad
    w=\frac{a-1}{\sqrt2\,\sigma}.
\end{align*}

\emph{Asymptotes and the edge value.}
For \(w\to+\infty\), the weight \(e^{wz-z^2/2}\) concentrates near \(z=w\), so
\begin{align*}
    F(w)=w(1+o(1)),
\end{align*}
and
\begin{align*}
    \chi=-\sqrt2\,\sigma w=-(a-1),
\end{align*}
recovering the deterministic two-cycle rate. For \(w\to-\infty\), the integrals localize near \(z=0\), and
\begin{align*}
    F(w)=\frac1{2|w|}(1+o(1)).
\end{align*}
Thus
\begin{align*}
    \chi=-\frac{\sigma^2}{1-a},
\end{align*}
recovering the near-edge limit of \eqref{eq:stochastic-balancing-rate}.

At \(w=0\),
\begin{align*}
    F(0)
    =
    \sqrt2\,\frac{\Gamma(3/4)}{\Gamma(1/4)},
\end{align*}
and hence
\begin{align*}
    \chi(1,\sigma)
    =
    -\frac{2\Gamma(3/4)}{\Gamma(1/4)}\,\sigma
    +
    o(\sigma)
    \approx
    -0.6760\,\sigma .
\end{align*}

\emph{Numerical check.}
We measure \(\chi\) as the time average \eqref{eq:chi-def} along long trajectories of the exact noisy map. Plotted as \(\chi/\sigma\) against
\begin{align*}
    w=\frac{a-1}{\sqrt2\sigma},
\end{align*}
the data collapse onto \(-\sqrt2 F(w)\) with no fitted parameter, as shown in Figure~\ref{fig:noise-crossover}. The measured-to-predicted ratio approaches one as \(\sigma\) decreases, consistent with the \(o(\sigma)\) error in \eqref{eq:noisy-edge-crossover-law}.

\section{A Reader's Toolbox}
\label{app:toolbox}

This appendix explains, informally and with a minimum of notation, the concepts from one-dimensional and low-dimensional dynamics used in the paper. It can be read independently of everything else.

\paragraph{Maps and orbits.}
A \emph{map} is a rule \(f\) that turns a number \(x\) into a new number \(f(x)\). Iterating the rule --- \(x_0,\ x_1=f(x_0),\ x_2=f(x_1),\ldots\) --- produces an \emph{orbit}. Gradient descent with a fixed step size \(a\) on a loss \(\ell\) is exactly such a rule, \(f(x)=x-a\ell'(x)\), and training is the orbit. Everything in this paper is a statement about orbits of explicit maps.

\paragraph{Fixed points and the multiplier.}
A \emph{fixed point} is a value with \(f(x^*)=x^*\); for gradient descent these are the critical points of the loss. Whether nearby orbits approach or flee is decided by the \emph{multiplier} \(\mu=f'(x^*)\): a small deviation is multiplied by \(\mu\) at every step, so the fixed point attracts when \(|\mu|<1\) and repels when \(|\mu|>1\). For gradient descent, \(f'(x)=1-a\ell''(x)\), so a minimum with curvature (``sharpness'') \(\lambda\) has multiplier \(1-a\lambda\): stable exactly when \(a\lambda<2\). This is the origin of the ubiquitous threshold \(2/a\). If \(0<\mu<1\), orbits approach from one side (monotone convergence); if \(-1<\mu<0\), they approach while alternating sides. Losing stability through \(\mu=-1\) --- the case everywhere in this paper --- means the iterates begin to overshoot alternately.

\paragraph{Cycles and their multipliers.}
A \emph{period-\(m\) cycle} is a set of \(m\) points that the map visits in turn before returning. Its stability is governed by the product of the derivatives along the cycle, \(\mu=f'(x_1)\cdots f'(x_m)\) --- the natural generalization of the fixed-point multiplier, in higher dimensions called a Floquet multiplier. Again \(|\mu|<1\) means attraction. Stability is a property of the \emph{product}: individual factors may well exceed one in absolute value, which is precisely what happens in ``sharpness hovering,'' where the cycle visits one point that is too sharp and one that is flat enough, and the pair is jointly stable. A cycle with \(\mu=0\) is called \emph{superstable}; convergence to it is exceptionally fast.

\paragraph{Bifurcations.}
A \emph{bifurcation} is a qualitative change of the orbit structure as a parameter (here: the step size) is varied. The one that dominates this paper is the \emph{flip} (period-doubling) bifurcation: as the multiplier of a fixed point decreases through \(-1\), the fixed point hands its stability to a newborn cycle of twice the period. In a \emph{supercritical} flip the new cycle is stable and starts with zero amplitude, growing like the square root of the parameter excess --- a soft transition, not an explosive one. Two other types appear briefly: the \emph{saddle-node} bifurcation, where a pair of cycles (one stable, one unstable) appears out of nothing --- this creates the periodic ``windows'' inside chaos --- and the \emph{Neimark--Sacker} bifurcation, where a pair of complex multipliers crosses the unit circle and the instability grows as a rotation (``precession'') rather than along a fixed direction.

\paragraph{The period-doubling cascade and universality.}
After the first flip, the new two-cycle can itself flip, producing a four-cycle, then an eight-cycle, and so on. The parameter values of these successive flips pile up geometrically fast at a finite \emph{accumulation point}, beyond which the dynamics is chaotic, interrupted by windows. The geometric rate is the same number, \(4.669\ldots\), for essentially every smooth one-hump map --- Feigenbaum universality. This is why the cascade of a gradient map looks exactly like the cascade of a population model: it is the same universal object.

\paragraph{One-hump maps and the critical point.}
Many maps in this paper, after a change of coordinates, rise to a single maximum and then fall --- a \emph{one-hump} (unimodal) map. The point where the map turns around, i.e.\ where \(f'=0\), is called the \emph{critical point} of the map (not of the loss). Its orbit is the single most informative trajectory of the whole system: plotting it against the parameter produces the bifurcation diagrams shown in this paper.

\paragraph{The Schwarzian derivative and Singer's theorem.}
The \emph{Schwarzian derivative} \(Sf=f'''/f'-\tfrac32(f''/f')^2\) is a technical quantity with one powerful consequence. Singer's theorem: if \(Sf<0\) everywhere (away from critical points), then every attracting cycle must attract the orbit of a critical point or of a boundary point. In plain terms, a negative-Schwarzian map can hide nothing --- following the critical orbit finds every attractor there is, and the number of coexisting attractors is bounded by the number of critical points. This is what elevates a numerically drawn bifurcation diagram to a faithful picture: proving \(Sf<0\), as we do at every depth, certifies that the diagram is complete.

\paragraph{Chaos, Lyapunov exponents, and invariant densities.}
An orbit is \emph{chaotic} when nearby orbits separate exponentially fast; the average logarithmic separation rate per step is the \emph{Lyapunov exponent} --- negative for orbits attracted to cycles, positive for chaos. A chaotic orbit typically fills an interval with a well-defined visiting frequency, its \emph{invariant density}. In rare cases this density is known in closed form; the endpoint \(a=2\) of the cubic map, where the dynamics is equivalent to tripling an angle, is such a case.

\paragraph{Crises.}
A \emph{crisis} is a sudden change of a chaotic attractor when it collides with another invariant object as the parameter varies. Two kinds occur here: an \emph{interior} crisis, where two separate chaotic bands merge into one (the well-merging threshold \(a_*\)), and a \emph{boundary} crisis, where the attractor touches the boundary of its basin and is destroyed --- beyond it, orbits escape (the endpoint \(a=2\)). Near a crisis, orbits linger for long stretches in the old attractor before finding the exit, and the closer to the threshold, the longer the lingering (\emph{intermittency}).

\paragraph{Invariant manifolds and transverse stability.}
In systems with more than one variable, a subset that the dynamics never leaves --- like the balanced diagonal \(x=y\) of the two-factor model --- is an \emph{invariant manifold}. Whether nearby orbits converge to it is measured by the \emph{transverse} multiplier or exponent: the average factor by which the distance to the manifold shrinks per step. A negative transverse exponent means that the reduced, lower-dimensional description is self-justifying: the full system falls onto it.

\paragraph{Scaling limits and normal forms.}
Two complementary simplification devices recur. A \emph{scaling limit} magnifies the neighborhood of a point while a parameter grows (here: depth), so that a whole family of maps converges to a single limiting map --- the source of the universal depth-limit dynamics. A \emph{normal form} is the polynomial truncation of a map near a bifurcation that keeps exactly the terms governing the local behavior; it is how one decides, for instance, whether a flip is supercritical, and it is the starting point of the noisy analysis in Section~\ref{sec:stochastic}.

\paragraph{Random maps.}
When noise is added to a map, single orbits are replaced by probability distributions. A stable cycle becomes a narrow stationary distribution concentrated on the cycle; a bifurcation is smeared over a parameter window of the order of the noise strength; and quantities like the transverse exponent become averages over the stationary distribution --- Lyapunov exponents of products of random factors. The key structural fact used in Section~\ref{sec:stochastic} is that some identities hold \emph{pathwise}, i.e.\ for every realization of the noise separately, so they survive averaging untouched.

\bibliographystyle{acm}
\bibliography{bib}
\end{document}